\DeclareMathOperator*{\argmax}{arg\,max}
\DeclareMathOperator*{\argmin}{arg\,min}
\newcommand{\cmark}{\ding{51}}
\newcommand{\xmark}{\ding{55}}
\newcommand{\mbx}{\mathbf{x}}
\newcommand{\mbz}{\mathbf{z}}
\newcommand{\ent}{\alpha\texttt{-Entmax}}
\newcommand{\mbp}{\mathbf{p}}
\newcommand{\mbxi}{\mathbf{\xi}}
\newcommand{\mbXi}{\mathbf{\Xi}}
\newcommand{\mt}{\mathcal{T}}
\newcommand{\mbc}{\mathbf{c}}
\newcommand{\me}{\boldsymbol{\epsilon}}
\newtheorem{thm}{Theorem}
\newtheorem{definition}{Definition}
\newtheorem{corollary}{Corollary}[thm]
\newtheorem{lem}{Lemma}
\newcommand{\frow}{\textcolor{orange}\faFrown}
\newcommand{\smil}{\textcolor{cyan}\faGrin}
\newcommand{\Att}{\texttt{At}}
\newcommand{\pb}{\mathbf{p}}
\newcommand{\xb}{\mathbf{x}}
\newcommand{\zb}{\mathbf{z}}
\newcommand{\Kb}{\mathbf{K}}
\newcommand{\Qb}{\mathbf{Q}}
\newcommand{\Vb}{\mathbf{V}}
\newcommand{\Wb}{\mathbf{W}}
\newcommand{\Xb}{\mathbf{X}}
\newcommand{\Yb}{\mathbf{Y}}
\newcommand{\Rb}{\mathbf{R}}
\newcommand{\Tc}{\mathcal{T}}
\newcommand{\Tca}{\mathcal{T}_\alpha}
\newcommand{\Tcal}{\mathcal{T}_\alpha^\lambda}
\newcommand{\xib}{\boldsymbol{\xi}}
\newcommand{\Xib}{\boldsymbol{\Xi}}
\newcommand{\Rd}{\mathbb{R}}
\newcommand{\norm}[1]{\left\lVert#1\right\rVert}
\definecolor{iccvblue}{rgb}{0.21,0.49,0.74}
\title{PLADIS: Pushing the Limits of Attention in Diffusion Models \\ at Inference Time by Leveraging Sparsity}
\author{Kwanyoung Kim$^{1}$, Byeongsu Sim$^{1}$\\
Samsung Research $^{1}$\\
{\tt\small \{k_0.kim, bs.sim\}@samsung.com}
}
\author{Kwanyoung Kim$^\dagger$, \; Byeongsu Sim  \\
Samsung Research \\ 
{\tt\small \{k$\_$0.kim, bs.sim\}@samsung.com }
}
\begin{document}

\twocolumn[{%
	\renewcommand\twocolumn[1][]{#1}%
	\maketitle
	\begin{center}
		\centering
		\captionsetup{type=figure}
		\includegraphics[width=0.85\linewidth]{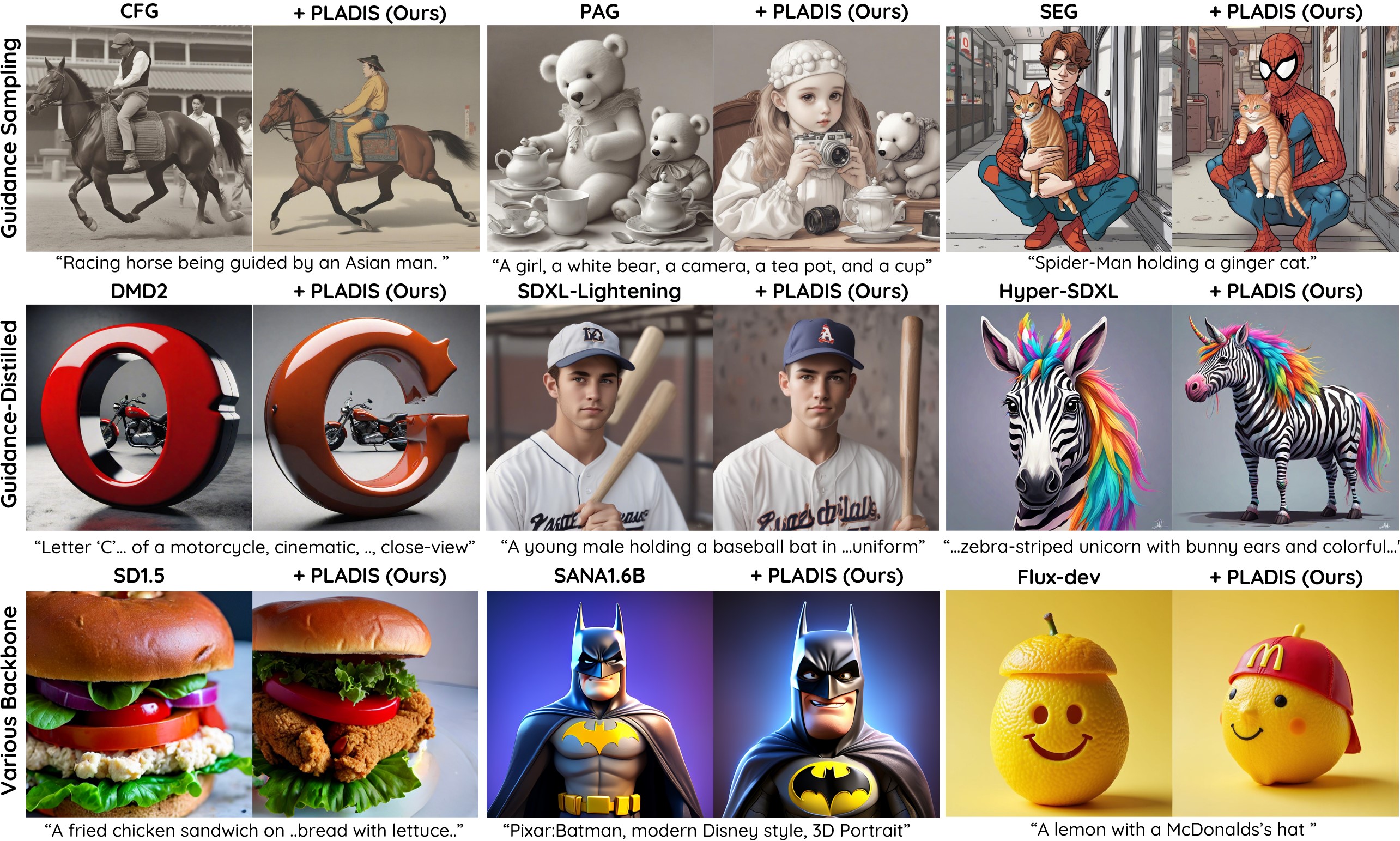} 
            \vspace{-0.5em}
		\captionof{figure}{Qualitative comparison (Top): guidance sampling methods (CFG\cite{CFG}, PAG\cite{PAG}, SEG\cite{SEG}) 
   (Mid): guidance-distilled models (DMD2\cite{dmd2}, SDXL-Lightning~\cite{light}, Hyper-SDXL\cite{hyper}) (Bottom): Other backbone such as Stable Diffusion 1.5~\cite{stablediffusion}, SANA~\cite{sana}, Flux~\cite{flux2024} with our method, PLADIS(Ours). PLADIS is compatible with all guidance techniques and also supports guidance-distilled models including various backbone. It provides the generation of plausible and improved text alignment without any training or extra inference.}\label{fig:main}
	\end{center}}
]
\def\thefootnote{$\dagger$}\footnotetext{First and corresponding author}

\begin{abstract}
Diffusion models have shown impressive results in generating high-quality conditional samples using guidance techniques such as Classifier-Free Guidance (CFG). However, existing methods often require additional training or neural function evaluations (NFEs), making them incompatible with guidance-distilled models. Also, they rely on heuristic approaches that need identifying target layers. In this work, we propose a novel and efficient method, termed PLADIS, which boosts pre-trained models (U-Net/Transformer) by leveraging sparse attention. Specifically, we extrapolate query-key correlations using softmax and its sparse counterpart in the cross-attention layer during inference, without requiring extra training or NFEs. By leveraging the noise robustness of sparse attention, our PLADIS unleashes the latent potential of text-to-image diffusion models, enabling them to excel in areas where they once struggled with newfound effectiveness. It integrates seamlessly with guidance techniques, including guidance-distilled models. Extensive experiments show notable improvements in text alignment and human preference, offering a highly efficient and universally applicable solution. See our project page: \url{https://github.com/cubeyoung/PLADIS}
\end{abstract}

\section{Introduction}
Diffusion models have demonstrated remarkable advancements in generating high-quality images and videos~\cite{stablediffusion,stablediffusion3,dreambooth,pixart2,sana,stablevideo,videocrafter2}. However, when using naïve sampling methods, the quality of the generated samples can be suboptimal. Classifier-Free Guidance (CFG)~\cite{CFG} is a prominent technique that increases the likelihood of a sample belonging to a specific class by calculating the difference between the score functions of conditional and unconditional models, and applying a weighted adjustment. While CFG is effective, it needs additional training and inference, and can degrade sample quality when the guidance scale is too high.

Inspired by CFG, various guidance sampling methods have been explored~\cite{SAG,AutoGuide,PAG,SEG,CFG++,TSG,SelfG}. Recent research has focused on creating "weak models" by intentionally weakening a model to guide the stronger, original model. Although these methods generally improve performance, they also come with clear limitations. For example, AutoGuidance (AG)~\cite{AutoGuide} relies on a poorly trained version of the unconditional model, which can be challenging and unstable to train. Alternative attention-based guided sampling methods, independent of the training process, have also been explored. For instance, Perturbed Attention Guidance (PAG)~\cite{PAG} disrupts self-attention maps by converting them into identity matrices, while Smooth Energy Guidance (SEG)~\cite{SEG} introduces blurring into attention weights. These methods are heuristic, as they are applied to specific layers, introducing additional hyperparameters that need to be determined through grid search. 

Furthermore, all existing guidance sampling methods require additional neural function evaluations (NFE{s}) and are not applicable to guidance-distilled models~\cite{lcm,tcd,light,turbo,hyper,dmd2,pcm} due to the need to calculate the difference between conditional and unconditional models or weak models. These limitations present a challenging and interesting problem: \textbf{\textit{Can we develop a universal boosting method that does not require additional training or NFE, can be combined with other guidance sampling methods, and can be applied to guidance-distilled models?}}

\begin{table}[t!]
\caption{Comparison of PLADIS with other sampling methods reveals key advantages of ours, with \smil ~  and \frow ~ denoting positive and  negative connotations for each category. }
\vspace{-0.5em}
\label{tab_sys}
\centering
\resizebox{1\linewidth}{!}{
\begin{small}
\begin{tabular}{lcccc}
\toprule
\multirow{2}{*}{\shortstack[c]{Method}} & \multirow{2}{*}{\shortstack[c]{Need extra \\  Training}} & \multirow{2}{*}{\shortstack[c]{Need heuristic \\ Search}} & \multirow{2}{*}{\shortstack[c]{Need extra  \\Inference}} & \multirow{2}{*}{\shortstack[c]{Supports guidance-\\ Distilled Model}} \\
\\
\cmidrule(lr){1-1} \cmidrule(lr){2-5}
CFG~\cite{CFG} & \frow & \smil & \frow & \frow \\
SAG~\cite{SAG} & \smil & \frow & \frow & \frow\\
AG~\cite{AutoGuide} & \smil & \frow & \frow & \frow \\
PAG~\cite{PAG} & \smil & \frow & \frow & \frow \\
SEG~\cite{SEG} & \smil & \frow & \frow & \frow  \\
\midrule
\textbf{PLADIS (Ours)} & \smil & \smil & \smil & \smil \\
\bottomrule
\end{tabular}   
\end{small}
}
\vspace{-1.5em}
\end{table}



In this work, we aim to tackle this challenging problem by adopting attention-based methods in a completely different
route.
One of the most important
contributions of this paper is the discovery of the importance of classical result from sparse attention via $\ent$~\cite{entmaxx} which includes softmax and sparsemax~\cite{sparsemax} as particular cases, and is sparse
for any $\alpha > 1$ and produce sparse alignment to assign nonzero probability.  Although widely investigated in natural language processing (NLP)~\cite{sparsemax,entmaxx,adapentmax,speeding}, sparse attention has not yet been extensively utilized within the realm of computer vision, particularly in diffusion models.
Specifically, our findings demonstrate that substituting cross-attentions with sparse counterparts during inference significantly improves overall generation performance.  Rather than weakening models via self-attention, which requires additional inference time, modifying the cross-attention mechanism circumvents the need for extra inference. This ensures compatibility with other guidance sampling methods and guidance-distilled models.

Interestingly, this result can be interpreted through the lens of modern Hopfield Networks~\cite{hopfield} and sparse Hopfield Networks (SHN)~\cite{sparsehop,stanhop}. In these works, the attention layer mirrors the update rule of Hopfield network to retrieve stored patterns. Moreover, there is a noise robustness advantage when we use sparse counterparts, which supports the rationale behind our approach in diffusion models.

Building on these findings and insights, we propose a novel and straightforward method, referred to as PLADIS, which assigns weights to the differences between sparse and dense attention to emphasize sparsity. As highlighted in Tab.~\ref{tab_sys}, our approach effectively addresses the aforementioned challenges, leading to improved performance and enhanced text-image alignment, as demonstrated by extensive experiments. Our key contributions are as follows:
\begin{itemize}
\item We propose a simple but effective method, named PLADIS, which substitutes cross-attention in diffusion models with adjusted attention mechanisms that extrapolating between sparse and dense cross-attentions.

\item We provide a thorough theoretical analysis based on our understanding of SHN, and propose the error bound and noise robustness of sparse attention for intermediate sparsity case. To the best of our knowledge, this is the first paper to apply and improve diffusion models from the perspective of SHN.

\item Our method can be combined with other guidance methods and even guidance-distilled models, does not require extra training or NFEs. We have demonstrated these advantages on various benchmark datasets, showing significant improvements in sample image quality, text-image alignment, and human preference evaluation.
\end{itemize}

\section{Preliminary}
\subsection{Diffusion Models}
Diffusion models (DM)~\cite{ho2020denoising, song2021scorebased} are a class of generative models designed to learn the reverse of a forward noise process by leveraging the score function of the data distribution. 
Specifically, given a data distribution \( \mbx_0 \sim q(\mbx_0) := q_{\text{data}}(\mathbf{x}) \), the forward process iteratively adds noise to the data according to a  Markov chain \( q(\mbx_{t}|\mbx_{t-1}) \sim \mathcal{N}(\sqrt{1-\beta_t}\mbx_{t-1}, \beta_t \mathbf{I}) \) for \(t = 1,\dots, T\) with pre-defined schedule $\{\beta_t\}_{t=1,\dots,T}$.
Consequently, the distribution of a latent variable is  $q(\mbx_t)=\mathcal{N}(\sqrt{\bar{\alpha}_t} \mbx_0, (1-\bar{\alpha}_t) \mathbf{I})$ and the distribution of last one approximates to an isotropic Gaussian distribution
\( q(\mbx_T) \approx \mathcal{N}(0, \mathbf{I})\), where \( \alpha_t = 1 - \beta_t \), $\bar{\alpha}_t$ = $\prod^{t}_{i} \alpha_i$.
The reverse process is modeled as \( p_\theta (\mbx_{t-1} | \mbx_t) = \mathcal{N}( \mu_\theta (\mbx_t, t), \Sigma_\theta (\mbx_t,t)) \). This model can be trained with variational bound on log likelihood~\cite{ho2020denoising} or trained with a score function in continuous time formulation~\cite{song2021scorebased}.
Both training objectives are reformulated with 
denoising score matching (DSM)~\cite{vincent2011connection}:
\begin{align}
\underset{\theta}{\min} \; \mathbb{E}_{\mbx_t = \sqrt{\bar{\alpha}_t}\mbx_0 + \sqrt{1-\bar{\alpha}_t} \me,\me \sim \mathcal{N}(0,I)} \; [\|\me_{\theta}(\mbx_t,t) -  \me \|^2_2]. \label{eq:dsm}
\end{align}

Sampling process is conducted as the learned reverse process starting from the isotropic Gaussian distribution.
For instance, given $\mbx_T \sim \mathcal{N}(0,\mathbf{I})$, DDIM~\cite{song2021ddim} samples $\mbx_0$ are computed as follow:
\begin{align}
&\mbx_{t-1} = \sqrt{\bar{\alpha}_{t-1}} \hat \mbx_0(t)  + \sqrt{1- \bar{\alpha}_{t-1}}\me_{\theta}(\mbx_t,t), 
\end{align}
where \( \hat \mbx_0(t) := \mathbb{E}[\mathbf{x}_0|\mathbf{x}_t] = (\mbx_t - \sqrt{1- \bar{\alpha_t}}\me_{\theta}(\mbx_t,t)) / \sqrt{\bar{\alpha_t}} \) is the denoised estimate by Tweedie's formula~\cite{efron2011tweedie,kim2021noise2score}. This process is repeated from $T$ to 1.

\subsection{Guidance Sampling in Diffusion Models}\label{sec:pre_guidance}
In order to generate samples following condition given by users, diffusion models are extended to conditional generative models \cite{CFG, rombach2022high} with additional inputs in the models:
\begin{align*}
\underset{\theta}{\min} \; \mathbb{E}_{\mbx_t, \me, \mbc} \; [\|\me_{\theta}(\mbx_t,t, \mbc) -  \me \|^2_2], \end{align*}
where $\mbx_t, \me$ are sampled same as Eq. \ref{eq:dsm} and $\mbc$ denotes a specific condition that $\mbx$ has, in most cases the embedding of a class or text.
However, since vanilla sampling often results in suboptimal performance for conditional generation, various guidance sampling methods have been extensively explored to enhance sample quality~\cite{CG,CFG,CFG++,SAG,AutoGuide,PAG,SEG,TSG}. For clarity, let us shorten the notation as $\me_{\theta}(\mbx_t,\mbc):=\me_{\theta}(\mbx_t,t,\mbc)$ and denote the unconditional model as $\me_\theta (\mbx_t, \mathbf{\varnothing} )$, where $\mathbf{\varnothing}$ represents the null condition.
Classifier-Free Guidance (CFG) adjusts the class-conditioned probability relative to the unconditional one, becoming $\hat p(\mbx_t|\mbc) = p(\mbx_t|\mbc) \left( \frac{p(\mbx_t|\mbc)}{p(\mbx_t|\mathbf{\varnothing})} \right)^w$, resulting in an adjusted sampling process:
\begin{align}
 &\mbx_{t-1} = \sqrt{\bar{\alpha}_{t-1}} \hat \mbx_0(t)  + \sqrt{1- \bar{\alpha}_{t-1}}\me'_{\theta}(\mbx_t,t),\\
 &\me'_{\theta}(\mbx_t,\mbc) = \me_{\theta}(\mbx_t,\mbc) + w(\me_{\theta}(\mbx_t,\mbc) - \me_{\theta}(\mbx_t,\mathbf{\varnothing})), \label{eq:cfg}
\end{align}
where $w$ is the guidance scale.
Recently, "weak model" guidance has been introduced, which weakens the conditional model and computes the difference with the normal conditional output as follow:
\begin{align}
 \me''_{\theta}(\mbx_t,\mbc) = \me_{\theta}(\mbx_t,\mbc) + s(\me_{\theta}(\mbx_t,\mbc)-\tilde{\me}_{\theta}(\mbx_t,\mbc)) \label{eq:weak}
\end{align}
where $s$ is the guidance weight, and $\tilde{\me}$ represents a model that is intentionally weakened or perturbed, achieved through various heuristic methods. For instance, AG~\cite{AutoGuide} uses a flawed model variant, PAG~\cite{PAG} replaces self-attention weights with an identity matrix, SEG~\cite{SEG} blurs attention weights, Time Step Gudiance (TSG)~\cite{TSG} perturbs timestep embeddings, and SelfGuidance~\cite{SelfG} alters noise levels. While effective, these approaches lack a clear theoretical foundation and have limitations: 1) they require specific layer identification, 2) increase computational cost with added NFEs, and 3) are incompatible with step-distilled models. Our method overcomes all of these limitations.

\begin{figure*}[ht!]
\centering
\includegraphics[width=0.75\linewidth]{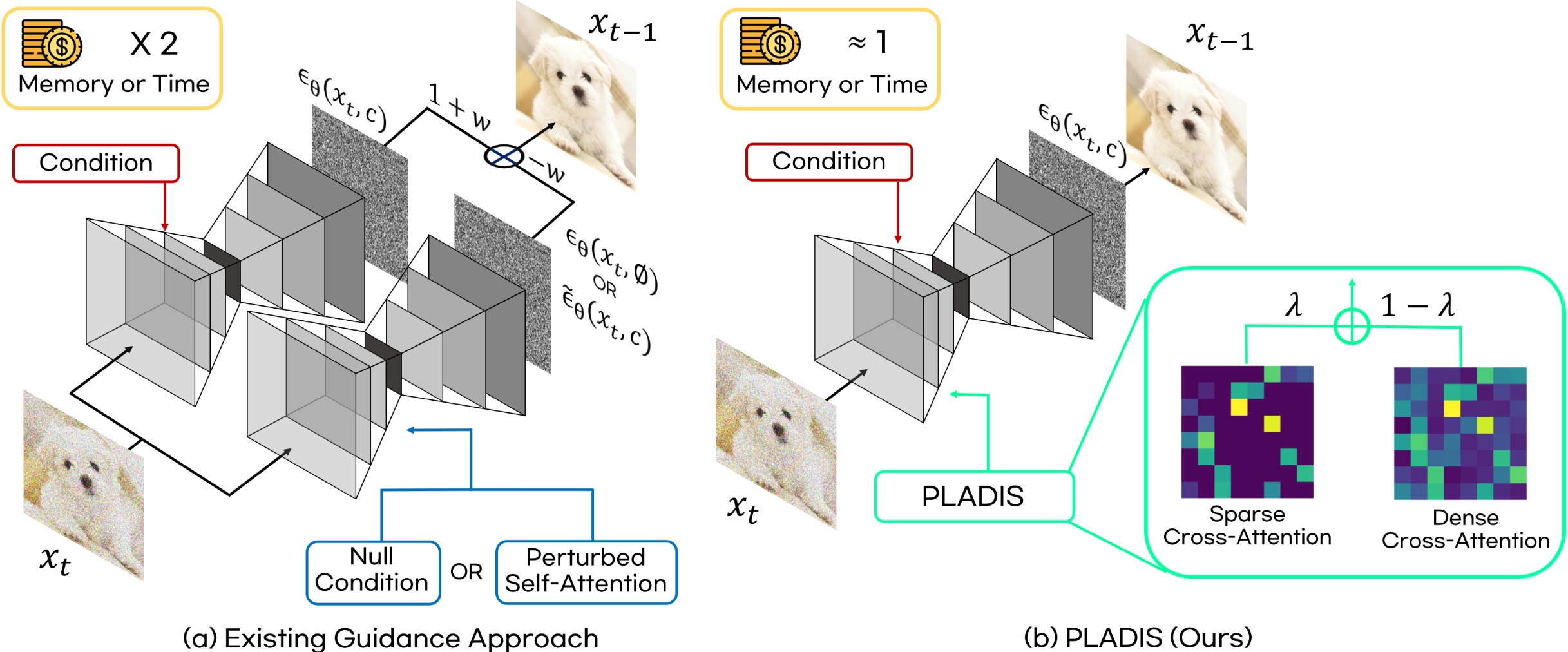}
\vspace{-0.5em}
\caption{Conceptual comparison between other guidance methods~\cite{CFG,PAG,SEG} and PLADIS: Existing guidance methods require extra inference steps due to undesired paths, such as null conditions or perturbing self-attention with an identity matrix or blurred attention weights. In contrast, PLADIS avoids additional inference paths by computing both sparse and dense attentions within all cross-attention modules using a scaling factor, $\lambda$. Moreover, PLADIS can be easily integrated with existing guidance approaches by simply replacing the cross-attention module.}
\label{fig:concept}
\vspace{-1em}
\end{figure*}

\subsection{Energy-Based Interpretations of Attention}~\label{sec:hop}
Attention mechanisms, following their distinct success, have recently been applied across various fields, including diffusion models~\cite{otseg,daam,p2p,park2023energy,attend,towards}. An energy-based model perspective has revealed their connection to Hopfield energy functions~\cite{hopfield, sparsehop, stanhop}. In Hopfield networks, the goal is to associate an input query $\mbx$ with the most relevant pattern $\mbxi$ by minimizing the energy function $E(\mbx)$ through retrieval dynamics $\mathcal{T}$. In modern Hopfield networks~\cite{hopfield}, energy functions and dynamics has been proposed, which is equivalent to attention mechanisms:
\begin{align}
E(\mbx)_{\texttt{Dense}} &:= -\texttt{lse}(\beta,\mbXi^{\top}\mbx) + \frac{1}{2} \langle \mbx,\mbx \rangle, \label{moder-energy2}\\
\mt_{\texttt{Dense}} (\mbx) &:= \mbXi\texttt{Softmax}(\beta\mbXi^{\top}\mbx) \label{attention}
\end{align}
where $\mbx \in \mathbb{R}^{d}$, $\mbXi =[\mbxi_1 \cdots, \mbxi_M] \in \mathbb{R}^{d \times M}$ , and $\texttt{lse}(\beta,\mbz):=\log\left(\sum^M_{i=1}\exp(\beta z_i)\right) / \beta$ denotes log-sum-exponential function for any given vector $\mbz \in \mathbb{R}^{M}$ and $\beta > 0 $. 
It mirrors the attention mechanism in transformers and providing a theoretical basis for its success.

Since sparse attention was introduced for its efficiency~\cite{sparsemax,entmaxx,adapentmax,speeding}, the Sparse Hopfield network~(SHN)~\cite{sparsehop,stanhop} was also proposed, extending the previous connection.
The energy function was modified to make sparse the computation of retrieval dynamics:
\begin{align}
E_{\alpha}(\mbx) := -\mathbf{\Psi}^{\star}_{\alpha}(\beta,\mbXi^{\top}\mbx) + \frac{1}{2}\langle \mbx,\mbx \rangle, \label{gshenergy} \\
 \mt_{\alpha} (\mbx) := \mbXi\alpha\texttt{-Entmax}(\beta\mbXi^{\top}\mbx), \label{sparse-ret}
\end{align}
and $\mathbf{\Psi}^{\star}_{\alpha}$ is the convex conjugate of Tsallis entropy~\cite{tsallis}, $\mathbf{\Psi}_\alpha, \alpha\texttt{-Entmax}(\mathbf{z})$, represents the probability mapping:
\begin{align}
&\Psi_{\alpha}(\mathbf{p}):=
\begin{cases}
\frac{1}{\alpha(\alpha-1)}\sum^{M}_{i=1} (p_i - p^{\alpha}_i), \; &\alpha \neq 1, \\
-\sum^{M}_{i=1} (p_i - \log p_i), &\alpha = 1, 
\end{cases}
\end{align}
\begin{align}
\alpha\texttt{-Entmax}(\mbz) = \underset {\mbp\in \Delta^{M}}{\argmax} [\langle\mbp,\mbz\rangle-\Psi_{\alpha}(\mbp) ],
\end{align}
where $\mathbf{p} \in \mathbb{R}^{M}$.
Here, $\alpha$ controls the sparsity.
When $\alpha=1$, it is equivalent to a dense probability mapping, $1\texttt{-Entmax} = \texttt{Softmax}$, and as $\alpha$ increases towards $2$, the outputs of $\alpha\texttt{-Entmax}$ become increasingly sparse.
Similar to $\mt_{\texttt{Dense}}$, $\Tca$ can be extended to attention mechanisms, establishing a strong connection with sparse attention.
For $\alpha = 2$, the exact solution can be efficiently computed using a sorting algorithm~\cite{held1974validation,michelot1986finite}.
For $1 < \alpha < 2$, inaccurate and slow iterative algorithm was used for computing $\alpha\texttt{-Entmax}$~\cite{liu2009efficient}.
Interestingly, for $1.5\texttt{-Entmax}$, an exact solution are derived in a simple form~\cite{entmaxx}.  In SHN, sparsity reduces retrieval errors and provide faster convergeness compared to dense retrieval dynamics~\cite{sparsehop,stanhop}. 

As mentioned, the retrieval dynamics of modern and sparse Hopfield energy can be  converted into an attention mechanism as follows:  
\begin{align}
& \Att (\mathbf{Q}_t,\mathbf{K}_t,\mathbf{V}_t)  = \texttt{Softmax}(\mathbf{Q}_t\mathbf{K}_t^{\top}/\sqrt{d})\mathbf{V}_t \label{dense-atten} \\
 &\Att _{\alpha}(\alpha,\mathbf{Q}_t,\mathbf{K}_t,\mathbf{V}_t)  = \alpha\texttt{-Entmax}(\mathbf{Q}_t\mathbf{K}_t^{\top}/\sqrt{d})\mathbf{V}_t \label{spars-atten}
\end{align}
where $\texttt{At}^l$ denotes original (dense) attention layer, and $\texttt{At}_{\alpha}$ represents sparse attention module with $\ent$ operator at $l^{th}$ layer. Both attention layers can be applied to self and cross-attention layers.
$\mathbf{Q}_t$, $\mathbf{K}_t$,  and $\mathbf{V}_t$ represent the query, key, and value matrices at time step $t$, respectively, and $d$ is the dimensionality of the keys and queries.
Note that with $\beta = 1/\sqrt{d}$, weight matrices, and operators, $\mt_{\texttt{Dense}}$ in ~\cref{attention} and $\mt_{\alpha}$ in ~\cref{sparse-ret} are reduce to the transformer attention mechanism  ~\cref{dense-atten} and ~\cref{spars-atten}, respectively. More details are available in supplement~\ref{sec:background}.

\noindent\textbf{Noise robustness of sparse Hopfield network} While the sparse extension is an efficient counterpart of dense Hopfield network, it has been discovered that there is more advantages to use sparse one besides efficiency~\cite{sparsehop,stanhop}. 
\begin{thm}(Noise-Robustness)~\cite{sparsehop}.
In case of noisy patterns with noise $\boldsymbol{\eta}$, i.e. $\tilde{\mbx} = \mbx + \boldsymbol{\eta}$ (noise in query) or $\tilde{\mbxi}_{\mu} = \mbxi_{\mu} + \boldsymbol{\eta}$ (noise in memory), the impact of noise $\boldsymbol{\eta}$ on the sparse retrieval error $||\mt_{2}(\mbx) - \mbxi_\mu||$ is linear, while its effect on the dense retrieval error $||\mt_{\texttt{Dense}}(\mbx) - \mbxi_\mu||$ is exponential. \label{thm:1}
\end{thm}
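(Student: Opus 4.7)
The plan is to bound the retrieval perturbation caused by $\boldsymbol{\eta}$ under both mappings and to compare the growth rates. Writing the retrieval as $\mt(\tilde{\mbx}) = \mbXi\,\mathbf{p}(\tilde{\mbx})$, where $\mathbf{p}(\mbz)$ is either \texttt{Softmax} (dense) or $\texttt{sparsemax} = 2\texttt{-Entmax}$ applied to the score vector $\beta\mbXi^{\top}\mbz$, I would first peel off the noiseless part via the triangle inequality,
\begin{align*}
\|\mt(\tilde{\mbx}) - \mbxi_\mu\| \;\le\; \|\mbXi\|\,\|\mathbf{p}(\tilde{\mbx}) - \mathbf{p}(\mbx)\| \;+\; \|\mt(\mbx) - \mbxi_\mu\|.
\end{align*}
The second term is the clean retrieval error already controlled by the standard Hopfield storage results for the energies in Eqs.~\eqref{moder-energy2} and \eqref{gshenergy}, so the whole problem reduces to bounding how much the probability map $\mathbf{p}$ moves under a score perturbation of magnitude at most $\beta\|\mbXi\|\|\boldsymbol{\eta}\|$.

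For the sparse case ($\alpha=2$), \texttt{sparsemax} is the Euclidean projection onto the simplex and is therefore globally $1$-Lipschitz. On any open cell where the active support $\mathcal{S}$ is constant its Jacobian is $I_\mathcal{S} - \tfrac{1}{|\mathcal{S}|}\mathbf{1}_\mathcal{S}\mathbf{1}_\mathcal{S}^{\top}$, whose spectral norm is at most $1$, which gives $\|\mathbf{p}(\tilde{\mbx}) - \mathbf{p}(\mbx)\| \le \beta\|\mbXi\|\|\boldsymbol{\eta}\|$, i.e., a dependence that is linear in $\|\boldsymbol{\eta}\|$. For the dense case, the softmax weights satisfy the multiplicative identity
\begin{align*}
\frac{p_\nu(\tilde{\mbx})}{p_\mu(\tilde{\mbx})} \;=\; \frac{p_\nu(\mbx)}{p_\mu(\mbx)} \exp\!\bigl(\beta\langle \mbxi_\nu - \mbxi_\mu,\boldsymbol{\eta}\rangle\bigr),
\end{align*}
so even when $\mbxi_\mu$ is dominant in the clean case, a noise vector aligned with some $\mbxi_\nu-\mbxi_\mu$ inflates competing mass by a factor of $\exp(\beta\|\mbxi_\nu-\mbxi_\mu\|\,\|\boldsymbol{\eta}\|)$. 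Propagating this through $\mt_{\texttt{Dense}}$ yields a bound of the form $\|\mt_{\texttt{Dense}}(\tilde{\mbx}) - \mbxi_\mu\| \le C_1 + C_2 \exp(C_3 \|\boldsymbol{\eta}\|)$, the claimed exponential dependence. The symmetric argument for noise in memory ($\tilde{\mbxi}_\mu = \mbxi_\mu + \boldsymbol{\eta}$) follows by the same score-perturbation calculation after swapping the roles of query and stored pattern inside $\beta\mbXi^{\top}\mbx$, together with an extra $\|\boldsymbol{\eta}\|$ contribution from comparing $\tilde{\mbxi}_\mu$ to $\mbxi_\mu$ directly.

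The main obstacle I expect is the sparse boundary case: when $\boldsymbol{\eta}$ is large enough to push the score vector across a face of the polytope on which \texttt{sparsemax} is differentiable, one has to verify that the global non-expansiveness of the projection still delivers a clean linear bound without hidden constants appearing from the change in $\mathcal{S}$. This is standard (the Euclidean projection is $1$-Lipschitz, so the worst piecewise slope is already captured), but it must be combined carefully with the constants arising from the storage-error analysis of \eqref{gshenergy} to obtain a quantitative rate. A secondary subtlety is producing a matching lower bound in the dense case: one has to exhibit an explicit worst-case direction $\boldsymbol{\eta} \propto \mbxi_\nu-\mbxi_\mu$ so that the exponential growth is not an artifact of a loose upper bound.
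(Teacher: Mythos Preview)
Your approach is sound but takes a different route from the paper's. The paper does not give an independent proof of this statement; it is cited from \cite{sparsehop} and, in the supplement, presented as an immediate corollary of the retrieval-error inequalities (\ref{eq:ineq-hop}) and (\ref{eq:ineq-sparse}): one simply substitutes $\mbx \to \mbx + \boldsymbol{\eta}$ (or $\mbxi_\mu \to \mbxi_\mu + \boldsymbol{\eta}$) into those bounds and reads off that the dense estimate, being of the form $\exp\{-\beta\langle\mbxi_\mu,\mbx\rangle + \text{const}\}$, acquires a multiplicative factor exponential in $\|\boldsymbol{\eta}\|$, whereas the sparse estimate, being affine in the inner products $\langle\mbxi_\nu,\mbx\rangle$, shifts only linearly. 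You instead work directly at the level of the probability maps, using the $1$-Lipschitz property of the simplex projection for \texttt{sparsemax} and the multiplicative blow-up of softmax ratios, without invoking the full retrieval-error theorem. This is more self-contained and makes the mechanism explicit, but be aware that your triangle-inequality split alone does not separate the two regimes: softmax is also globally Lipschitz, so $\|\mathbf{p}(\tilde{\mbx})-\mathbf{p}(\mbx)\|$ is linear in $\|\boldsymbol{\eta}\|$ there as well, and the exponential behavior enters only through your ratio identity, which is effectively re-deriving the structure behind (\ref{eq:ineq-hop}). The paper's route is a one-line substitution into cited bounds; yours trades brevity for a more transparent account of where the noise sensitivity originates.
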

\noindent where $\mbxi_{\mu}$ is memory pattern and to be considered stored at a fixed point of $\mt$. This theorem suggests that under noisy conditions, sparse attention mechanisms exhibit superior noise robustness compared to standard dense attention, leads the lower retrieval error.

\section{Main Contribution : PLADIS}
Motivated by advantages of sparse attention presented in previous section, we aimed to enhance text to image (T2I) diffusion model by sparsifying attention modules as described in \cref{spars-atten}. 
In the following sections, we investigate sparse attention in self- and cross-attention for T2I diffusion (Sec.~\ref{sec:compare_self_cross}), explore the effect of sparsity in $\ent$ for $\alpha > 1$ (Sec.~\ref{sec:effect_alpha}) and connect SHN’s noise robustness with sparse attention for $1 < \alpha \le2$ in T2I models (Sec.~\ref{sec:noise-robust}). Finally, we introduce PLADIS, a cost-effective enhancement method for T2I diffusion models (Sec.~\ref{sec:PLADIS}).



\begin{figure}[!t]
\centering
\includegraphics[width=0.95\linewidth]{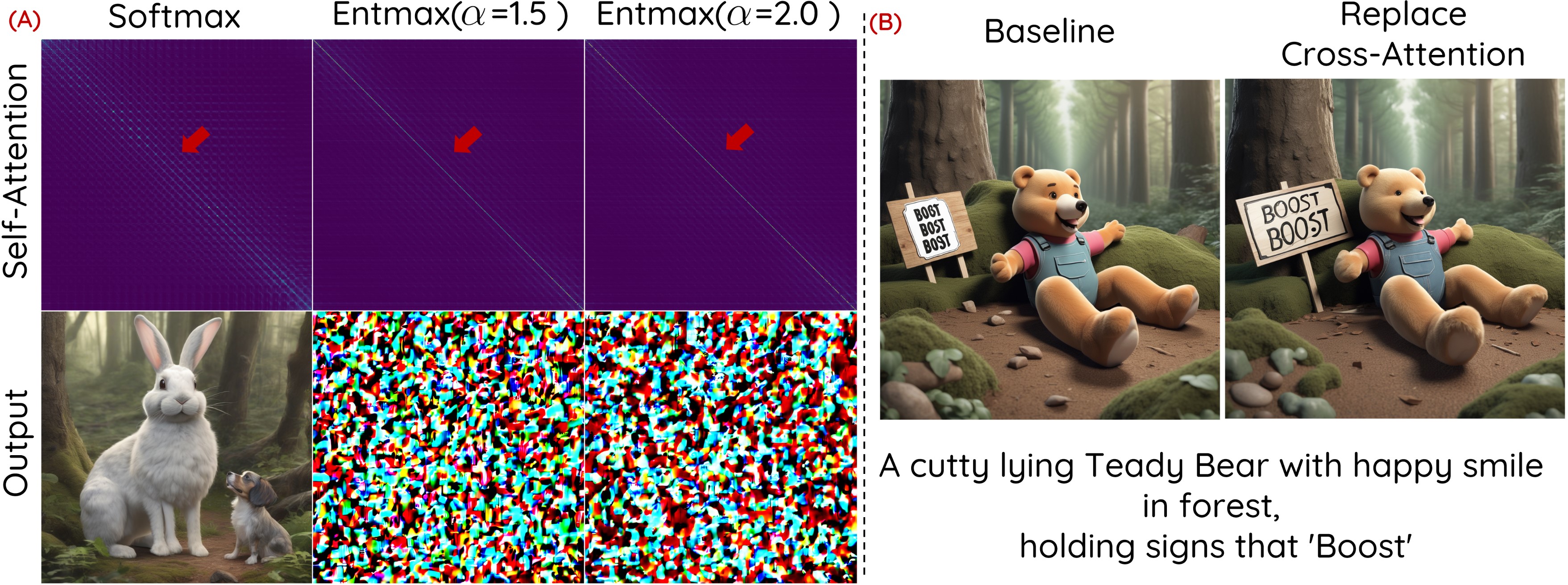}
\vspace{-0.5em}
\caption{Effectiveness of sparse attention mechanisms compared to baseline in (a) self- and (b) cross-attention variants.}
\vspace{-1.5em}
\label{fig:self_cross_compare}
\end{figure}

\subsection{Sparse Attention for T2I Generation}\label{sec:compare_self_cross}
To evaluate the efficacy of sparse attention mechanisms in text-to-image diffusion models, we first replace the standard self-attention with their sparse variants, specifically using $\ent$ with $\alpha=1.5$ and $2.0$ as shown Fig~\ref{fig:self_cross_compare} (a). Although 
Theorem~\ref{thm:1} applies to both self and cross-attention, we empirically observe that in the case of self-attention (i.e., image-to-image), most entries of $\alpha$-\texttt{Entmax}$(\mathbf{Q}\mathbf{K}^\top)$ are concentrated along the diagonal. This behavior causes each patch to attend primarily to itself, severely limiting inter-pixel interactions and ultimately leading to failure in image generation.
Surprisingly, substituting the cross-attention module with its sparse counterpart leads to enhanced generation quality and better text alignment, \emph{although the model was not trained with the sparse attention modules.}
As shown in Fig.~\ref{fig:self_cross_compare} (b), the baseline results are unable to accurately generate the text "Boost." In contrast, the sparse variants achieve successful and accurate text generation. Further evidence of these improvements can be found in Fig~\ref{fig:alpha}. This intriguing discovery regarding the use of sparse cross-attention within T2I diffusion models serves as the primary impetus behind our proposed algorithm.


\subsection{Effect of Sparsity in Cross-Attention Module }\label{sec:effect_alpha}
We further explore the effect of sparsity in the sparse attention mechanism within the cross-attention module of T2I diffusion models. 
Notably, $\ent$ transforms are sparse for all $\alpha > 1$. To assess sparsity’s impact, we replace standard cross-attention layers with sparse ones and generate 5K samples from the MS-COCO validation dataset using CFG guidance, varying $\alpha$ as shown in Fig.\ref{fig:alpha}. Interestingly, increasing sparsity (higher $\alpha$) improves generation quality, text alignment, and human preference scores without additional training.
Cross-attention with softmax results in dense alignments and strictly positive output probabilities, but sparse cross-attention produces sparse alignments, ensuring a stricter match between image and text embeddings. It leads to overall improvement in performance.

\subsection{Connection With Noise Robustness of SHN} \label{sec:noise-robust}
To further verify why performance improves when $1<\alpha \le2$, we introduce retrieval error of dynamics for this case:
\begin{thm}[Retrieval Error]\label{theorem-noise}
    Let $\Tca$ be the retrieval dynamics of Hopfield model with $\ent$. 
    \begin{align}
        \text{For } 1 < \alpha \le 2,
        &||\Tca(\xb)-\xib_\mu|| \le m +
        m \kappa \Big[ (\alpha-1)\beta \notag \\
        &\left( \max_\nu \langle \xib_\nu, \xb \rangle - \left[\Xib^\intercal \xb \right]_{(\kappa+1)} \right)\Big]^{\frac{1}{\alpha-1}},\label{eq:ineq-new}
    \end{align}
    Here, we abuse the notation $\left[\Xib^\intercal \xb \right]_{(d+1)}:= \left[\Xib^\intercal \xb \right]_{(d)} - M^{1-\alpha}/(\alpha-1)$.
\end{thm}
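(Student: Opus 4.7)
Since $\Tca(\xb) = \Xib\pb$ with $\pb := \ent(\beta\Xib^\intercal\xb)$ is a convex combination of memory patterns, the natural route is a triangle-inequality split that isolates the two scales appearing in the bound. Writing $m := \max_\nu \|\xib_\nu\|$ and $S := \{\nu : p_\nu > 0\}$, the plan is to begin from
\[
\|\Tca(\xb) - \xib_\mu\| \;\le\; \Big\|\sum_{\nu \in S} p_\nu \xib_\nu\Big\| + \|\xib_\mu\| \;\le\; \sum_{\nu \in S} p_\nu \|\xib_\nu\| + m.
\]
This already accounts for the free additive $m$ in the conclusion, so the remaining task reduces to establishing the pointwise bound $\sum_{\nu \in S} p_\nu \|\xib_\nu\| \le m\kappa\gamma$ with $\gamma := \bigl[(\alpha-1)\beta\bigl(\max_\nu\langle\xib_\nu,\xb\rangle - [\Xib^\intercal\xb]_{(\kappa+1)}\bigr)\bigr]^{1/(\alpha-1)}$.

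\textbf{Uniform control of $p_\nu$.} Here I would invoke the familiar KKT description of $\ent$ for $\alpha > 1$: each positive coordinate takes the closed form $p_\nu = \bigl[(\alpha-1)\beta\langle\xib_\nu,\xb\rangle - \tau\bigr]^{1/(\alpha-1)}$, where the scalar threshold $\tau$ is pinned down by $\sum_\nu p_\nu = 1$. Dual feasibility at the first excluded coordinate forces $\tau \ge (\alpha-1)\beta\,[\Xib^\intercal\xb]_{(\kappa+1)}$, and substituting this lower bound (together with the trivial upper bound $\langle\xib_\nu,\xb\rangle \le \max_\nu\langle\xib_\nu,\xb\rangle$) yields $p_\nu \le \gamma$ for every $\nu \in S$. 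Since $|S| \le \kappa$, summing and combining with the triangle step produces the claimed inequality after one additional application of $\|\xib_\nu\| \le m$.

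\textbf{Main obstacle.} The only genuinely delicate point is the boundary regime $\kappa = M$, where the $(M{+}1)$-th order statistic does not exist and the dual-feasibility inequality above becomes vacuous. To handle this case I would appeal to the normalization constraint itself: the smallest coordinate of any probability vector is at most its mean, so $p_{(M)} \le 1/M$, whence $(\alpha-1)\beta\,[\Xib^\intercal\xb]_{(M)} - \tau = p_{(M)}^{\alpha-1} \le M^{1-\alpha}$. Rearranging forces $\tau \ge (\alpha-1)\beta\,[\Xib^\intercal\xb]_{(M)} - M^{1-\alpha}$, which is precisely the author's formal convention $[\Xib^\intercal\xb]_{(M+1)} := [\Xib^\intercal\xb]_{(M)} - M^{1-\alpha}/(\alpha-1)$ evaluated at $\kappa = M$. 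With that convention in force, the uniform control of $p_\nu$ proceeds identically, and the triangle-inequality argument closes the proof for all $\kappa\in\{1,\dots,M\}$.
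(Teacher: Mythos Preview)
Your proposal is correct and tracks the paper's argument closely: the same triangle-inequality split $\|\Tca(\xb)-\xib_\mu\|\le\|\xib_\mu\|+\sum_\nu p_\nu\|\xib_\nu\|$, followed by the same pointwise bound $p_\nu\le[(\alpha-1)(\beta[\Xib^\intercal\xb]_{(\nu)}-\beta[\Xib^\intercal\xb]_{(\kappa+1)})]^{1/(\alpha-1)}$ derived from the threshold representation of $\ent$ and the inequality $\tau\ge(\alpha-1)\beta[\Xib^\intercal\xb]_{(\kappa+1)}$. The one substantive difference is your treatment of the boundary case $\kappa=M$. The paper bounds $\tau$ from below via H\"older's inequality with exponents $p=1/(\alpha-1)$, $q=1/(2-\alpha)$, which forces a separate argument at $\alpha=2$ (where $q=\infty$). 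Your route---observe $p_{(M)}\le 1/M$ as the minimum of a probability vector, then raise to the $(\alpha-1)$ power using monotonicity of $t\mapsto t^{\alpha-1}$ on $[0,\infty)$ for $0<\alpha-1\le1$---is more elementary, handles all $1<\alpha\le2$ uniformly, and lands on the identical inequality $\tau\ge(\alpha-1)\beta[\Xib^\intercal\xb]_{(M)}-M^{1-\alpha}$. Both approaches are valid; yours is shorter.
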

\noindent For proof, see supplement $\ref{sec:background}$.
Based on our proposed error bound, we can derive the noise-robustness for $1<\alpha \le 2$. 
\begin{corollary}(Noise-Robustness)
In case of noisy patterns with noise $\boldsymbol{\eta}$, the impact of noise on the retrieval error $||\Tca(\xb) - \xib_\mu||$ is polynomial of order $\frac{1}{\alpha-1}$ for $1<\alpha \le 2$. \label{collary-noise}
\end{corollary}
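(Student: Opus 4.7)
The plan is to derive the corollary as a direct consequence of the retrieval-error bound in Theorem~\ref{theorem-noise} by substituting the noisy pattern in place of $\xb$ (or $\xib_\mu$) and tracking how the noise $\boldsymbol{\eta}$ propagates through the right-hand side of \eqref{eq:ineq-new}. First I would write down the bound for the perturbed input $\tilde{\xb}=\xb+\boldsymbol{\eta}$ (the memory-noise case $\tilde{\xib}_\mu=\xib_\mu+\boldsymbol{\eta}$ is symmetric and reduces to the same analysis after re-indexing $\Xib$). The only place $\boldsymbol{\eta}$ enters the right-hand side is through the scalar
\[
 \Delta(\tilde{\xb}):= \max_\nu \langle \xib_\nu,\tilde{\xb}\rangle - \bigl[\Xib^\intercal \tilde{\xb}\bigr]_{(\kappa+1)},
\]
so the whole corollary reduces to controlling how $\Delta(\tilde{\xb})$ depends on $\boldsymbol{\eta}$ and then composing with the $t\mapsto t^{1/(\alpha-1)}$ map.

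Next I would split $\Delta(\tilde{\xb}) = \Delta(\xb) + R(\boldsymbol{\eta})$, where $\Delta(\xb)$ is the clean separation margin and $R(\boldsymbol{\eta})$ collects the cross terms $\langle \xib_\nu, \boldsymbol{\eta}\rangle$. By Cauchy--Schwarz, each such inner product is bounded by $\|\xib_\nu\|\,\|\boldsymbol{\eta}\|$, so $|R(\boldsymbol{\eta})| \le 2\,(\max_\nu \|\xib_\nu\|)\,\|\boldsymbol{\eta}\|$, i.e.\ \emph{linear} in $\|\boldsymbol{\eta}\|$. Plugging this into \eqref{eq:ineq-new} gives
\[
 \|\Tca(\tilde{\xb})-\xib_\mu\|\le m + m\kappa\bigl[(\alpha-1)\beta\bigl(\Delta(\xb)+C\|\boldsymbol{\eta}\|\bigr)\bigr]^{\frac{1}{\alpha-1}},
\]
with $C=2\max_\nu\|\xib_\nu\|$. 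Since $1<\alpha\le 2$ gives exponent $1/(\alpha-1)\ge 1$, the dominant dependence on $\|\boldsymbol{\eta}\|$ is a polynomial of order $1/(\alpha-1)$, which is exactly what the corollary asserts. Comparing with Theorem~\ref{thm:1}, the two known endpoints are recovered: $\alpha=2$ yields the linear rate of \cite{sparsehop}, while formally $\alpha\to 1$ recovers the exponential dense rate, so the interpolation is consistent.

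The main obstacle I anticipate is making the splitting step rigorous: the identity of the argmax index and the identity of the $(\kappa+1)$-th order statistic inside $\bigl[\Xib^\intercal\tilde\xb\bigr]_{(\kappa+1)}$ can shift as $\boldsymbol{\eta}$ varies, so $\Delta(\tilde\xb)=\Delta(\xb)+R(\boldsymbol{\eta})$ is not literally an equality but must be controlled through a worst-case argument. I would handle this by replacing equality with the one-sided inequality that suffices for an upper bound: using $\max_\nu \langle \xib_\nu,\tilde\xb\rangle\le \max_\nu\langle\xib_\nu,\xb\rangle+\max_\nu\langle\xib_\nu,\boldsymbol{\eta}\rangle$ on top and $[\Xib^\intercal\tilde\xb]_{(\kappa+1)}\ge [\Xib^\intercal\xb]_{(\kappa+1)}-\max_\nu|\langle\xib_\nu,\boldsymbol{\eta}\rangle|$ on the bottom, both justified by standard order-statistic perturbation inequalities. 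Once this is done, the remaining calculation is a one-line composition with the power function and asymptotic $O(\|\boldsymbol{\eta}\|^{1/(\alpha-1)})$ reading, which completes the proof of Corollary~\ref{collary-noise}.
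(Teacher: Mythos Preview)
Your proposal is correct and follows essentially the same route the paper takes: the paper does not spell out a separate proof of the corollary but simply reads it off the bound in Theorem~\ref{theorem-noise}, observing that the noise $\boldsymbol{\eta}$ enters the inner products $\langle \xib_\nu,\cdot\rangle$ linearly and is then raised to the power $1/(\alpha-1)$. Your additional care about the possible shift of the argmax index and the $(\kappa+1)$-th order statistic, handled via one-sided perturbation inequalities, is the right way to make that informal reading rigorous; note also that $\kappa$ may change with $\boldsymbol{\eta}$, but since $\kappa\le M$ this does not affect the polynomial order.
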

\noindent This theorem and corollary suggest that $\Tca$ also take pleasure in noise robustness for $1<\alpha \le2$, leads the lower retrieval error.  
In T2I diffusion models, cross-attention layers process query, key, and value matrices from noisy images and text prompts. Due to Gaussian noise corruption in the diffusion process, the query matrix is inherently perturbed. Building on this and Theorem~\ref{thm:1}, \ref{theorem-noise}, and Corollary~\ref{collary-noise}, the observed performance improvement, especially with increasing $\alpha$, reflects the noise robustness of sparse attention, as shown in Fig.~\ref{fig:alpha}. By linking these gains to the theoretical guarantees of SHN, we provide a stronger foundation for the efficacy of sparse-cross attention in DMs.

\begin{figure}[t!]
\centering
\includegraphics[width=1\linewidth]{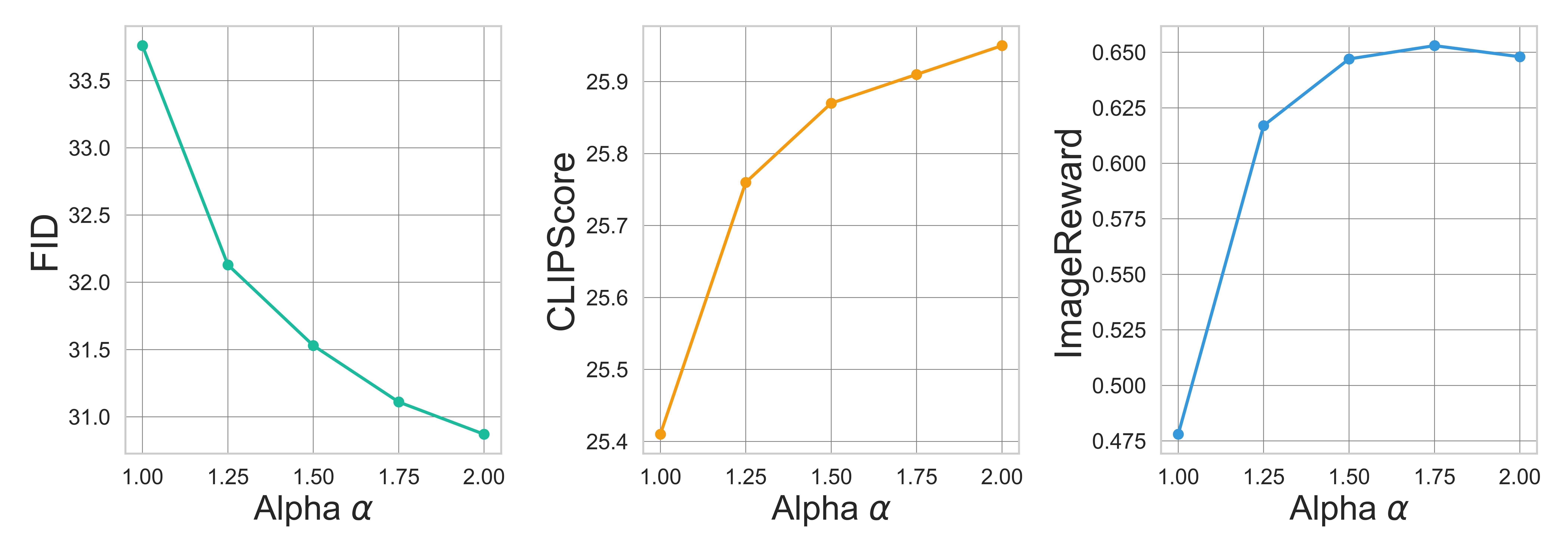}
\vspace{-2em}
\caption{Comparison of $\alpha$ values in $\ent$ on the MS-COCO dataset with CFG and PAG guidance. }\label{fig:alpha}
\vspace{-1em}
\end{figure}

\begin{algorithm}[t!]
	\caption{Diffusion Sampling with PLADIS and other guidance methods}
	\label{algo-PLADIS}
	\SetKwInOut{Input}{Input}
	\SetKwInput{kwInit}{return}
	\Input{Diffusion model $\me_{\theta}(\mbx_t)$ with cross-attention module $\texttt{At}(\cdot)$ at layer $l$, total number of cross-attention layers $L$, scales $\lambda$.}
    \For{$l$ in $1,, \cdots, L$ }{Replace $\texttt{At}(\cdot)$ with $\texttt{At}_{\texttt{Ours}}(\cdot)$ by Eq.~\ref{eq:PLADIS}}         
   $\mbx_{T} ~\sim \mathcal{N}(0,I)$\\
\For{$t$ in $T,T-1, \cdots, 1$ }{
\If{CFG}{Compute $\me_{\theta}(\mbx_t,\mbc)$ by Eq.~\ref{eq:cfg}}
\If{PAG or SEG}{Compute $\me_{\theta}(\mbx_t,\mbc)$ by Eq.~\ref{eq:weak}}
$\hat \mbx_0(t) = (\mbx_t - \sqrt{1- \bar{\alpha_t}}\me_{\theta}(\mbx_t,\mbc)) / \sqrt{\bar{\alpha_t}}$ \\
$\mbx_{t-1} = \sqrt{\bar{\alpha}_{t-1}} \hat \mbx_0(t)  + \sqrt{1- \bar{\alpha}_{t-1}}\me_{\theta}(\mbx_t,\mbc)$}
\kwInit{$\mbx_{0}$}
\end{algorithm}

\subsection{Our Approach : PLADIS}\label{sec:PLADIS}
Building on our exploration of sparse attention, we propose a simple yet more effective approach called PLADIS. Specifically, we aim to enhance the benefits of sparse attention (as shown in Fig.~\ref{fig:alpha}) without introducing additional neural function evaluations (NFEs). Inspired by guidance methods like CFG, PAG, and SEG, we extrapolate query-key correlations in both dense and sparse attentions.
\begin{align}
\texttt{At}_{\texttt{Ours}}(\alpha,\lambda, \mathbf{Q}_t,\mathbf{K}_t,\mathbf{V}_t) &:=      \texttt{At}(\mathbf{Q}_t,\mathbf{K}_t,\mathbf{V}_t) \;+ \nonumber\\
\lambda\big(\texttt{At}_{\alpha}(\alpha,\mathbf{Q}_t,\mathbf{K}_t,\mathbf{V}_t) &-\texttt{At}(\mathbf{Q}_t,\mathbf{K}_t,\mathbf{V}_t)\big)\label{eq:PLADIS}
\end{align}
The scale parameters $\lambda$ is a hyperparameter and determine the extent to which sparse attention effects are accentuated. 
When $\lambda = 0$, the formula is equivalent to the baseline model, and when $\lambda = 1$, it represents the model in \cref{sec:effect_alpha}. When $\lambda >1$, our PLADIS is applied.
The sparsity degree $1 < \alpha \le 2$ is another hyperparameter, but we only consider two options $\alpha = 1.5$ and $\alpha = 2$ , where efficient algorithms are known to exist.

Here, we emphasize the generalizability of our method.
Other methods that modify the attention module require hyperparameter search for target layers.
However, for PLADIS, applying \cref{eq:PLADIS} to all cross-attention layers is sufficient, which makes our method more easily extendable to other cases.
Nevertheless, we conduct an ablation study in \cref{tab_layer} for varying target layers and find that applying it to all layers is the optimal choice. (See supplement~\ref{sec:add_ablation}) 
Moreover, unlike other guidance methods, our method is implicit in that it does not require an additional step, enabling ours to be extended to guidance-distilled models.


\section{Experiment}
\noindent\textbf{Implementation Detail}
In our experiments, we use Stable Diffusion XL (SDXL) \cite{sdxl} as the backbone model to validate the effectiveness of our proposed methods. The results on other backbone is available in supplement \ref{sec:add_other}. All experiments are conducted on a single NVIDIA H100 GPU. For the calculation of the $\alpha\texttt{-Entmax}$ function, we utilize an open-source library\footnote{\url{https://github.com/deep-spin/entmax}}.
We set $\alpha$ to 1.5 and the scale $\lambda$ to 2.0 as the baseline. 

\noindent\textbf{Evaluation Metric}
To comprehensively assess our method, we employ various evaluation metrics. For visual fidelity, we calculate the Frechet Inception Distance (FID)~\cite{fid} of images generated from 30K random prompts from the MS-COCO validation set~\cite{coco}. To evaluate text-image alignment and user preference, we measure CLIPScore~\cite{clipscore}, ImageReward~\cite{imagereward}, PickScore~\cite{pick}, and Human Preference Score (HPS v2.1)\cite{hpsv2}. Additionally, our model is evaluated using text prompts from not only MS-COCO but also Drawbench~\cite{drawbench}, HPD~\cite{hpsv2}, and Pick-a-Pic~\cite{pick}. More details are provided in the supplement~\ref{sec:metric_detail}.

\begin{table}[t!]
\caption{Quantitative results of various guidance methods on the MS-COCO dataset. Bold text indicates the best performance for each metric across the different methods.}
\vspace{-0.5em}
\label{tab_main}
\centering
\resizebox{0.9\linewidth}{!}{
\begin{small}
\begin{tabular}{clccc}
\toprule
CFG & Method & \multicolumn{1}{c}{FID~$\downarrow$} & \multicolumn{1}{c}{CLIPScore~$\uparrow$} & \multicolumn{1}{c}{ImageReward~$\uparrow$} \\
\cmidrule(r){1-1}\cmidrule(r){2-2}\cmidrule(r){3-5}
\multirow{6}{*}{\xmark} & Vanilla & 83.68	& 20.92	& -1.050  \\
\cmidrule(r){2-2}\cmidrule(r){3-5}
 \rowcolor{green!10}  \cellcolor{white} & +\,Ours & \bf 79.72 \scriptsize{\textcolor{blue}{(-3.96)}}	&\bf 21.86\scriptsize{\textcolor{blue}{(+0.89)}}	&\bf -0.858	\scriptsize{\textcolor{blue}{(+0.19)}} \\
\cmidrule(r){2-5}
 & PAG~\cite{PAG} & 29.36	& 24.03	& -0.011 \\
 \cmidrule(r){2-2}\cmidrule(r){3-5}
 \rowcolor{green!10}  \cellcolor{white} &+\,Ours&\bf 24.51 \scriptsize{\textcolor{blue}{(-4.85)}}	&\bf24.85 \scriptsize{\textcolor{blue}{(+0.93)}}	&\bf0.251 \scriptsize{\textcolor{blue}{(+0.31)}} \\
\cmidrule(r){2-5}
 & SEG~\cite{SEG} & 38.08	& 23.71	&-0.139	 \\
 \cmidrule(r){2-2}\cmidrule(r){3-5}
 \rowcolor{green!10}  \cellcolor{white} &+\,Ours &\bf 33.19 \scriptsize{\textcolor{blue}{(-4.89)}}	&\bf24.63	\scriptsize{\textcolor{blue}{(+1.02)}} &\bf0.134 \scriptsize{\textcolor{blue}{(+0.28)}}\\
\midrule
\multirow{6}{*}{\cmark} & Vanilla & 23.39	&25.91	&0.425 \\
\cmidrule(r){2-2}\cmidrule(r){3-5}
 \rowcolor{green!10}  \cellcolor{white} &+\,Ours &\bf 19.01 \scriptsize{\textcolor{blue}{(-4.38)}}	&\bf26.61 \scriptsize{\textcolor{blue}{(+0.70)}}	&\bf0.622 \scriptsize{\textcolor{blue}{(+0.20)}} \\
\cmidrule(r){2-5}
 & PAG~\cite{PAG} & 24.32	&25.42	& 0.478 \\
 \cmidrule(r){2-2}\cmidrule(r){3-5}
 \rowcolor{green!10}  \cellcolor{white} &+\,Ours &\bf20.11	\scriptsize{\textcolor{blue}{(-4.21)}}&\bf26.41 \scriptsize{\textcolor{blue}{(+0.99)}}	&\bf0.726 \scriptsize{\textcolor{blue}{(+0.25)}}\\
\cmidrule(r){2-5}
 & SEG~\cite{SEG} & 26.80	&25.39	& 0.431	\\
 \cmidrule(r){2-2}\cmidrule(r){3-5}
 \rowcolor{green!10}  \cellcolor{white} &+\,Ours&\bf22.08 \scriptsize{\textcolor{blue}{(-4.80)}}	&\bf26.49 \scriptsize{\textcolor{blue}{(+1.10)}}	&\bf0.689	\scriptsize{\textcolor{blue}{(+0.26)}}\\       
\bottomrule
\end{tabular}
\end{small}
}
\vspace{-0.5em}
\end{table}

\begin{table}[t!]
\caption{Quantitative comparison of text alignment and human preference across datasets using various guidance methods. For PAG, SEG, CFG guidance is used jointly. Bold text indicates the best performance for each metric.}
\label{tab_hps}
\vspace{-0.5em}
\centering
\resizebox{0.9\linewidth}{!}{
\begin{small}
\begin{tabular}{clcccc}
\toprule
Dataset & Method & CLIPScore~$\uparrow$ & PickScore~$\uparrow$ & ImageReward~$\uparrow$ & HPSv2~$\uparrow$ \\
\cmidrule(r){1-1}\cmidrule(r){2-2}\cmidrule(r){3-6}
\multirow{9}{*}{Drawbench~\cite{drawbench}} & CFG~\cite{CFG} & 26.63 &21.72 & 0.198 & 26.83 \\
\cmidrule(r){2-2}\cmidrule(r){3-6}
& $\cellcolor{green!10}$ +\,Ours & \cellcolor{green!10} \bf 27.72 \scriptsize{\textcolor{blue}{(+1.09)}} & \cellcolor{green!10} \bf 21.94 \scriptsize{\textcolor{blue}{(+0.22)}} & \cellcolor{green!10} \bf0.419 \scriptsize{\textcolor{blue}{(+0.22)}} & \cellcolor{green!10}\bf27.10 \scriptsize{\textcolor{blue}{(+0.24)}} \\
\cmidrule(r){2-6}
 & PAG~\cite{PAG} & 26.19 & 21.94 & 0.295 & 28.65 \\
\cmidrule(r){2-2}\cmidrule(r){3-6}
& \cellcolor{green!10}+\,Ours & \cellcolor{green!10} \bf27.23 \scriptsize{\textcolor{blue}{(+1.05)}} & \cellcolor{green!10} \bf22.16 \scriptsize{\textcolor{blue}{(+0.22)}} & \cellcolor{green!10} \bf0.570 \scriptsize{\textcolor{blue}{(+0.27)}} & \cellcolor{green!10} \bf28.93 \scriptsize{\textcolor{blue}{(+0.28)}} \\
\cmidrule(r){2-6}
 & SEG~\cite{SEG} & 26.06 & 21.79 & 0.291 & 28.71 \\
\cmidrule(r){2-2}\cmidrule(r){3-6}
\rowcolor{green!10}  \cellcolor{white} & +\,Ours & \bf27.41 \scriptsize{\textcolor{blue}{(+1.34)}} & \bf21.99 \scriptsize{\textcolor{blue}{(+0.20)}} & \bf0.497 \scriptsize{\textcolor{blue}{(+0.21)}} & \bf29.08 \scriptsize{\textcolor{blue}{(+0.37)}}\\
\midrule
\multirow{9}{*}{HPD~\cite{hpsv2}} & CFG~\cite{CFG} & 29.00 & 21.98 & 0.567 & 28.53 \\
\cmidrule(r){2-2}\cmidrule(r){3-6}
\rowcolor{green!10}  \cellcolor{white} &+\,Ours & \bf29.78 \scriptsize{\textcolor{blue}{(+0.78)}}  & \bf22.11 \scriptsize{\textcolor{blue}{(+0.13)}}  & \bf0.693 \scriptsize{\textcolor{blue}{(+0.13)}}  & \bf28.54 \scriptsize{\textcolor{blue}{(+0.01)}} \\
\cmidrule(r){2-6}
 & PAG~\cite{PAG} & 28.01 & 22.13 & 0.637 & 30.64 \\
\cmidrule(r){2-2}\cmidrule(r){3-6}
 & \cellcolor{green!10} +\,Ours & \cellcolor{green!10} \bf28.93 \scriptsize{\textcolor{blue}{(+0.92)}}  & \cellcolor{green!10}  \bf22.35 \scriptsize{\textcolor{blue}{(+0.22)}} & \cellcolor{green!10} \bf0.828 \scriptsize{\textcolor{blue}{(+0.19)}}  & \cellcolor{green!10} \bf31.12 \scriptsize{\textcolor{blue}{(+0.48)}}  \\
\cmidrule(r){2-6}
 & SEG~\cite{SEG} & 28.21 & 21.98 & 0.673 & 30.48 \\
\cmidrule(r){2-2}\cmidrule(r){3-6}
\rowcolor{green!10}  \cellcolor{white} &+\,Ours & \bf 29.21 \scriptsize{\textcolor{blue}{(+1.00)}}  & \bf22.15 \scriptsize{\textcolor{blue}{(+0.17)}}  & \bf 0.786 \scriptsize{\textcolor{blue}{(+0.11)}}  & \bf 30.75 \scriptsize{\textcolor{blue}{(+0.27)}} \\
\midrule
\multirow{9}{*}{Pick-a-pic~\cite{pick}} & CFG~\cite{CFG} & 27.08 & 21.30 & 0.340 & 28.05 \\
\cmidrule(r){2-2}\cmidrule(r){3-6}
\rowcolor{green!10}  \cellcolor{white} &+\,Ours & \bf27.97 \scriptsize{\textcolor{blue}{(+0.89)}} & \bf21.69 \scriptsize{\textcolor{blue}{(+0.09)}}  & \bf0.466 \scriptsize{\textcolor{blue}{(+0.13)}}  & \bf28.14 \scriptsize{\textcolor{blue}{(+0.09)}}  \\
\cmidrule(r){2-6}
 & PAG~\cite{PAG}& 26.34 & 21.49 & 0.467 & 29.91 \\
\cmidrule(r){2-2}\cmidrule(r){3-6}
  & \cellcolor{green!10} +\,Ours & \cellcolor{green!10} \bf27.31 \scriptsize{\textcolor{blue}{(+0.97)}}  & \cellcolor{green!10} \bf21.67 \scriptsize{\textcolor{blue}{(+0.18)}}  & \cellcolor{green!10} \bf0.668 
 \scriptsize{\textcolor{blue}{(+0.20)}} &\cellcolor{green!10}  \bf30.38 \scriptsize{\textcolor{blue}{(+0.47)}}  \\
\cmidrule(r){2-6}
 & SEG~\cite{SEG} & 26.48 & 21.36 & 0.461 & 29.38 \\
\cmidrule(r){2-2}\cmidrule(r){3-6}
\rowcolor{green!10}  \cellcolor{white} &+\,Ours & \bf27.50 \scriptsize{\textcolor{blue}{(+1.02)}}  & \bf21.48 \scriptsize{\textcolor{blue}{(+0.12)}}  & \bf0.613 \scriptsize{\textcolor{blue}{(+0.15)}}  & \bf30.15 \scriptsize{\textcolor{blue}{(+0.77)}} \\
\bottomrule	        
\end{tabular}
\end{small}
}
\vspace{-1em}
\end{table}

\begin{table*}[t!]
\caption{Quantitative comparison across various datasets using 4-steps sampling with the guidance-distilled model.}
\vspace{-0.5em}
\label{tab_distill}
\centering
\resizebox{0.80\linewidth}{!}{
\begin{small}
\begin{tabular}{lccccccccc}
\toprule
 & \multicolumn{3}{c}{Drawbench~\cite{drawbench}} & \multicolumn{3}{c}{HPD~\cite{hpsv2}} & \multicolumn{3}{c}{Pick-a-pic~\cite{pick}} \\
\cmidrule(r){2-4}\cmidrule(r){5-7}\cmidrule(r){8-10}
Method &  CLIPScore~$\uparrow$ & PickScore~$\uparrow$ & ImageReward~$\uparrow$  &  CLIPScore~$\uparrow$ & PickScore~$\uparrow$ & ImageReward~$\uparrow$  &  CLIPScore~$\uparrow$ & PickScore~$\uparrow$ & ImageReward~$\uparrow$ \\
\cmidrule(r){1-1}\cmidrule(r){2-4}\cmidrule(r){5-7}\cmidrule(r){8-10}
Turbo~\cite{turbo} & 27.81 &22.11 &	0.555	& 29.06 &	22.39	& 0.733	& 27.41	& 21.75	& 0.625 \\
\cmidrule(r){1-1}\cmidrule(r){2-4}\cmidrule(r){5-7}\cmidrule(r){8-10}
\rowcolor{green!10}+\,Ours & \bf{28.55} \scriptsize{\textcolor{blue}{(+0.73)}}& \bf{22.18} \scriptsize{\textcolor{blue}{(+0.07)}}&	\bf{0.601} \scriptsize{\textcolor{blue}{(+0.05)}}&	\bf{29.56} \scriptsize{\textcolor{blue}{(+0.50)}}& \bf{22.44} \scriptsize{\textcolor{blue}{(+0.05)}}& \bf{0.754} \scriptsize{\textcolor{blue}{(+0.02)}}&\bf{27.92} \scriptsize{\textcolor{blue}{(+0.52)}}& \bf{21.77} \scriptsize{\textcolor{blue}{(+0.02)}}&	\bf{0.657} \scriptsize{\textcolor{blue}{(+0.03)}}
 \\
\midrule
Light~\cite{light} & 26.86	&22.30	&0.625	&28.77&	22.70&	0.931&	27.19&	22.03&	0.827
 \\
\cmidrule(r){1-1}\cmidrule(r){2-4}\cmidrule(r){5-7}\cmidrule(r){8-10}
\rowcolor{green!10}+\,Ours & \bf27.70 \scriptsize{\textcolor{blue}{(+0.84)}}&	\bf22.39 \scriptsize{\textcolor{blue}{(+0.09)}}&	\bf0.738 \scriptsize{\textcolor{blue}{(+0.11)}} &\bf	29.41 \scriptsize{\textcolor{blue}{(+0.64)}}&\bf	22.76 \scriptsize{\textcolor{blue}{(+0.06)}} & \bf1.011 \scriptsize{\textcolor{blue}{(+0.08)}} & \bf27.91 \scriptsize{\textcolor{blue}{(+0.72)}}&	\bf22.09	\scriptsize{\textcolor{blue}{(+0.06)}}&\bf0.891 \scriptsize{\textcolor{blue}{(+0.07)}}
 \\
\midrule
DMD2~\cite{dmd2} & 28.08 &	22.39&	0.829&	29.78	&22.55&	1.002&	28.14 &	21.88&	0.983
 \\
\cmidrule(r){1-1}\cmidrule(r){2-4}\cmidrule(r){5-7}\cmidrule(r){8-10}
\rowcolor{green!10}+\,Ours & \bf28.38 \scriptsize{\textcolor{blue}{(+0.30)}}&\bf	22.41 \scriptsize{\textcolor{blue}{(+0.02)}}&\bf0.919 
\scriptsize{\textcolor{blue}{(+0.09)}}&\bf	29.94 	\scriptsize{\textcolor{blue}{(+0.16)}}&\bf22.60 \scriptsize{\textcolor{blue}{(+0.05)}}&\bf	1.043 \scriptsize{\textcolor{blue}{(+0.04)}}&	\bf28.53  \scriptsize{\textcolor{blue}{(+0.39)}}&	\bf21.91 \scriptsize{\textcolor{blue}{(+0.03)}}&\bf	0.993 \scriptsize{\textcolor{blue}{(+0.01)}}
 \\
\midrule
Hyper~\cite{hyper} & 27.51&	22.53&	0.768&	29.27	&22.86	&1.123	&27.63	&22.15&	1.023
 \\
\cmidrule(r){1-1}\cmidrule(r){2-4}\cmidrule(r){5-7}\cmidrule(r){8-10}
\rowcolor{green!10}+\,Ours & \bf28.22 \scriptsize{\textcolor{blue}{(+0.71)}}&\bf	22.60 \scriptsize{\textcolor{blue}{(+0.07)}}&	\bf0.867 \scriptsize{\textcolor{blue}{(+0.10)}}&\bf	29.80 \scriptsize{\textcolor{blue}{(+0.53)}}	&\bf22.96 \scriptsize{\textcolor{blue}{(+0.10)}}&	\bf1.184 \scriptsize{\textcolor{blue}{(+0.06)}}&	\bf28.27 \scriptsize{\textcolor{blue}{(+0.64)}}&	\bf22.23 \scriptsize{\textcolor{blue}{(+0.08)}}&\bf	1.111 \scriptsize{\textcolor{blue}{(+0.09)}}
 \\
\bottomrule	        
\end{tabular}
\end{small}
}
\vspace{-0.5em}
\end{table*}

\begin{figure*}[t!]
\centering
\includegraphics[width=0.80\linewidth]{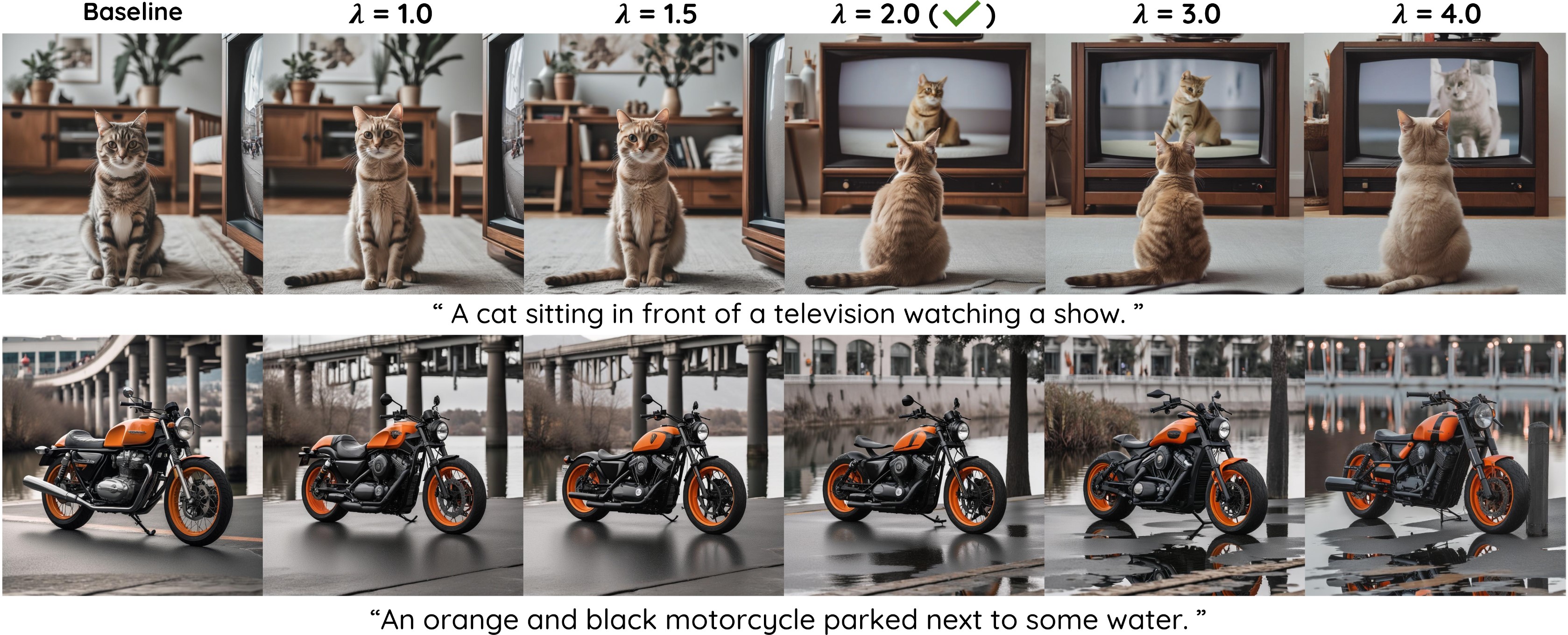}
\vspace{-0.5em}
\caption{Qualitative comparison by varying the scale $\lambda$. 
 As the scale $\lambda$ increases, images represent improved plausibility and enhanced text alignment. But too high a value leads to smoother textures and potential artifacts, similar to those seen in CFG. When $\lambda$ is greater than 0, our PLADIS method is applied. In our configuration, $\lambda$ is set to 2.0.}\label{fig:lambda}
 \vspace{-1.5em}
\end{figure*}

\section{Results}
\noindent\textbf{Results with Guidance Sampling}\label{sec:guidance}
To rigorously evaluate the effectiveness of our method, we generate 30K samples both with and without CFG, applying various guidance sampling techniques, including PAG~\cite{PAG} and SEG~\cite{SEG}. In this setup, we use 25 sampling steps, and detail setting are available in supplement~\ref{sec:metric_detail}. As shown in Tab.~\ref{tab_main}, the use of PLADIS without any additional guidance sampling noticeably enhances visual quality, text alignment, and user preference. Furthermore, our method integrates seamlessly with different guidance approaches, offering straightforward yet impactful improvements when CFG and weak model guidance are used together. To further substantiate these findings, we conducted experiments on a human preference dataset, as illustrated in Tab.~\ref{tab_hps}. Our analysis reveals that ours consistently delivers substantial performance gains across all metrics and guidance techniques.
 Furthermore, the synergy between our method and existing guidance methods results in more visually appealing outputs and improved text-image coherence, as shown in Fig.~\ref{fig:main} and ~\ref{fig:lambda}. Further comparisons are provided in supplement~\ref{sec:add_example}. 
Notably, the combination of PLADIS with CFG and PAG provide superior results, establishing itself as a leading candidate among guidance approaches.

\noindent\textbf{Unleashing restrained concepts} In \cref{fig:lambda}, the baseline model does not produce the concepts correctly. It initially appears that the concept (spatial relation) is difficult for the model to learn and that a superior model is required to generate such concepts. However, the model already possesses knowledge of the relation; it merely fails to fully utilize its learned information. All we need is modifying inference steps to enable utilization, effectively surfacing the model's pre-existing knowledge and allowing it to fully realize and express previously latent concepts.

\noindent\textbf{Results on Guidance-Distilled Model} \label{sec:distill}
To validate the effectiveness of our method on the guidance-distilled model, we conduct experiments using various baselines with 4-steps sampling across different datasets, as shown in Tab.~\ref{tab_distill}. For the baselines, we employ several state-of-the-art methods, including SDXL-Turbo~\cite{turbo}, SDXL-Lighting (Light)~\cite{light}, Distribution Matching Distillation 2 (DMD2)~\cite{dmd2}, and Hyper-SDXL~\cite{hyper}. Notably, our method significantly enhances overall performance, particularly in terms of text alignment and human preference, across all baselines. The introduction of PLADIS improves the visual quality of samples compared to those produced by the baselines, as shown in Fig.~\ref{fig:main}. Furthermore, we observe that PLADIS also improves performance in one-step sampling. Due to space limitations, further examples and details are provided in the supplement~\ref{sec:add_onestep} and ~\ref{sec:add_example}.

\noindent\textbf{User Preference Study} 
Beyond the automated metrics, we aim to assess the practical effectiveness of PLADIS in terms of sample quality and prompt alignment. To evaluate human preference in these aspects, we have evaluators assess pairwise outputs from the model with and without PLADIS, associated with two questions. Fig. \ref{fig:user_study} presents the user study results. Notably, all guidance methods and distilled models with ours outperform those without ours in both image quality and prompt alignment. Especially, the models with ours significantly improve prompt coherence. Further details of the user preference study are available in supplement \ref{sec:user_study}.

\begin{figure}[t!]
\centering
\includegraphics[width=0.9\linewidth]{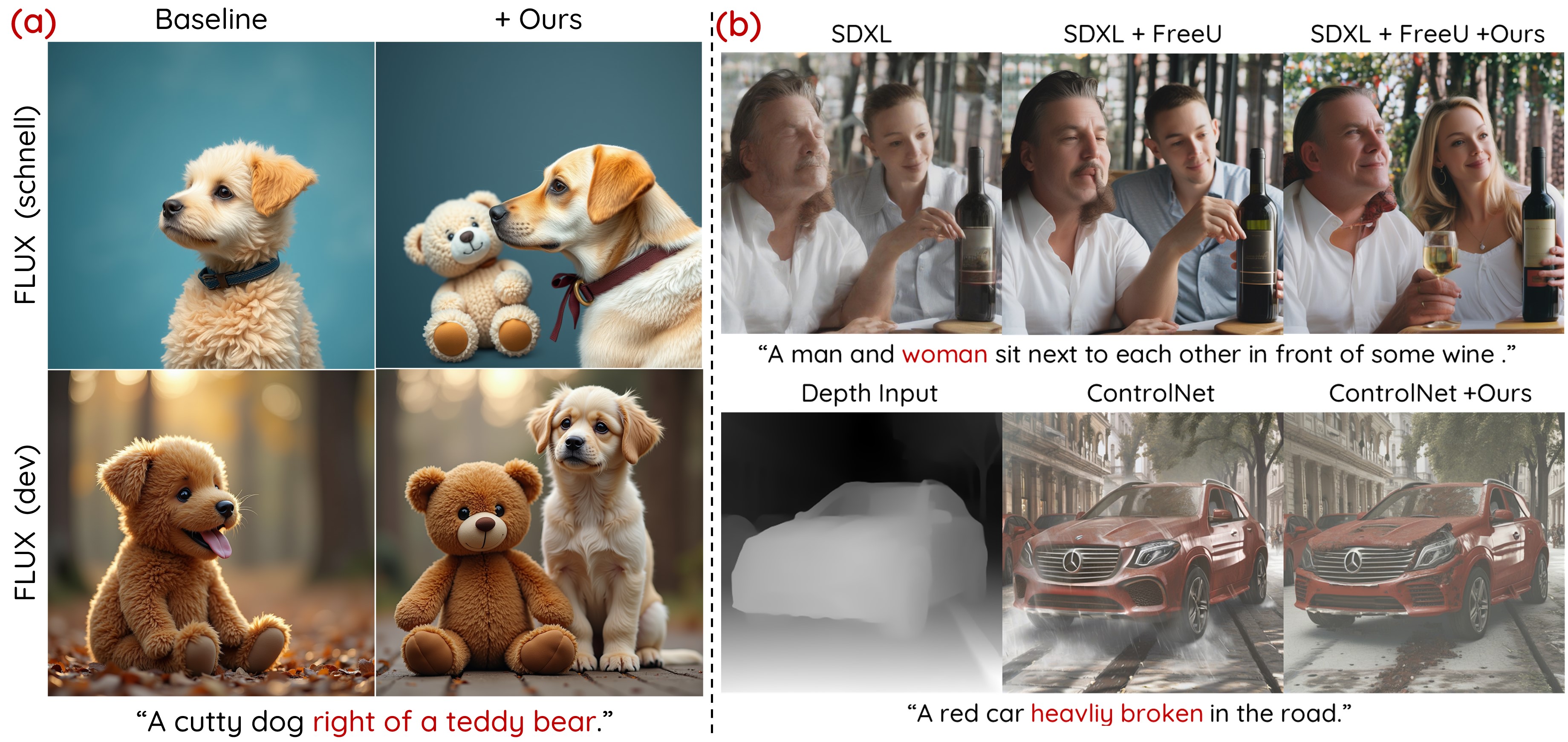}
\vspace{-1em}
\caption{(a) Comparison with and without our method in Flux. (b) Comparison results in FreeU and ControlNet.}\label{fig:exten}
\vspace{-1em}
\end{figure}


\noindent\textbf{Extension to Broader Frameworks and Backbones.}
We validate the generalizability of PLADIS across diverse backbones and inference settings. On the MMDiT backbone~\cite{stable3}, including Flux-Schnell and dev variants~\cite{flux2024}, PLADIS achieves notable gains on the Geneval benchmark~\cite{geneval}, which evaluates both visual quality and prompt alignment (Tab~\ref{tab:geneval}, Fig~\ref{fig:exten}(a)). PLADIS also complements FreeU~\cite{freeu}, enhancing fidelity and coherence when combined (Fig~\ref{fig:exten}(b, top); see also Supp. Tab~\ref{tab_another}).

Furthermore, we apply PLADIS to ControlNet~\cite{controlnet} for structure-guided generation. As shown in Fig~\ref{fig:exten}(b, bottom), PLADIS improves semantic accuracy, especially for complex prompts (e.g., “heavily broken red car”) that vanilla ControlNet tends to oversimplify. These results confirm PLADIS’s broad compatibility and effectiveness across models and inference-time methods.


\begin{table}[t!]
\caption{Ablation study on the $\alpha$ scale for $\alpha$$\texttt{-Entmax}$ with 25 steps. Inference time is measured per prompt.}
\vspace{-0.5em}
\label{tab_alpha}
\centering
\resizebox{0.93\linewidth}{!}{
\begin{small}
\begin{tabular}{cc|cccc|l|l}
\toprule
 $\alpha$ & 1  & 1.25 & 1.5 & 1.75 & 2 &  Ours($\alpha$ = 1.5) & Ours($\alpha$ = 2) \\
\cmidrule(r){1-1}\cmidrule(r){2-8}
FID $\downarrow$ & 33.76 & 32.13 & 31.53 & 31.11 & 30.87 & {27.87} \scriptsize{\textcolor{blue}{(-5.89)}} & {26.88} \scriptsize{\textcolor{blue}{(-6.88)}}  \\
CLIPScore $\uparrow$ & 25.41 & 25.76 & 25.87 & 25.91 & 25.95 & {26.41} \scriptsize{\textcolor{blue}{(+1.00)}} & {26.56} \scriptsize{\textcolor{blue}{(+1.15)}} \\
ImageReward $\uparrow$ & 0.478 & 0.617 & 0.647 & 0.653 & 0.648 & {0.726} \scriptsize{\textcolor{blue}{(+0.25)}} & {0.649} \scriptsize{\textcolor{blue}{(+0.001)}} \\
\cmidrule(r){1-1}\cmidrule(r){2-8}
Inference Time (sec) $\downarrow$ & \bf{2.521} & 9.172 & 3.085 & 9.097 & 2.785 & 3.087 \scriptsize{\textcolor{red}{(+0.56)}}  & 2.788 \scriptsize{\textcolor{red}{(+0.28)}}\\
Memory (G) $\downarrow$ & \bf{16.44} & 16.56 & 16.45 & 16.56 & 16.45 & 16.45 \scriptsize{\textcolor{red}{(+0.01)}} & 16.45 \scriptsize{\textcolor{red}{(+0.01)}}\\
\bottomrule	        
\end{tabular}
\end{small}
}
\vspace{-0.5em}
\end{table}

\begin{table}[!t]
\centering
\begin{minipage}[t]{0.48\linewidth}
\caption{Quantative comparison on Geneval.} 
\vspace{-0.5em}
\centering
\label{tab:geneval}
\resizebox{0.9\linewidth}{!}{
\begin{small}
\begin{tabular}{l c}
\toprule
Method & Overall Score \\
\midrule
FLUX (schnell)        & 0.671 \\
\cmidrule(r){1-1}\cmidrule(r){2-2}
\rowcolor{green!10} + Ours  & \bf 0.713  \\
\cmidrule(r){1-1}\cmidrule(r){2-2}
FLUX (dev)        & 0.666 \\
\cmidrule(r){1-1}\cmidrule(r){2-2}
\rowcolor{green!10} + Ours  & \bf 0.691 \\
\bottomrule
\end{tabular}
\end{small}
}
\end{minipage}
\hfill
\begin{minipage}[t]{0.5\linewidth}
\captionof{figure}{User Preference Study.}\label{fig:user_study}
\vspace{-1em}
\includegraphics[width=0.8\linewidth]{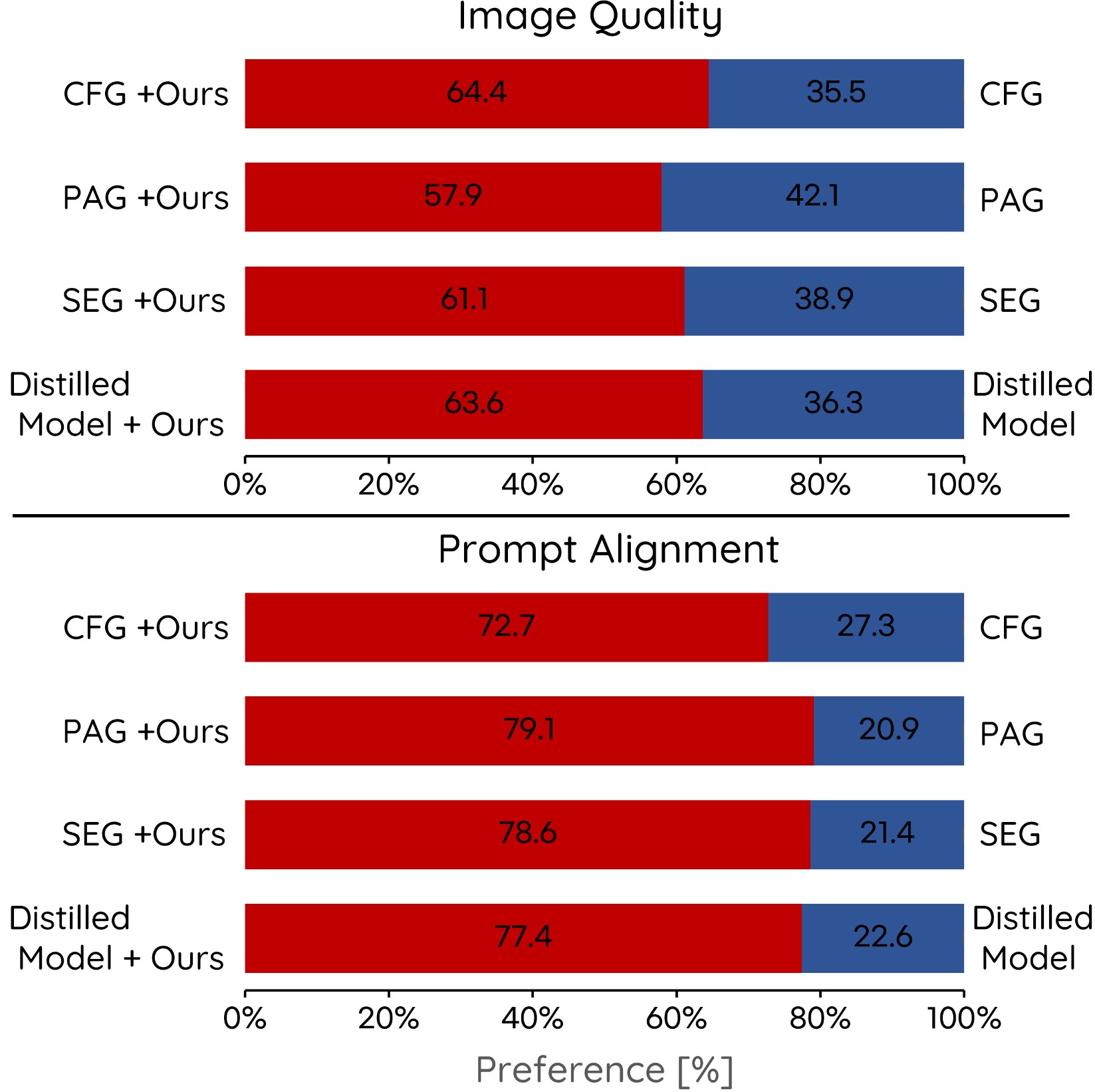}
\vspace{-1em}
\end{minipage}
\vspace{-0.5em}
\end{table}

\section{Ablation Study and Analysis}\label{ablation}
\noindent\textbf{The Effect of $\alpha$} We investigate the impact of $\alpha$ by adjusting its value in $\ent$, as shown in Fig.~\ref{fig:alpha} and Tab.~\ref{tab_alpha}. We generate 5K samples using CFG and PAG guidance on MS-COCO dataset. When $\alpha = 1$, this corresponds to baseline sampling with the $\texttt{Softmax}$ operation. For $\alpha > 1$, the cross-attention mechanism is replaced with the corresponding operation in $\ent$. Notably, introducing sparsity into cross-attention consistently enhances performance across all instances for $\alpha > 1$, supporting our theoretical findings on noise robustness of sparse attention in diffusion. 
In PLADIS, $\alpha$ values such as 1.5 and 2 are considered candidates.
Our approach ($\alpha$ = 2) provides the best performance in terms of FID and CLIPScore but obtains inferior results for ImageReward.
An $\alpha$ value of 1.5 offers balanced improvements across all metrics, making it our default setting.

\begin{figure}[t!]
\centering
\includegraphics[width=0.93\linewidth]{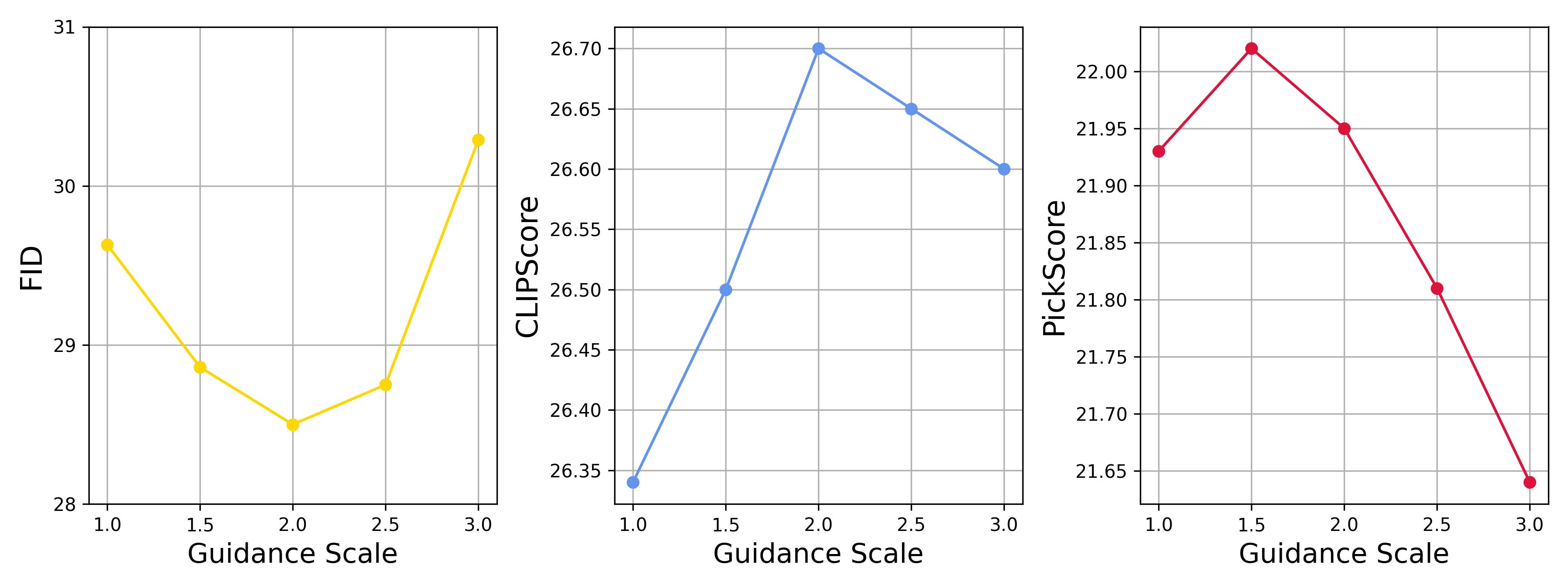}
\vspace{-1em}
\caption{Ablation study on the scale, $\lambda$, for PLADIS.}\label{fig:ablation}
\vspace{-1.5em}
\end{figure}

\noindent\textbf{Computation Cost} To evaluate the efficiency of PLADIS, we compare inference time and memory usage in VLAM by varying $\alpha$, as shown in Tab.~\ref{tab_alpha}. Unlike other guidance techniques, our PLADIS does not need extra inference at each time step, though it does involve calculating $\ent$ systematically. We observe that our method delivers the best performance while sacrificing minor processing time per prompt (0.56 seconds) and memory consumption (0.01 GB) compared to the baseline. Notably, our default setting ($\alpha$=1.5) is approximately 3$\times$ faster than other $\alpha$ values, except for $\alpha = 2$, and shows negligible differences compared to $\alpha = 1.5$ without PLADIS.

\noindent\textbf{The Scale $\lambda$} The scale $\lambda$ controls how much sparse attention with $\ent$ deviates from dense attention. A higher scale increases the influence of sparse attention relative to dense attention during denoising. In our empirical study, we sample 5K images with scales from 1.0 to 3.0, evaluating results using FID, CLIPScore, and PickScore (Fig.~\ref{fig:ablation}). Ours achieves peak performance at a scale of 2.0 for FID and CLIPScore, and at $\lambda = 1.5$ for PickScore. Additionally, increasing the value of ($\lambda$), the visual quality and text alignment are improved, as demonstrated in Figure \ref{fig:lambda}.
Based on these findings, we set the default configuration to ($\lambda$ = 2.0).

\noindent\textbf{$\beta$ and temperature} Besides the hyperparameters $\alpha$ and $\lambda$, we can alter $\beta$ (default = $1/\sqrt{d})$, which corresponds to $\ent$ with different temperatures (often referred to as inverse temperatures) \cite{sparsehop}. We find that our method is extendable to different $\beta$ (temperature). See supplement \ref{sec:temper}.


\section{Conclusion}
In this study, we introduce PLADIS, a novel approach to diffusion sampling that integrates the weight of sparse cross-attention, deviating from the dense cross-attention mechanism. Furthermore, by introducing a retrieval error bound in the case of 
$1 <\alpha \leq2$, we establish a connection between the noise robustness of sparse cross-attention in DMs. We provide in-depth analyses of sparsity in the cross-attention module for T2I generation. Building upon these analyses, we achieve significant improvements during inference time in generation across various guidance strategies and guidance-distilled models with our PLADIS. We believe PLADIS paves the way for future research in multimodal generation and alignment, with potential applications in domains requiring precise multimodal alignment via cross-attention.

\clearpage
\appendix
\onecolumn

\begin{center}
    \Large \textbf{\thetitle}\\
     {Supplementary Material}
\end{center}

\section{Supplementary Section} \label{sec:ablation}
In this supplementary document, we present the following:
\begin{itemize} 
\item Theoretical background on Hopfield energy networks and sparse Hopfield energy networks, the proof of the noise robustness in the intermediate cases, and the error bound of PLADIS in Section~\ref{sec:background}. 
\item Detailed description of the evaluation metrics and implementation in Section~\ref{sec:metric_detail}. 
\item Further detail and results of the user preference study in Section~\ref{sec:user_study}. 
\item Results for other backbone models including Stable Diffusion 1.5 and SANA, and combination with FreeU in Section~\ref{sec:add_other}. 
\item Results from one-step sampling with a guidance-distilled model in Section~\ref{sec:add_onestep}. 
\item Additional ablation studies, including attention temperature, cross-attention maps, the effect of layer selection, extrapolation strategy, and only sparse attention in Section~\ref{sec:add_ablation}. 
\item Additional qualitative results, including interactions with existing guidance sampling approaches, the guidance-distilled model, and further ablation studies in Section~\ref{sec:add_example}.

\end{itemize}

\section{Theoretical Background} \label{sec:background}

\paragraph{Notations.} For $a \in \Rd$, $a_{+}:=\max\{0,a\}$. For $\zb,\zb'\in \Rd^d$, $\langle \zb,\zb'\rangle=\zb^\intercal \zb'$ is the inner product of two vectors. For $\zb=(z_1,\dots,z_d)\in\Rd^d$, we denote the sorted coordinates of $\zb$ as $z_{(1)}\ge z_{(2)} \ge \dots \ge z_{(d)}$, that is, $z_{(\nu)}$ is the $\nu$'th largest element among $z_i$'s. $\Delta^M:=\{\pb \in \Rd^M| p_i\ge0, \sum p_i =1 \}$, $(M-1)$-dimensional simplex.

In this section, we provide the concept of modern Hopfield network and its sparse extension in simple form, to make readers fully understand the motivation and intuition of our method and encourage further research upon our works.

Initially, a Hopfiled model was introduced as an associative memory that can store binary patterns\cite{hopfield1982neural}.
The model is optimized to store patterns in the local minima of associated energy function.
Then, given query input, the closest local minimum point of the energy function is retrieved.
There were many extensions of the classic model to improve stability and capacity of the model, such as exponential energy functions or continuous state models\cite{demircigil2017model, krotov2016dense, barra2018new}.

Ramsauer et al. proposed modern Hopfield network that can be integrated into deep learning layers~\cite{hopfield}. The network is equipped with a new energy function $E$ and retrieval dynamics $\Tc$ that are differentiable and retrieve patterns after one update:
\begin{align}
E_{\texttt{Dense}}&: \Rd^d \to \Rd, \xb \mapsto -\texttt{lse}(\beta,\mbXi^{\top}\mbx) + \frac{1}{2} \langle \mbx,\mbx \rangle, \label{moder-energy}\\
\mt_{\texttt{Dense}}&: \Rd^d \to \Rd^d, \xb \mapsto \mbXi\texttt{Softmax}(\beta\mbXi^{\top}\mbx) \label{dense}
\end{align}
where $\xb \in \Rd^d$ represents a query input, $\Xib =[\xib_1 \dots \xib_M] \in \Rd^{d \times M}$, $\xib_i\in\Rd^d$ denotes a pattern stored, $\texttt{lse}(\beta,\mbz):=\log\left(\sum^M_{i=1}\exp(\beta z_i)\right) / \beta$ is log-sum-exponential function for $\beta > 0$ and $\texttt{Softmax}(\zb) := \frac{1}{\sum_{i=1}^d \exp(z_i)}(\exp(z_1),\dots,\exp(z_d))$, for $\zb \in \Rd^M$.
Theoretical results about the energy function and the retrieval dynamics including convergence, properties of states were proposed~\cite{hopfield}. 

\paragraph{{Connection with attention of the Transformer}}
Interesting connection between the update rule and self-attention mechanism used in transformer and BERT models was also proposed~\cite{hopfield}.
Specifically, we provide the detail derivation of this connection by following ~\cite{hopfield}. Firstly, we extend $\mt_{\texttt{Dense}}$ in Eq.~\ref{dense} to multiple queries $\Xb:= \{\xb_i\}_{i\in [N]}$.
Given any raw query $\Rb$ and memory matrix $\Yb$ that are input into Hopfield model, we calculate $\Xb$ and $\mbXi$ as $\Xb^{\top} = \Rb\Wb_Q:= \Qb, \mbXi^{\top}= \Yb\Wb_{K}:= \Kb$, using weight matrices, $\Wb_Q, \Wb_K$. Therefore, we rewrite $\mt_{\texttt{Dense}}$ as $ \Kb^{\top}\texttt{Softmax}(\beta\Kb\Qb^{\top})$.

Then, by taking transpose and projecting $\Kb$ to $\Vb$ with $\Wb_{V}$,  we have 
\begin{align}
   \mt_{\texttt{Dense}} : \Xb \mapsto  \texttt{Softmax}(\beta\Qb\Kb^{\top})\Kb\Wb_{V} = \texttt{Softmax}(\beta\Qb\Kb^{\top})\Vb, 
  \label{re2}
\end{align}
which is exactly transformer self-attention with $\beta = 1/\sqrt{d}$. In other words, we obtain by employing the notations in the~\cref{dense-atten},
\begin{align}
   \mt_{\texttt{Dense}} : \Xb \mapsto   \texttt{Softmax}(\Qb\Kb^{\top}/\sqrt{d})\Vb:= \Att(\Qb, \Kb, \Vb) 
   = \Att(\Wb_Q \Xb, \Wb_K \Xb, \Wb_V \Xb)
   \label{re3}
\end{align}

However, we can extend the interpretation to a cross-attention mechanism:
\[
\mt_{\texttt{Dense}} : (\Xb, \Yb) \mapsto 
\texttt{Softmax}\left(\Xb \Wb_Q  \Wb_K^{\top}\Yb^{\top} /\sqrt{d}\right)\Yb \Wb_V = \Att(\Wb_Q \Xb, \Wb_K \Yb, \Wb_V \Yb)
\]

We find similarity in the above cross-attention formula with inputs $\Xb, \Yb$ and weight matrices $\Wb_Q, \Wb_K, \Wb_V$. As discussed in lines of this paper, we focus on this extension into the cross-attention mechanism.

In terms of modern Hopefield network, the input query is processed with additional transformation $\Wb_Q$ to increase complexity of network and inner product are computed with stored (learned) $\Wb_K \Yb$ patterns (keys). Then, the retrieved patterns (values) for next layers are computed. Different layers can have different patterns, so hierarchical patterns are stored and retrieved in deep layers. Note that while Hopfield network outputs one pattern, the attention yields multiple patterns, so attention corresponds to stack of outputs of Hopfield network. Hence, the attention is multi-level and multi-valued Hopfield network.

\paragraph{Sparse Hopfield Network}
Later, sparse extensions of the modern Hopfield network are proposed~\cite{sparsehop, stanhop}.
The energy function was modified to make sparse the computation of retrieval dynamics:
\begin{align}
E_{\alpha} &: \Rd^d \to \Rd, \mbx \mapsto -\mathbf{\Psi}^{\star}_{\alpha}(\beta,\mbXi^{\top}\mbx) + \frac{1}{2} \langle \mbx,\mbx \rangle, \\
 \mt_{\alpha} &: \Rd^d \to \Rd^d, \mbx \mapsto \mbXi\alpha\texttt{-Entmax}(\beta\mbXi^{\top}\mbx), 
\end{align}
and $\mathbf{\Psi}^{\star}_{\alpha}$ is the convex conjugate of Tsallis entropy~\cite{tsallis}, $\mathbf{\Psi}_\alpha, \alpha\texttt{-Entmax}(\mathbf{z})$,  represents the probability mapping:
\begin{align}
&\Psi_{\alpha}(\mathbf{p}):=
\begin{cases}
\frac{1}{\alpha(\alpha-1)}\sum^{M}_{i=1} (p_i - p^{\alpha}_i), \; &\alpha \neq 1, \\
-\sum^{M}_{i=1} (p_i - \log p_i), &\alpha = 1, 
\end{cases}
\end{align}
\begin{align}
\alpha\texttt{-Entmax}(\mbz) := \underset {\mbp\in \Delta^{M}}{\argmax} [\langle\mbp,\mbz\rangle-\Psi_{\alpha}(\mbp) ],
\end{align}
where $\mathbf{p} \in \mathbb{R}^{M}$.
Here, $\alpha$ controls the sparsity.
When $\alpha=1$, it is equivalent to a dense probability mapping, $1\texttt{-Entmax} = \texttt{Softmax}$, and as $\alpha$ increases towards $2$, the outputs of $\alpha\texttt{-Entmax}$ become increasingly sparse, ultimately converging to $2\texttt{-Entmax} \equiv \texttt{Sparsemax}(\zb):= \underset{\pb \in \Delta^{M}}{\argmin} \norm{\pb - \zb }$~\cite{sparsemax}.
Notably, when $\alpha = 1$, $\mt_{\alpha}$ becomes equivalent to $\mt_{\texttt{Dense}}\equiv \Tc_1$~\cite{varsoftmax}.
We have simple formula for $\alpha\texttt{-Entmax}$\cite{sparsemax}.
There is a unique threshold function $\tau: \Rd^M \to \Rd$ that satisfies
\begin{align}
    \alpha\texttt{-Entmax}(\zb) = [(\alpha-1)\zb - \tau(\zb) \boldsymbol{1}]_+^{1/(\alpha-1)}. \label{eq:entmax}
\end{align}
From this formula, we know that the entries less than $\tau/(\alpha-1)$ map to zero, so sparsity is achieved.
We will denote the number of nonzero entries in $\alpha\texttt{-Entmax}$ as $\kappa(\zb)$ for later use to derive theoretical results.
For $\alpha = 2$, the exact solution can be efficiently computed using a sorting algorithm~\cite{held1974validation,michelot1986finite}.
For $1 < \alpha < 2$, inaccurate and slow iterative algorithm was used for computing $\alpha\texttt{-Entmax}$~\cite{liu2009efficient}.
Interestingly, for $1.5\texttt{-Entmax}$, an accurate and exact solution are derived in a simple form~\cite{entmaxx}.

Similar to $\mt_{\texttt{Dense}}$, $\Tca$ can be extended to attention mechanisms, establishing a strong connection with sparse attention.
In other words, by following the derivation as provided in ~\cref{re2}, and \cref{re3}, we can obtain
\begin{align}
       \Tca : \Xb \mapsto   \ent(\Qb\Kb^{\top}/\sqrt{d})\Vb:= \Att_{\alpha}(\Qb, \Kb, \Vb)
\end{align}
Furthermore, similar to the dense attention mechanism, we can also extend into a cross-attention mechanism with inputs $\Xb$ and $\Yb$:
\[
\Tca : (\Xb, \Yb) \mapsto 
\ent\left(\Xb \Wb_Q  \Wb_K^{\top}\Yb^{\top} /\sqrt{d}\right)\Yb \Wb_V = \Att_{\alpha}(\Wb_Q \Xb, \Wb_K \Yb, \Wb_V \Yb)
\]

\paragraph{Noise robustness of sparse Hopfield network}
In SHN, sparsity reduces retrieval errors and provide faster convergeness compared to dense retrieval dynamics~\cite{sparsehop,stanhop}.
While the sparse extension is an efficient counterpart of dense Hopfield network, it has been discovered that there is more advantages to use sparse one besides efficiency~\cite{sparsehop,stanhop}. 

\begin{definition}[Pattern Stored and Retrieved]
    Suppose every pattern $\xib_\mu$ is contained in a ball $B_\mu$. We say that $\xib_\mu$ is stored if there is a single fixed point $\xb_i^* \in B_\mu$, to which all point $\xb\in B_\mu$ converge, and $B_\mu$'s are disjoint. We say that $\xib_\mu$ is retrieved for an error $\epsilon$ if $||\Tc(\xb)-\xib_\mu||\le\epsilon$ for all $\xb\in B_\mu$
\end{definition}
For following theorems, $m:= \max_\nu ||\xib_\nu||.$
\begin{thm}[Retrieval Error]{\cite{hopfield,sparsehop,stanhop}}
    Let $\Tca$ be the retrieval dynamics of Hopfield model with $\ent$.
    \begin{align}
        &\mbox{For } \alpha = 1,
        &||\Tca(\xb)-\xib_\mu|| &\le
        2m (M-1) \exp \left\{ -\beta \left( \langle\xib_\mu, \xb \rangle - \max_\nu \langle \xib_\mu, \xib_\nu \rangle \right) \right\} . \label{eq:ineq-hop}\\
        &\text{For } \alpha = 2,
        &||\Tca(\xb)-\xib_\mu|| &\le m + m \beta \left[ \kappa \left( \max_\nu \langle \xib_\nu, \xb \rangle - \left[\Xib^\intercal \xb \right]_{(\kappa)} \right) + \frac{1}{\beta}\right] . \label{eq:ineq-sparse}\\
        &\text{For } \alpha > \alpha',
        &|| \Tca(x) - \xib_\mu || &\le ||\Tc_{\alpha'} - \xi || . \label{eq:ineq-mono}
    \end{align}
\end{thm}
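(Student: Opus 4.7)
The plan is to treat the three inequalities as separate sub-proofs sharing a common decomposition scaffold. Starting from the update rule $\Tca(\xb) = \Xib\, \ent(\beta \Xib^\intercal \xb)$ and writing $\pb := \ent(\beta \Xib^\intercal \xb) \in \Delta^M$, the identity $\xib_\mu = \xib_\mu \sum_\nu p_\nu$ yields
\begin{align*}
\Tca(\xb) - \xib_\mu = \sum_{\nu \ne \mu}(\xib_\nu - \xib_\mu)\, p_\nu,
\end{align*}
so that $\norm{\Tca(\xb)-\xib_\mu} \le 2m(1-p_\mu)$ by the triangle inequality and $\norm{\xib_\nu-\xib_\mu} \le 2m$. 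Each of the three bounds then reduces to controlling the off-target mass $1-p_\mu$ (or, more refined, the sum of off-target probabilities) under the specific shape induced by $\ent$ at that $\alpha$.

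For $\alpha=1$ (classical modern Hopfield, \cite{hopfield}), I would expand $p_\nu/p_\mu = \exp(\beta \langle \xib_\nu - \xib_\mu, \xb\rangle)$ and bound the exponent via the separation term $\langle \xib_\mu, \xb\rangle - \max_\nu \langle \xib_\mu, \xib_\nu\rangle$ using a standard Cauchy--Schwarz/triangle argument of the form $\langle \xib_\nu, \xb\rangle \le \langle \xib_\mu, \xib_\nu\rangle + \langle \xib_\nu, \xb - \xib_\mu\rangle$. Summing the resulting exponential upper bounds across the $M-1$ off-target indices produces the factor $2m(M-1)$ and yields the first inequality.

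For $\alpha=2$ (Sparsemax, \cite{sparsehop}), the closed form from Eq.~\ref{eq:entmax} gives $p_\nu = [\beta \langle \xib_\nu, \xb\rangle - \tau]_+$, with exactly $\kappa$ nonzero entries. The constraint $\sum_\nu p_\nu = 1$ pins down $\tau$ via the sorted order statistics $[\Xib^\intercal \xb]_{(1)} \ge \cdots \ge [\Xib^\intercal \xb]_{(\kappa)}$, so each surviving coordinate is at most $\beta(\max_\nu \langle \xib_\nu,\xb\rangle - [\Xib^\intercal\xb]_{(\kappa)}) + 1/\kappa$. Summing over the support and pushing $\norm{\xib_\nu}\le m$ outside the sum gives the displayed $m + m\beta[\kappa(\cdot) + 1/\beta]$ bound. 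For the monotonicity inequality, I would use the variational characterization of $\ent$ via $\Psi_\alpha$ to show that increasing $\alpha$ concentrates probability mass on the top-ranked patterns: since $\Psi_\alpha$ interpolates monotonically between the Shannon and Gini entropies, the optimizer $\pb$ becomes sparser as $\alpha$ grows, and provided $\xib_\mu$ remains in the support (guaranteed when $\xb$ is close enough to be retrieved), the resulting concentration reduces $1 - p_\mu$, so the decomposition above inherits the monotonicity.

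The main obstacle will be the $\alpha=2$ case, where the support size $\kappa$ itself depends on the threshold $\tau$ and on the sorted inner products, forcing a joint combinatorial/analytic argument that echoes the finite simplex-projection algorithms of \cite{held1974validation,michelot1986finite}; keeping the bound tight for intermediate supports, rather than having it blow up when $\kappa$ is small, is the delicate step. A secondary difficulty is the monotonicity: although sparsity increases with $\alpha$, the supports of $\Tca$ and $\Tc_{\alpha'}$ are not necessarily nested, so a direct coupling fails and one must instead route the comparison through the convex conjugate $\Psi_\alpha^\star$ and a Danskin-type envelope argument, or else invoke the intermediate error bound from Theorem~\ref{theorem-noise} as a uniform dominating function interpolating between the two endpoints.
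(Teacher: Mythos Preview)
The paper does not prove this theorem. Immediately after stating it, the paper writes: ``You can find the result \eqref{eq:ineq-hop} in \cite{hopfield}, \eqref{eq:ineq-sparse} in \cite{sparsehop}, and \eqref{eq:ineq-mono} in \cite{sparsehop, stanhop}.'' The statement is quoted as background from prior work; the paper's own contribution begins with the next theorem (Retrieval Error~2, the intermediate $1<\alpha\le 2$ case), which \emph{is} proved. So there is no in-paper proof to compare your proposal against.

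That said, two remarks on how your scaffold lines up with what the paper \emph{does} prove nearby. First, your opening decomposition $\Tca(\xb)-\xib_\mu=\sum_{\nu\ne\mu}(\xib_\nu-\xib_\mu)p_\nu$ giving $\norm{\Tca(\xb)-\xib_\mu}\le 2m(1-p_\mu)$ matches the $\alpha=1$ literature, but the paper's own proof of the intermediate case uses the cruder triangle split $\norm{\Xib\pb-\xib_\mu}\le\norm{\xib_\mu}+\sum_\nu\norm{\xib_\nu}p_\nu$, which is what produces the leading additive $m$ visible in both \eqref{eq:ineq-sparse} and \eqref{eq:ineq-new1}. Your fine decomposition will not land on the displayed $\alpha=2$ bound as written; you would have to switch bounds mid-argument. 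Second, your $\alpha=1$ sketch has a gap: the target exponent contains $\max_\nu\langle\xib_\mu,\xib_\nu\rangle$, a memory--memory inner product, whereas $p_\nu/p_\mu$ is governed by $\langle\xib_\nu,\xb\rangle$, a memory--query inner product. Bridging these requires the assumption that $\xb$ lies in the sphere $B_\mu$ around $\xib_\mu$ (Definition of ``Pattern Stored and Retrieved''), which is implicit in \cite{hopfield} but absent from your chain of inequalities.
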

You can find the result \cref{eq:ineq-hop} in \cite{hopfield}, \cref{eq:ineq-sparse} in \cite{sparsehop}, and \cref{eq:ineq-mono} in \cite{sparsehop, stanhop}.

\begin{corollary}(Noise-Robustness)~\cite{sparsehop, stanhop}.
In case of noisy patterns with noise $\boldsymbol{\eta}$, i.e. $\tilde{\mbx} = \mbx + \boldsymbol{\eta}$ (noise in query) or $\tilde{\mbxi}_{\mu} = \mbxi_{\mu} + \boldsymbol{\eta}$ (noise in memory), the impact of noise $\boldsymbol{\eta}$ on the sparse retrieval error $||\Tc_2(\xb) - \xib_\mu|$ is linear, while its effect on the dense retrieval error $||\Tc_1(\xb) - \xib_\mu||$ is exponential.
\end{corollary}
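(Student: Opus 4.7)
The plan is to invoke the retrieval error bounds stated earlier in Theorem~2 as black boxes and track how $\boldsymbol{\eta}$ propagates through them. The key observation is that both bounds depend on $\xb$ and the memory patterns $\xib_\nu$ only through inner products of the form $\langle \xib_\nu, \xb\rangle$, which are linear in their arguments. So substituting a noisy query $\tilde{\xb} = \xb + \boldsymbol{\eta}$ (or a noisy memory $\tilde{\xib}_\mu = \xib_\mu + \boldsymbol{\eta}$) cleanly decomposes the inner products into a noise-free part plus a linear-in-$\boldsymbol{\eta}$ perturbation, which can be bounded by Cauchy--Schwarz by $m\|\boldsymbol{\eta}\|$ where $m = \max_\nu \|\xib_\nu\|$. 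The difference between the two regimes then reduces to the functional form in which this perturbation appears.

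For the dense case $\alpha = 1$, I would take inequality~\eqref{eq:ineq-hop} and substitute $\xb \mapsto \xb + \boldsymbol{\eta}$, which gives
\begin{align*}
\|\Tc_1(\xb+\boldsymbol{\eta})-\xib_\mu\| &\le 2m(M-1)\exp\!\left\{-\beta\!\left(\langle\xib_\mu,\xb\rangle+\langle\xib_\mu,\boldsymbol{\eta}\rangle-\max_{\nu\neq\mu}\langle\xib_\mu,\xib_\nu\rangle\right)\right\} \\
&= C_0 \exp\!\left\{-\beta\langle\xib_\mu,\boldsymbol{\eta}\rangle\right\},
\end{align*}
where $C_0$ denotes the noise-free bound. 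Since $\langle \xib_\mu, \boldsymbol{\eta}\rangle$ can scale like $m\|\boldsymbol{\eta}\|$ in the worst direction, the noise appears inside an exponential, producing an $\exp(\beta m \|\boldsymbol{\eta}\|)$-type blow-up. The memory-noise case is symmetric: $\xib_\mu \mapsto \xib_\mu + \boldsymbol{\eta}$ affects both $\langle\xib_\mu,\xb\rangle$ and $\max_\nu \langle \xib_\mu, \xib_\nu \rangle$ linearly, and both shifts still sit inside the same exponential.

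For the sparse case $\alpha = 2$, I would take inequality~\eqref{eq:ineq-sparse} and again substitute $\xb \mapsto \xb+\boldsymbol{\eta}$. The terms $\max_\nu \langle \xib_\nu, \xb\rangle$ and the $\kappa$-th order statistic $[\Xib^\intercal \xb]_{(\kappa)}$ are each $1$-Lipschitz in $\xb$ with respect to the direction $\xib_\nu/\|\xib_\nu\|$, so each shifts by at most $m\|\boldsymbol{\eta}\|$ under $\xb \mapsto \xb + \boldsymbol{\eta}$. Feeding this into the bound yields
\begin{equation*}
\|\Tc_2(\xb+\boldsymbol{\eta})-\xib_\mu\| \le C_1 + C_2\, m\|\boldsymbol{\eta}\|,
\end{equation*}
where $C_1, C_2$ depend only on $m,\beta,\kappa$ but not on $\boldsymbol{\eta}$. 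Since the noise sits outside any exponential, the dependence is linear. The memory-noise case follows by perturbing the columns of $\Xib$ entering $\max_\nu\langle\xib_\nu,\xb\rangle$ and $[\Xib^\intercal \xb]_{(\kappa)}$, which again shifts them by at most $\|\xb\|\|\boldsymbol{\eta}\|$, and we fold that constant into $C_2$.

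The main obstacle I anticipate is in the sparse case: the active support of $2$-\texttt{Entmax} (equivalently the value of $\kappa$) can itself jump as $\boldsymbol{\eta}$ varies, since $\kappa$ is determined by a thresholding condition~\eqref{eq:entmax} that is discontinuous in the input. One needs to argue that such jumps do not destroy the linear bound, either by showing $\kappa$ is bounded by $M$ uniformly and applying the worst-case estimate, or by invoking monotonicity~\eqref{eq:ineq-mono} to replace any intermediate $\kappa$ with a noise-independent constant. Once this technicality is handled, assembling the two estimates gives the linear-versus-exponential dichotomy claimed by the corollary.
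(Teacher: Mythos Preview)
The paper does not supply its own proof of this corollary: it is stated as a direct consequence of the retrieval-error bounds \eqref{eq:ineq-hop} and \eqref{eq:ineq-sparse} and attributed to \cite{sparsehop,stanhop}. Your approach---substituting $\xb\mapsto\xb+\boldsymbol{\eta}$ (or $\xib_\mu\mapsto\xib_\mu+\boldsymbol{\eta}$) into those two bounds and reading off the functional dependence on $\boldsymbol{\eta}$---is exactly the intended derivation and is correct. The concern you raise about $\kappa$ jumping is legitimate but easily dispatched by your own suggestion: since $\kappa\le M$ uniformly, one may replace $\kappa$ in \eqref{eq:ineq-sparse} by $M$ before perturbing, yielding a noise-independent constant in front of the linear term.
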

\noindent where $\mbxi_{\mu}$ is memory pattern and to be considered stored at a fixed point of $\mt$. This theorem suggests that under noisy conditions, sparse attention mechanisms governed by $\Tca$ with $\alpha>1$ exhibit superior noise robustness compared to standard dense attention.  Critically, increasing sparsity (via higher $\alpha$) further diminishes retrieval errors.

We propose a new theoretical result that completes above theorem by providing error estimation for all intermediate cases that was not given.
\begin{thm}[Retrieval Error 2] \label{thm:error-new}
    Let $\Tca$ be the retrieval dynamics of Hopfield model with $\ent$. 
    \begin{align}
        &\text{For } 1<\alpha \le 2,
        &||\Tca(\xb)-\xib_\mu|| &\le m +
        m \kappa \left[ (\alpha-1)\beta  \left( \max_\nu \langle \xib_\nu, \xb \rangle - \left[\Xib^\intercal \xb \right]_{(\kappa+1)} \right)\right]^{\frac{1}{\alpha-1}},\label{eq:ineq-new1}
    \end{align}
    Here, we abuse the notation $\left[\Xib^\intercal \xb \right]_{(M+1)}:= \left[\Xib^\intercal \xb \right]_{(M)} - M^{1-\alpha}/(\alpha-1)$.
\end{thm}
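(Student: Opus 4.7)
The plan is to combine the closed-form expression for $\ent$ given in Eq.~\ref{eq:entmax} with the triangle inequality, converting a uniform entrywise upper bound on the coordinates of $\mbp := \ent(\beta \Xib^\intercal \xb)$ into the claimed norm estimate. Throughout, let $\mathcal{S}$ denote the support of $\mbp$, so $|\mathcal{S}| = \kappa$.

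First I would apply the triangle inequality together with nonnegativity of $\mbp$. Since $\Tca(\xb) = \Xib \mbp = \sum_{i \in \mathcal{S}} p_i \xib_i$,
\begin{equation*}
\norm{\Tca(\xb) - \xib_\mu} \le \norm{\xib_\mu} + \sum_{i \in \mathcal{S}} p_i \norm{\xib_i} \le m + m \sum_{i \in \mathcal{S}} p_i,
\end{equation*}
so the task reduces to upper bounding each $p_i$, for $i \in \mathcal{S}$, by $[(\alpha-1)\beta(\max_\nu \langle \xib_\nu, \xb\rangle - [\Xib^\intercal \xb]_{(\kappa+1)})]^{1/(\alpha-1)}$, after which summing $\kappa$ copies delivers the stated inequality.

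Next I would invoke Eq.~\ref{eq:entmax}: for $i \in \mathcal{S}$, $p_i = [(\alpha-1)\beta \langle \xib_i, \xb\rangle - \tau]^{1/(\alpha-1)}$, where $\tau$ is the unique normalizing threshold. In the strictly sparse regime $\kappa < M$, the pattern ranked $(\kappa+1)$-th by inner product with $\xb$ lies outside $\mathcal{S}$, which forces $(\alpha-1)\beta [\Xib^\intercal \xb]_{(\kappa+1)} - \tau \le 0$ and hence $\tau \ge (\alpha-1)\beta [\Xib^\intercal \xb]_{(\kappa+1)}$. Substituting this and using $\langle \xib_i,\xb\rangle \le \max_\nu \langle \xib_\nu, \xb\rangle$ yields the desired entrywise bound, completing the argument in this regime.

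The delicate step is the dense output $\kappa = M$, when no index lies outside the support. I would fall back on a pigeonhole argument: since the $M$ positive entries of $\mbp$ sum to one, the smallest satisfies $p_{(M)} \le 1/M$, which through Eq.~\ref{eq:entmax} translates into $\tau \ge (\alpha-1)\beta [\Xib^\intercal \xb]_{(M)} - M^{1-\alpha}$; this is precisely the role of the notational convention $[\Xib^\intercal \xb]_{(M+1)} := [\Xib^\intercal \xb]_{(M)} - M^{1-\alpha}/(\alpha-1)$, which allows both regimes to be stated uniformly. I expect this boundary case, along with carefully tracking the $(\alpha-1)\beta$ prefactor between the pigeonhole estimate and the stated convention, to be the main obstacle; as a sanity check, our bound specialized to $\alpha=2$ should be consistent with the sparsemax estimate recalled in~\ref{eq:ineq-sparse}, and the monotonicity in~\ref{eq:ineq-mono} offers a second independent consistency test across $\alpha \in (1,2]$.
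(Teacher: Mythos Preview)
Your overall strategy matches the paper's proof exactly: apply the triangle inequality to isolate $\norm{\xib_\mu}$ plus $m\sum_{i\in\mathcal S}p_i$, then bound each coordinate of $\mbp=\ent(\beta\Xib^\intercal\xb)$ via the closed form $p_i=[(\alpha-1)z_i-\tau]^{1/(\alpha-1)}$, splitting into the cases $\kappa<M$ and $\kappa=M$. The $\kappa<M$ case is handled identically in both.

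The only genuine difference is the full-support case $\kappa=M$. The paper applies H\"older's inequality with exponents $p=\tfrac{1}{\alpha-1}$, $q=\tfrac{1}{2-\alpha}$ to $a_i=(\alpha-1)z_i-\tau$, $b_i=1$, obtaining $\tau/(\alpha-1)\ge \tfrac{1}{M}\sum_i z_i - M^{1-\alpha}/(\alpha-1)$, and then discards the mean in favour of $z_{(M)}$. Your pigeonhole argument (the smallest of $M$ nonnegative numbers summing to one is at most $1/M$) lands directly on $\tau/(\alpha-1)\ge z_{(M)}-M^{1-\alpha}/(\alpha-1)$, which is exactly the bound the paper ultimately uses. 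So your route is strictly more elementary and avoids the case split the paper needs at $\alpha=2$ (where H\"older with $q=\infty$ is handled separately); the paper's intermediate mean-based bound is marginally sharper but is immediately weakened anyway. Your remark about the $\beta$ prefactor in the convention for $[\Xib^\intercal\xb]_{(M+1)}$ is apt: the paper's own derivation, like yours, produces $\tau\ge(\alpha-1)\beta[\Xib^\intercal\xb]_{(M)}-M^{1-\alpha}$, so the stated convention is only literally consistent when $\beta=1$; this is a wrinkle in the theorem statement rather than in either proof.
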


Thanks to this new theorem, we can estimate the impact of noise on the sparse retrieval error for all $1<\alpha<2$.

\begin{corollary}(Noise-Robustness)
In case of noisy patterns with noise $\boldsymbol{\eta}$, the impact of noise $\boldsymbol{\eta}$ on the retrieval error $||\Tca(\xb) - \xib_\mu||$ is polynomial of order $\frac{1}{\alpha-1}$ for $1<\alpha \le 2$.
\end{corollary}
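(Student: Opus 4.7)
The plan is to combine the explicit threshold characterization of $\ent$ with a simple triangle-inequality bound on $\Tca(\xb) - \xib_\mu$. First I would write $\Tca(\xb) = \sum_{i=1}^M p_i \xib_i$ where $p := \ent(\beta \Xib^\intercal \xb)$, and let $S$ denote its support, of cardinality $\kappa$. Applying the triangle inequality gives
\[
\|\Tca(\xb) - \xib_\mu\| \;\le\; \|\xib_\mu\| + \sum_{i \in S} p_i \|\xib_i\| \;\le\; m + m \sum_{i \in S} p_i.
\]
So it suffices to produce a uniform upper bound on each nonzero $p_i$ and pay a factor of $\kappa$ for the sum.

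Next I would invoke the closed-form representation $p_i = [(\alpha-1)\beta\langle \xib_i, \xb\rangle - \tau]^{1/(\alpha-1)}$ that holds for every $i \in S$, with $\tau$ the common threshold. Since the right-hand side is monotone in $\langle \xib_i,\xb\rangle$, each nonzero entry is dominated by its value at the maximum inner product, giving the uniform bound $p_i \le [(\alpha-1)\beta\max_\nu \langle \xib_\nu, \xb\rangle - \tau]^{1/(\alpha-1)}$. What remains is a lower bound on $\tau$. When $\kappa < M$, any pattern realizing $[\Xib^\intercal \xb]_{(\kappa+1)}$ lies outside $S$, so its pre-threshold argument is non-positive, forcing $\tau \ge (\alpha-1)\beta [\Xib^\intercal \xb]_{(\kappa+1)}$. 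Substituting back and summing the $\kappa$ nonzero entries yields $\sum_{i\in S} p_i \le \kappa B$ with
\[
B = \bigl[(\alpha-1)\beta\bigl(\max_\nu \langle \xib_\nu, \xb\rangle - [\Xib^\intercal \xb]_{(\kappa+1)}\bigr)\bigr]^{1/(\alpha-1)},
\]
and plugging this into the first display produces exactly the inequality claimed in the theorem.

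The main obstacle is the edge case $\kappa = M$, where there is no $(\kappa+1)$-th pattern available to pin $\tau$ from below. Here I would replace the support argument by the simplex constraint $\sum_i p_i = 1$: the smallest nonzero entry $p_{(M)} = [(\alpha-1)\beta [\Xib^\intercal \xb]_{(M)} - \tau]^{1/(\alpha-1)}$ must satisfy $p_{(M)} \le 1/M$ (otherwise $\sum_i p_i$ would exceed one), and rearranging gives $\tau \ge (\alpha-1)\beta [\Xib^\intercal \xb]_{(M)} - M^{1-\alpha}$. This is precisely what the stated abuse $[\Xib^\intercal \xb]_{(M+1)} := [\Xib^\intercal \xb]_{(M)} - M^{1-\alpha}/(\alpha-1)$ encodes, so both regimes collapse into a single formula. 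The noise-robustness corollary then drops out for free: perturbing $\xb$ (or $\xib_\mu$) by $\boldsymbol{\eta}$ changes the gap $\max_\nu \langle \xib_\nu, \xb\rangle - [\Xib^\intercal \xb]_{(\kappa+1)}$ linearly in $\|\boldsymbol{\eta}\|$, and the outer exponent $1/(\alpha-1)$ lifts that linear change into a polynomial-of-order-$1/(\alpha-1)$ dependence, recovering Theorem~\ref{thm:1} as the $\alpha \to 2$ limit.
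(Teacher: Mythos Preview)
Your proposal is correct and mirrors the paper's proof of the retrieval-error theorem almost exactly: the same triangle-inequality decomposition, the same per-entry bound via the threshold $\tau$, and the same split into the cases $\kappa < M$ and $\kappa = M$. The one noteworthy difference is in the $\kappa = M$ case: the paper invokes H\"older's inequality (with exponents $1/(\alpha-1)$ and $1/(2-\alpha)$, plus a separate direct computation for $\alpha=2$) to lower-bound $\tau$, whereas your observation that the smallest entry of a probability vector on $M$ atoms is at most $1/M$ reaches the same bound in one line and handles all $1<\alpha\le 2$ uniformly. Both routes land on the identical inequality, and the corollary then follows in the same informal way the paper leaves it---the gap inside the bracket moves linearly in $\boldsymbol{\eta}$ and the outer exponent $1/(\alpha-1)$ converts this to the claimed polynomial order.
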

\paragraph{{Remark}} The proposed theorem includes the case $\alpha=2$. In that case, the right hand side becomes $$m \beta \left[\kappa \left( \max_\nu \langle \xib_\nu, \xb\rangle - [\Xib^\intercal \xb]_{(\kappa+1)} \right) \right].$$
Therefore, by combining with previous result, we obtain tighter bound:
\[
||\Tc_2(\xb)-\xib_\nu|| \le m \beta \left[\kappa  \max_\nu \langle \xib_\nu, \xb\rangle + \min \left\{ -\kappa [\Xib^\intercal \xb]_{(\kappa+1)}, -\kappa [\Xib^\intercal \xb]_{(\kappa)} + \frac{1}{\beta} \right\}  \right]
\]

\begin{proof}[proof of Thm. \ref{thm:error-new}]
    \begin{align}
        ||\Tca(\xb)-\xib_\mu|| &= \norm{ \Xib \alpha\texttt{-Entmax}\left(\beta \Xib^\intercal \xb\right) -\xib_\mu }
        = \norm{ \sum_{\nu=1}^\kappa \xib_{(\nu)}\left[\alpha\texttt{-Entmax}\left( \beta\Xib^\intercal \xb\right)\right]_{(\nu)} - \xib_\mu }\\
        &\le ||\xib_\mu|| + \sum_{\nu=1}^\kappa \norm{\xib_{(\nu)}}\left[\alpha\texttt{-Entmax}\left( \beta\Xib^\intercal \xb\right)\right]_{(\nu)}\\
        &\le m + m \sum_{\nu=1}^\kappa \left[ (\alpha-1)\left(\left[ \beta\Xib^\intercal \xb\right]_{(\nu)} - \left[ 
        \beta\Xib^\intercal \xb\right]_{(\kappa+1)} \right)\right]^{\frac{1}{\alpha-1}} \label{line-alpha}\\
        &\le m + m \kappa \max_\nu \left[ (\alpha-1)\beta \left(\langle \xib_\nu, \xb \rangle - \left[ \Xib^\intercal \xb\right]_{(\kappa+1)} \right)\right]^{\frac{1}{\alpha-1}}.
    \end{align}
    For \cref{line-alpha}, we use the following lemma.
\end{proof}
\begin{lem}\label{lem-alpha}
    For $\zb \in \Rd^M$ and $\nu \le \kappa(\zb)$, $[\alpha\texttt{-Entmax}(\zb)]_{(\nu)} \le [(\alpha-1)(z_{(\nu)} - z_{\left(\kappa+1\right)})]^{1/(\alpha-1)}$.
\end{lem}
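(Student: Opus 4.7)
}
The plan is to use the closed-form expression for $\alpha\texttt{-Entmax}$ from \cref{eq:entmax}, namely $\alpha\texttt{-Entmax}(\zb)=[(\alpha-1)\zb-\tau(\zb)\boldsymbol{1}]_+^{1/(\alpha-1)}$, and reduce the lemma to a lower bound on the threshold $\tau(\zb)$. First I would observe that this formula is coordinatewise monotone in $\zb$, so the $\nu$-th largest entry of $\alpha\texttt{-Entmax}(\zb)$ is achieved exactly at the $\nu$-th largest coordinate of $\zb$. Consequently, for $\nu\le\kappa(\zb)$ (so that the entry is strictly positive and the $[\cdot]_+$ is inactive),
\begin{equation*}
[\alpha\texttt{-Entmax}(\zb)]_{(\nu)} \;=\; \bigl[(\alpha-1)z_{(\nu)}-\tau(\zb)\bigr]^{1/(\alpha-1)}.
\end{equation*}
The target inequality is then equivalent to $\tau(\zb)\ge (\alpha-1)z_{(\kappa+1)}$, using the sorted index $z_{(\kappa+1)}$ (or its abuse-of-notation value when $\kappa=M$).

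Next I would split into two cases. When $\kappa(\zb)<M$, the $(\kappa+1)$-th sorted coordinate corresponds to an index that is truncated to zero by the $[\cdot]_+$, which by the definition of $\tau$ means $(\alpha-1)z_{(\kappa+1)}-\tau(\zb)\le 0$, giving the desired bound immediately. The more delicate case is $\kappa(\zb)=M$: here no entry is truncated, so the simplex constraint $\sum_{\nu=1}^{M}[(\alpha-1)z_{(\nu)}-\tau(\zb)]^{1/(\alpha-1)}=1$ is active and I would exploit the pointwise lower bound $z_{(\nu)}\ge z_{(M)}$ to obtain $M\bigl[(\alpha-1)z_{(M)}-\tau(\zb)\bigr]^{1/(\alpha-1)}\ge 1$, which rearranges to $\tau(\zb)\ge (\alpha-1)z_{(M)}-M^{1-\alpha}$. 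Unwinding the paper's abuse of notation $z_{(M+1)}:=z_{(M)}-M^{1-\alpha}/(\alpha-1)$ recovers exactly $\tau(\zb)\ge(\alpha-1)z_{(\kappa+1)}$ in this boundary regime as well.

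Finally, substituting the threshold bound back into the closed form produces
\begin{equation*}
[\alpha\texttt{-Entmax}(\zb)]_{(\nu)} \;\le\; \bigl[(\alpha-1)\bigl(z_{(\nu)}-z_{(\kappa+1)}\bigr)\bigr]^{1/(\alpha-1)},
\end{equation*}
which is the claim. The main obstacle I expect is the boundary case $\kappa=M$: one must justify the extended definition of $z_{(M+1)}$ and confirm that the simplex-normalization argument gives a bound on $\tau$ that meshes seamlessly with the ``truncation'' argument used when $\kappa<M$. Everything else is bookkeeping around monotonicity of $\alpha\texttt{-Entmax}$ and careful reading of \cref{eq:entmax}; no new machinery beyond what is in Section~\ref{sec:background} should be required.
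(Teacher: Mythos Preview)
Your reduction to the threshold bound $\tau(\zb)\ge(\alpha-1)z_{(\kappa+1)}$ and the case split on $\kappa<M$ versus $\kappa=M$ match the paper exactly. There is one slip: in the boundary case $\kappa=M$, bounding each summand from below by the term at $z_{(M)}$ yields
\[
1=\sum_{\nu=1}^M\bigl[(\alpha-1)z_{(\nu)}-\tau\bigr]^{1/(\alpha-1)}\;\ge\;M\bigl[(\alpha-1)z_{(M)}-\tau\bigr]^{1/(\alpha-1)},
\]
so the displayed inequality should read $\le1$, not $\ge1$; your stated conclusion $\tau\ge(\alpha-1)z_{(M)}-M^{1-\alpha}$ is nonetheless the correct one after raising both sides to the power $\alpha-1$.

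Where your argument genuinely differs from the paper is in how the $\kappa=M$ bound on $\tau$ is obtained. The paper applies H\"older's inequality with exponents $p=1/(\alpha-1)$ and $q=1/(2-\alpha)$ to derive $\tau/(\alpha-1)\ge\tfrac{1}{M}\sum_i z_i - M^{1-\alpha}/(\alpha-1)$ and then bounds the average by the minimum; because the conjugate exponent $q$ degenerates at $\alpha=2$, the paper must handle that endpoint by a separate direct computation. Your monotonicity argument is more elementary, avoids H\"older entirely, and treats all $1<\alpha\le2$ uniformly, at the cost of losing the slightly sharper intermediate bound involving the mean of the $z_i$ (which the paper discards anyway). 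Both routes land on the same final inequality.
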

\begin{proof}\mbox{}\\
\begin{enumerate}
\item[(i)]$\kappa<M$\\
From the definition of $\kappa$, we have following properties.
$$\alpha\texttt{-Entmax}(\zb)_{(\kappa+1)}=0.$$
$$z_{(\kappa+1)} \le \tau(\zb)/(\alpha-1).$$
Keep the last inequality, and now consider the $\nu$'th largest coordinate of \cref{eq:entmax}, but we can omit $+$ since it is strictly positive.
\begin{align*}
    \alpha\texttt{-Entmax}(\zb)_{(\nu)} &= [(\alpha-1)z_{(\nu)} - \tau(\zb)]_+^{1/(\alpha-1)}\\
    &=[(\alpha-1)z_{(\nu)} - \tau(\zb)]^{1/(\alpha-1)}\\
    &\le [(\alpha-1)z_{(\nu)} - (\alpha-1)z_{(\kappa+1)}]^{1/(\alpha-1)}
\end{align*}
\item[(ii)] $\kappa = M$\\
We use H\"older inequality $$ \left(\sum|a_i|^p\right)^{1/p} \left(\sum|b_i|^q\right)^{1/q} \ge \sum |a_i b_i| \quad \text{ for } p,q\in (1,\infty), 1/p+1/q=1 $$
to estimate a lower bound of $\tau$ for $\alpha\neq2$.
By substituting $a_i = (\alpha-1)z_i -\tau, b_i = 1, p=1/(\alpha-1), q=1/(2-\alpha)$,
\[
\left(\sum|(\alpha-1)z_i -\tau|^{1/(\alpha-1)}\right)^{\alpha-1} \left(\sum1\right)^{2-\alpha} \ge \sum |(\alpha-1)z_i -\tau|.
\]
We know that all entries are positive $(\alpha-1)z_i -\tau > 0$ since $\kappa = M$. Moreover, 
\[
\sum[(\alpha-1)z_i -\tau]^{1/(\alpha-1)}=1
\]
since the left hand side is the sum of the coordinates of $\alpha\texttt{-Entmax}$ output.
Therefore,
\begin{align*}
M^{2-\alpha} &\ge (\alpha-1)\sum z_i - M\tau\\
\frac{\tau}{\alpha-1} &\ge \frac1M \sum z_i  - \frac{M^{1-\alpha}}{\alpha-1}\\
&\ge \min z_i - \frac{M^{1-\alpha}}{\alpha-1} = z_{(M)}- \frac{M^{1-\alpha}}{\alpha-1}
\end{align*}
We remain the case $\alpha=2$. We directly sum up the entries of $2\texttt{-Entmax}$:
\begin{align*}
    1 = \sum |z_i - \tau| &= \sum z_i - M\tau \\
    &\ge M \min z_i - M \tau\\
    \therefore \tau &\ge z_{(M)} - \frac{1}{M} = z_{(M)}- \frac{M^{1-\alpha}}{\alpha-1}
\end{align*}
\end{enumerate}
\end{proof}

We further estimate the retrieval error of retrieval dynamics defined in PLADIS. We use the notation:

\[
\Tcal(\xb):= \lambda \Tca(\xb) + (1-\lambda)\Tc_1 (\xb).
\]
Then, we have following result for the retrieval error of $\Tc^\lambda_\alpha$.
\begin{thm}[Retrieval Error 3]
    Consider the retrieval dynamics $\Tcal$
    \begin{align}
        ||\Tcal(\xb)-\xib_\mu|| &\le |\lambda| m +
        |\lambda| m \kappa \left[ (\alpha-1)\beta  \left( \max_\nu \langle \xib_\nu, \xb \rangle - \left[\Xib^\intercal \xb \right]_{(\kappa+1)} \right)\right]^{\frac{1}{\alpha-1}}\\ &+ |1-\lambda| 2m (M-1) \exp \left\{ -\beta \left( \langle\xib_\mu, \xb - \max_\nu \langle \xib_\mu, \xib_\nu \rangle \right) \right\} .\label{eq:ineq-new2}
    \end{align}
\end{thm}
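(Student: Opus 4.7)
The plan is to reduce the bound on $\|\Tcal(\xb) - \xib_\mu\|$ to the two error bounds already established earlier in this section, by exploiting the fact that $\Tcal$ is an affine combination of $\Tca$ and $\Tc_1$.

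First, I would use the identity $\xib_\mu = \lambda \xib_\mu + (1-\lambda)\xib_\mu$ to rewrite
\begin{align*}
\Tcal(\xb) - \xib_\mu = \lambda\bigl(\Tca(\xb) - \xib_\mu\bigr) + (1-\lambda)\bigl(\Tc_1(\xb) - \xib_\mu\bigr).
\end{align*}
Applying the triangle inequality (and pulling out $|\lambda|$, $|1-\lambda|$) yields
\begin{align*}
\|\Tcal(\xb) - \xib_\mu\| \le |\lambda|\,\|\Tca(\xb) - \xib_\mu\| + |1-\lambda|\,\|\Tc_1(\xb) - \xib_\mu\|.
\end{align*}

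Next, I would bound each of the two summands using a result already proved in this appendix. For the first term, Theorem~\ref{thm:error-new} (the case $1 < \alpha \le 2$ of the retrieval error) gives exactly
\begin{align*}
\|\Tca(\xb) - \xib_\mu\| \le m + m\kappa\Bigl[(\alpha-1)\beta\bigl(\max_\nu \langle \xib_\nu, \xb\rangle - [\Xib^\intercal \xb]_{(\kappa+1)}\bigr)\Bigr]^{\frac{1}{\alpha-1}}.
\end{align*}
For the second term, the modern Hopfield retrieval bound~\eqref{eq:ineq-hop} (that is, the $\alpha=1$ case from \cite{hopfield}) provides
\begin{align*}
\|\Tc_1(\xb) - \xib_\mu\| \le 2m(M-1)\exp\bigl\{-\beta(\langle \xib_\mu, \xb\rangle - \max_\nu \langle \xib_\mu, \xib_\nu\rangle)\bigr\}.
\end{align*}
Combining these two estimates according to the triangle inequality yields the claimed inequality.

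There is essentially no obstacle: the proof is a one-line affine decomposition followed by the triangle inequality and invocation of two prior bounds. The only subtle point worth being explicit about is that one must add and subtract $\xib_\mu$ inside both convex-combination terms (rather than outside), which is legitimate because $\lambda + (1-\lambda) = 1$; without this step, one would otherwise pick up an unwanted $(|\lambda|+|1-\lambda|-1)\|\xib_\mu\|$ factor. I would also note, as a remark, that the bound is tight in the sense that the two regimes (sparse polynomial decay for the $\Tca$ contribution and dense exponential decay for the $\Tc_1$ contribution) are exactly the behaviors one obtains from the original theorems, so the PLADIS dynamics inherits noise robustness of the stronger of the two only up to the mixing weights $|\lambda|$ and $|1-\lambda|$.
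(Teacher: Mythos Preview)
Your proposal is correct and matches the paper's own proof essentially line for line: the paper writes $\|\lambda \Tca(\xb) + (1-\lambda)\Tc_1(\xb) - \xib_\mu\| \le |\lambda|\,\|\Tca(\xb)-\xib_\mu\| + |1-\lambda|\,\|\Tc_1(\xb)-\xib_\mu\|$ via the same affine split and triangle inequality, then invokes \eqref{eq:ineq-hop} and \eqref{eq:ineq-new1}. Your explicit remark about inserting $\xib_\mu = \lambda\xib_\mu + (1-\lambda)\xib_\mu$ is a nice clarification the paper leaves implicit.
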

\begin{proof}
    \begin{align*}
        ||\Tcal(\xb)-\xib_\nu||
        &= ||\lambda \Tca(\xb) + (1-\lambda) \Tc_1(\xb) - \xib_\nu|| \\
        &\le |\lambda| || \Tca(\xb) + \xib_\nu || + |1-\lambda| ||\Tc_1(\xb) - \xib_\nu||
    \end{align*}
    and apply \cref{eq:ineq-hop} and \cref{eq:ineq-new1}.
\end{proof}
This theorem suggests that the retrieval dynamics given in PLADIS have the error bound of mixture of polynomial and exponential terms.

\section{Metrics and Implementation Detail} \label{sec:metric_detail}
For image sampling in Table \ref{tab_main}, sampling without CFG guidance is conducted using 30,000 randomly selected text prompts from the MSCOCO validation dataset. Conversely, sampling with CFG is performed with uniformly selected values of $w$ in the range (3,5). In both cases, the PAG and SEG scales are fixed at 3.0, following the recommended settings from the corresponding paper.

For Tables \ref{tab_hps} and \ref{tab_distill}, we use 200 prompts from Drawbench \cite{drawbench}, 400 prompts from HPD \cite{hpsv2}, and 500 prompts from the test set of Pick-a-pic \cite{pick}, generating 5 images per prompt. Additionally, for the ablation study in Table \ref{tab_alpha}, we generate 5,000 images from the MSCOCO validation set with CFG and PAG guidance. As with Table \ref{tab_main}, the CFG scale is uniformly selected within the range of (3,5), while the PAG scale remains set at 3.0.

\section{User Preference Study} \label{sec:user_study}
As presented in Fig.~\ref{fig:user_study}, we employ human evaluation and do not rely solely on automated evaluation metrics such as FID, CLIPScore, ImageReward, etc. Our aim is to assess whether PLADIS truly improves image quality and prompt coherence. To rigorously evaluate these aspects, we categorized caess into two groups: interaction with guidance sampling including CFG~\cite{CFG}, PAG~\cite{PAG}, SEG~\cite{SEG}, and interaction with guidance-distilled models such as SDXL-Turbo~\cite{turbo}, SDXL-Lightening~\cite{light}, DMD2~\cite{dmd2}, and Hyper-SDXL~\cite{hyper}. We evaluate all models based on 20 selected prompts from the randomly selected Drawbench~\cite{drawbench}, HPD~\cite{hpsv2}, and Pick-a-pic~\cite{pick}. For the guidance-distilled model, we select half from one-step sampling results and the other half from four-step sampling results. Human evaluators, who are definitely blind and anonymous, are restricted to participating only once. Evaluators are shown two images from model outputs with and without PLADIS based on the same text prompt and measure images with two questions: for image quality, "Which image is of higher quality and visually more pleasing?" and for prompt alignment, "Which image looks more representative of the given prompt." The order of prompts and the order between models are truly randomized. In Fig.~\ref{fig:user_study}, we averaged all of the results related to the guidance-distilled model due to limited space. Further presenting in detail, we present a user preference study for each guidance-distilled model as shown in Fig.~\ref{fig:user_study_supple}. As similar to guidance sampling, guidance-distilled models with PLADIS outperform both image quality and prompt alignment, validating the practical effectiveness of PLADIS.

\begin{figure}[ht!]
\centering
\includegraphics[width=1\linewidth]{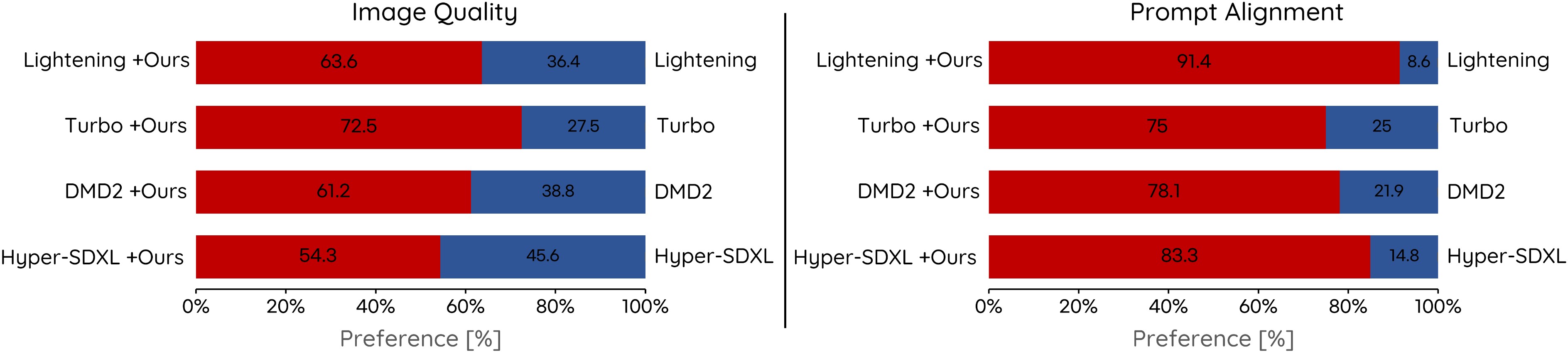}
\caption{User preference study for PLADIS in the context of guidance-distilled models.
We evaluate the two aspects of model output with and without PLADIS such as image quality and prompt alignment. }\label{fig:user_study_supple}
\end{figure}


\section{Application on Other Backbone} \label{sec:add_other}

To demonstrate the robustness of our proposed method, we perform experiments using additional backbones, including Stable Diffusion v1.5 (SD1.5) and SANA~\cite{sana}. SANA is a recently introduced text-to-image diffusion model that uses linear attention, enabling faster image generation. It is based on the Diffusion Transformer (DiT) architecture. We generate 30K samples from randomly selected MS COCO validation set images and evaluate them using FID, CLIPScore, and ImageReward, as shown in Table~\ref{tab_another}. For SD1.5, we use CFG, while SANA is tested with its default configuration without modifications.

Interestingly, we observe that both SD1.5 and SANA, when integrated with our PLADIS method, consistently improve performance across all metrics. A visual comparison is provided in Fig. \ref{fig:supple_sd15} and Fig. \ref{fig:supple_sana}. As shown in the figures, the generation with our PLADIS provides more natural and pleasing images and precise matching between images and text prompts on both backbones.
As seen in other experiments, our PLADIS enhances both generation quality and text alignment with the given prompts. By confirming these improvements with SD1.5 and SANA, we demonstrate that PLADIS is robust across different backbones, particularly transformer-based architectures.

\begin{table*}[ht]
\caption{Quantitative comparison across various datasets using 1-steps sampling with the guidance-distilled model.}
\vspace{-0.5em}
\label{tab_add_distill}
\centering
\resizebox{1\linewidth}{!}{
\begin{small}
\begin{tabular}{lccccccccc}
\toprule
 & \multicolumn{3}{c}{Drawbench~\cite{drawbench}} & \multicolumn{3}{c}{HPD~\cite{hpsv2}} & \multicolumn{3}{c}{Pick-a-pic~\cite{pick}} \\
\cmidrule(r){2-4}\cmidrule(r){5-7}\cmidrule(r){8-10}
Method &  CLIPScore~$\uparrow$ & PickScore~$\uparrow$ & ImageReward~$\uparrow$  &  CLIPScore~$\uparrow$ & PickScore~$\uparrow$ & ImageReward~$\uparrow$  &  CLIPScore~$\uparrow$ & PickScore~$\uparrow$ & ImageReward~$\uparrow$ \\
\cmidrule(r){1-1}\cmidrule(r){2-4}\cmidrule(r){5-7}\cmidrule(r){8-10}
Turbo~\cite{turbo} & 27.19 & 21.67 & 0.305	& 28.45 &	21.85	& 0.479	& 26.89	& 21.16	& 0.346 \\
\cmidrule(r){1-1}\cmidrule(r){2-4}\cmidrule(r){5-7}\cmidrule(r){8-10}
\rowcolor{green!10}+\,Ours & \bf{27.56} \scriptsize{\textcolor{blue}{(+0.37)}}& \bf{21.68} \scriptsize{\textcolor{blue}{(+0.01)}} &	\bf{0.390} \scriptsize{\textcolor{blue}{(+0.08)}}&	\bf{28.78} \scriptsize{\textcolor{blue}{(+0.33)}}& \bf{21.86} \scriptsize{\textcolor{blue}{(+0.01)}}& \bf{0.517} \scriptsize{\textcolor{blue}{(+0.04)}}&\bf{27.10} \scriptsize{\textcolor{blue}{(+0.21)}}& \bf{21.17} \scriptsize{\textcolor{blue}{(+0.01)}}&	\bf{0.378} \scriptsize{\textcolor{blue}{(+0.04)}}
 \\
\midrule
Light~\cite{light}&26.08	&21.86	&0.428	&27.37	&22.05	&0.730	&25.73	&21.34	&0.585 \\
\cmidrule(r){1-1}\cmidrule(r){2-4}\cmidrule(r){5-7}\cmidrule(r){8-10}
\rowcolor{green!10}+\,Ours & \bf 26.66 \scriptsize{\textcolor{blue}{(+0.58)}}&	\bf 21.94 \scriptsize{\textcolor{blue}{(+0.08)}}&	\bf0.558 \scriptsize{\textcolor{blue}{(+0.13)}} &\bf	28.42 \scriptsize{\textcolor{blue}{(+1.05)}}& \bf	22.24 \scriptsize{\textcolor{blue}{(+0.19)}} & \bf 0.830 \scriptsize{\textcolor{blue}{(+0.10)}} & \bf 26.63 \scriptsize{\textcolor{blue}{(+0.90)}}&	\bf 21.46	\scriptsize{\textcolor{blue}{(+0.12)}}& \bf 0.680 \scriptsize{\textcolor{blue}{(+0.10)}}
 \\
\midrule
DMD2~\cite{dmd2}& 27.91	&22.04	&0.651	&29.95	&22.18	&0.888	&28.14	&21.57	&0.770 \\
\cmidrule(r){1-1}\cmidrule(r){2-4}\cmidrule(r){5-7}\cmidrule(r){8-10}
\rowcolor{green!10}+\,Ours & \bf28.09 
\scriptsize{\textcolor{blue}{(+0.19)}}& \bf	22.05 \scriptsize{\textcolor{blue}{(+0.01)}}& \bf 0.662 
\scriptsize{\textcolor{blue}{(+0.01)}}& \bf	30.21 	\scriptsize{\textcolor{blue}{(+0.26)}}& \bf 22.20 \scriptsize{\textcolor{blue}{(+0.02)}}& \bf	0.902 \scriptsize{\textcolor{blue}{(+0.01)}}&	\bf 28.38  \scriptsize{\textcolor{blue}{(+0.43)}}&	\bf 21.58 \scriptsize{\textcolor{blue}{(+0.01)}}& \bf	0.794 \scriptsize{\textcolor{blue}{(+0.02)}}
 \\
\midrule
Hyper~\cite{hyper} &27.41	&22.27	&0.662	&29.09	&22.61	&0.912	&27.29	&21.91	&0.812 \\
\cmidrule(r){1-1}\cmidrule(r){2-4}\cmidrule(r){5-7}\cmidrule(r){8-10}
\rowcolor{green!10}+\,Ours & \bf 27.80 
\scriptsize{\textcolor{blue}{(+0.39)}}& \bf	22.30 \scriptsize{\textcolor{blue}{(+0.03)}}&	\bf 0.674 \scriptsize{\textcolor{blue}{(+0.01)}}& \bf	29.42 \scriptsize{\textcolor{blue}{(+0.33)}}& \bf 22.65 \scriptsize{\textcolor{blue}{(+0.04)}}&	\bf 0.932 \scriptsize{\textcolor{blue}{(+0.02)}}&	\bf 27.85 \scriptsize{\textcolor{blue}{(+0.56)}}&	\bf 21.92 \scriptsize{\textcolor{blue}{(+0.01)}}& \bf	0.832 \scriptsize{\textcolor{blue}{(+0.02)}}
 \\
\bottomrule	        
\end{tabular}
\end{small}
}
\end{table*}

\section{Comparison Results on One-Step Sampling} \label{sec:add_onestep}
As discussed in Section \ref{sec:distill}, we found that our proposed method, PLADIS, is also effective for one-step sampling with a guidance-distilled model. Following the experimental settings in Table \ref{tab_distill}, we generate images from text prompts in human preference datasets such as Drawbench \cite{drawbench}, HPD \cite{hpsv2}, and Pick-a-pick \cite{pick}. The generated images are evaluated using CLIPScore, ImageReward, and PickScore, as presented in Table \ref{tab_add_distill}. Our method consistently yields performance improvements, particularly in text alignment and human preference, across all baselines. This demonstrates the robustness of our approach for denoising steps and highlights its potential as a generalizable boosting solution.

\begin{table}[t!]
\begin{minipage}[t]{.48\linewidth}
\caption{Application on other BackBone Model on MS COCO validation set and Comparison results for another extrapolation strategy and combination with FreeU~\cite{freeu}. SD1.5 and SANA indicate that Stable Diffusion version 1.5 and SANA 1.6 B model, respectively.}
\vspace{-0.5em}
\label{tab_another}
\centering
\resizebox{1\linewidth}{!}{
\begin{small}
\begin{tabular}{ccccc}
\toprule
Resolution & BackBone & FID $\downarrow$ & CLIPScore  $\uparrow$  & ImageReward $\uparrow$  \\
\cmidrule(r){1-1}\cmidrule(r){2-5}
\multirow{2}{*}{512 $\times$ 512} & SD1.5 & 23.88 & 24.11 & -0.368
\\
\cmidrule(r){2-2}\cmidrule(r){3-5}
\rowcolor{green!10} \cellcolor{white} & + PLADIS (Ours)  & 22.41\scriptsize{\textcolor{blue}{(-1.48)}} & 25.09 \scriptsize{\textcolor{blue}{(+0.98)}}& -0.08 \scriptsize{\textcolor{blue}{(+0.360)}} \\
\cmidrule(r){1-1}\cmidrule(r){2-2}\cmidrule(r){3-5}
\multirow{2}{*}{1024 $\times$ 1024}& SANA~\cite{sana} & 28.01 & 26.61 & 0.867 \\
\cmidrule(r){2-2}\cmidrule(r){3-5}
 \rowcolor{green!10} \cellcolor{white}& + PLADIS (Ours)  & 27.53\scriptsize{\textcolor{blue}{(-0.48)}} & 26.83 \scriptsize{\textcolor{blue}{(+0.21)}}& 0.883\scriptsize{\textcolor{blue}{(+0.016)}} \\
\midrule
Resolution & Method & FID $\downarrow$& CLIPScore $\uparrow$  & ImageReward $\uparrow$\\
\cmidrule(r){1-1}\cmidrule(r){2-2}\cmidrule(r){3-5}
\multirow{3}{*}{1024 $\times$ 1024} & SDXL (CFG)     & 32.68 & 25.90 & 0.425\\
\cmidrule(r){2-2}\cmidrule(r){3-5}
& + Ours (Prediction)      & 29.48 & 26.60 & 0.619 \\
\rowcolor{green!10} \cellcolor{white} &  + Ours (In-model)     & \bf28.50 & \bf26.61 & \bf0.626 \\
\midrule
\multirow{2}{*}{1024 $\times$ 1024} & SDXL + FreeU     & 35.66 & 25.96 & 0.425\\
\cmidrule(r){2-2}\cmidrule(r){3-5}
\rowcolor{green!10} \cellcolor{white} & + PLADIS (Ours)     & \bf 28.79 & \bf 26.93 & \bf0.626 \\
 \bottomrule	        
\end{tabular}
\end{small}
}
\end{minipage}
\hfill
\begin{minipage}[t]{.48\linewidth}
\caption{Ablation study on layer group which is replaced with PLADIS on MS COCO validation dataset.}
\vspace{-0.5em}
\label{tab_layer}
\centering
\resizebox{1\linewidth}{!}{
\begin{small}
\begin{tabular}{cccc}
\toprule
Layer & FID $\downarrow$ & CLIPScore $\uparrow$  & ImageReward $\uparrow$  \\
\cmidrule(r){1-1}\cmidrule(r){2-4}
Baseline & 33.76 & 25.41 & 0.478 \\
\cmidrule(r){1-1}\cmidrule(r){2-4}
Up  & 29.78\scriptsize{\textcolor{blue}{(-3.98)}} & 25.78 \scriptsize{\textcolor{blue}{(+0.37)}}& 0.624\scriptsize{\textcolor{blue}{(+0.15)}} \\
Mid  & 31.76\scriptsize{\textcolor{blue}{(-2.00)}} & 25.46 \scriptsize{\textcolor{blue}{(+0.05)}}& 0.496\scriptsize{\textcolor{blue}{(+0.02)}} \\
Down  & 31.46\scriptsize{\textcolor{blue}{(-2.30)}} & 25.43 \scriptsize{\textcolor{blue}{(+0.02)}}& 0.501\scriptsize{\textcolor{blue}{(+0.02)}} \\
Up, Mid & 30.76\scriptsize{\textcolor{blue}{(-3.00)}} & 25.46 \scriptsize{\textcolor{blue}{(+0.05)}}& 0.548\scriptsize{\textcolor{blue}{(+0.07)}} \\
Up, Down  & 28.46\scriptsize{\textcolor{blue}{(-5.30)}} & 26.12 \scriptsize{\textcolor{blue}{(+0.71)}}& 0.658\scriptsize{\textcolor{blue}{(+0.18)}} \\
Mid, Down  & 31.36\scriptsize{\textcolor{blue}{(-2.40)}} & 25.52 \scriptsize{\textcolor{blue}{(+0.11)}}& 0.498\scriptsize{\textcolor{blue}{(+0.02)}} \\
\cmidrule(r){1-1}\cmidrule(r){2-4}
\rowcolor{green!10} All (Ours)  & 27.87\scriptsize{\textcolor{blue}{(-5.89)}} & 26.41 \scriptsize{\textcolor{blue}{(+1.00)}}& 0.726\scriptsize{\textcolor{blue}{(+0.25)}} \\
\bottomrule	        
\end{tabular}
\end{small}
}
\end{minipage}
\end{table}

\section{Additional Ablation Study} \label{sec:add_ablation}

\begin{figure}[ht!]
\centering
\includegraphics[width=0.85\linewidth]{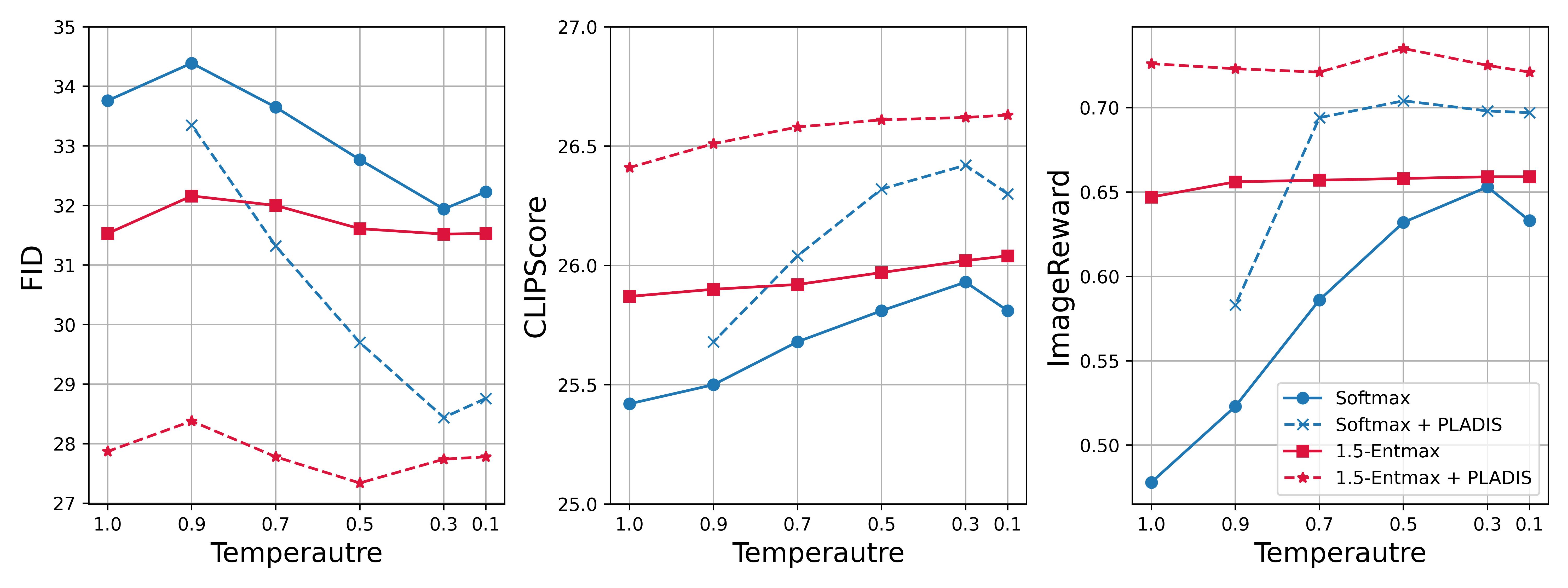}
\caption{Comparison results for various temperatures, with and without PLADIS, are presented, including the baseline (\texttt{Softmax}) and 1.5$-\texttt{Entmax}$. While lower temperatures with the baseline offer benefits in both cases, our proposed method ($\alpha$ = 1.5), with and without PLADIS, outperforms across all temperature settings.}\label{fig:temper_supple}
\end{figure}

\subsection{Comparison with Attention Temperature}\label{sec:temper}
In the field of NLP, to improve existing attention mechanisms, temperature scaling~\cite{temper}, also known as inverse temperature, has been extensively studied to adjust the sharpness of attention. It is defined as follows:
\begin{align}
\Att(\Qb,\Kb,\Vb) = \texttt{Softmax}(\frac{\Qb\Kb^{\top}}{\sqrt{d}*\tau})
\end{align}
where $\tau$ denotes the temperature, which controls the softness of the attention. A lower temperature results in sharper activations, creating a more distinct separation between values. Importantly, it is closely related to the $\beta$ in $\ent$. In common attention mechanisms, $\beta$ is typically set to the square root of the dimension, $\sqrt{d}$, which corresponds to $\tau = 1.0$. In modern sparse Hopfield energy functions, $\beta$ serves as a scaling factor for the energy function, influencing the sharpness of the energy landscape and thereby controlling the dynamics~\cite{sparsehop}. \citeauthor{sparsehop} argue that high $\beta$ values, corresponding to low temperatures ($\tau < 1$), help maintain distinct basins of attraction for individual memory patterns, facilitating easier retrieval.

As discussed in the main paper, we provide an ablation study on the hyperparameter $\tau$ (which is equivalent to $\beta$) by varying $\tau$ from 0.9 to 0.1 for $\texttt{Softmax}$, alongside our default configuration (1.5$-\texttt{Entmax}$). Similar to the previous ablation study, we generate 5K images from randomly selected samples in the MS-COCO validation set under CFG and PAG guidance with our PLADIS, as shown in Fig.~\ref{fig:temper_supple}.

We observed that lowering the temperature (increasing $\beta$) consistently improved generation performance in both transformations, such as \texttt{Softmax} and 1.5$-\texttt{Entmax}$. In the case without PLADIS, \texttt{Softmax} with a lower temperature improved all metrics, but its performance still remained inferior to sparse attention ($\alpha$ = 1.5).
When using PLADIS, the trend was similar: \texttt{Softmax} with a lower temperature benefited from PLADIS, but it still did not outperform the 1.5$-\texttt{Entmax}$ configuration with PLADIS.

Furthermore, 1.5$-\texttt{Entmax}$ with a lowered temperature consistently improves generation quality in terms of visual quality and text alignment, ultimately converging to similar performance. Notably, very low temperatures with \texttt{Softmax} result in nearly identical sparse transformations, but with larger-than-zero intensities. This suggests that lowering the temperature benefits all transformations in $\ent$ for $1 \leq \alpha \leq 2$. However, dense alignment with a lowered temperature is insufficient, and sparse attention remains necessary in both cases, with and without PLADIS. Additionally, adjusting other hyperparameters is time-consuming, but our PLADIS with 1.5$-\texttt{Entmax}$ does not require finding the optimal hyperparameter $\tau$, thanks to the convergence of performance across various $\tau$ values. Therefore, these results demonstrate that the noise robustness of sparse cross-attention in diffusion models (DMs) is crucial for generation performance.

\begin{figure*}[ht!]
\centering
\includegraphics[width=0.95\linewidth]{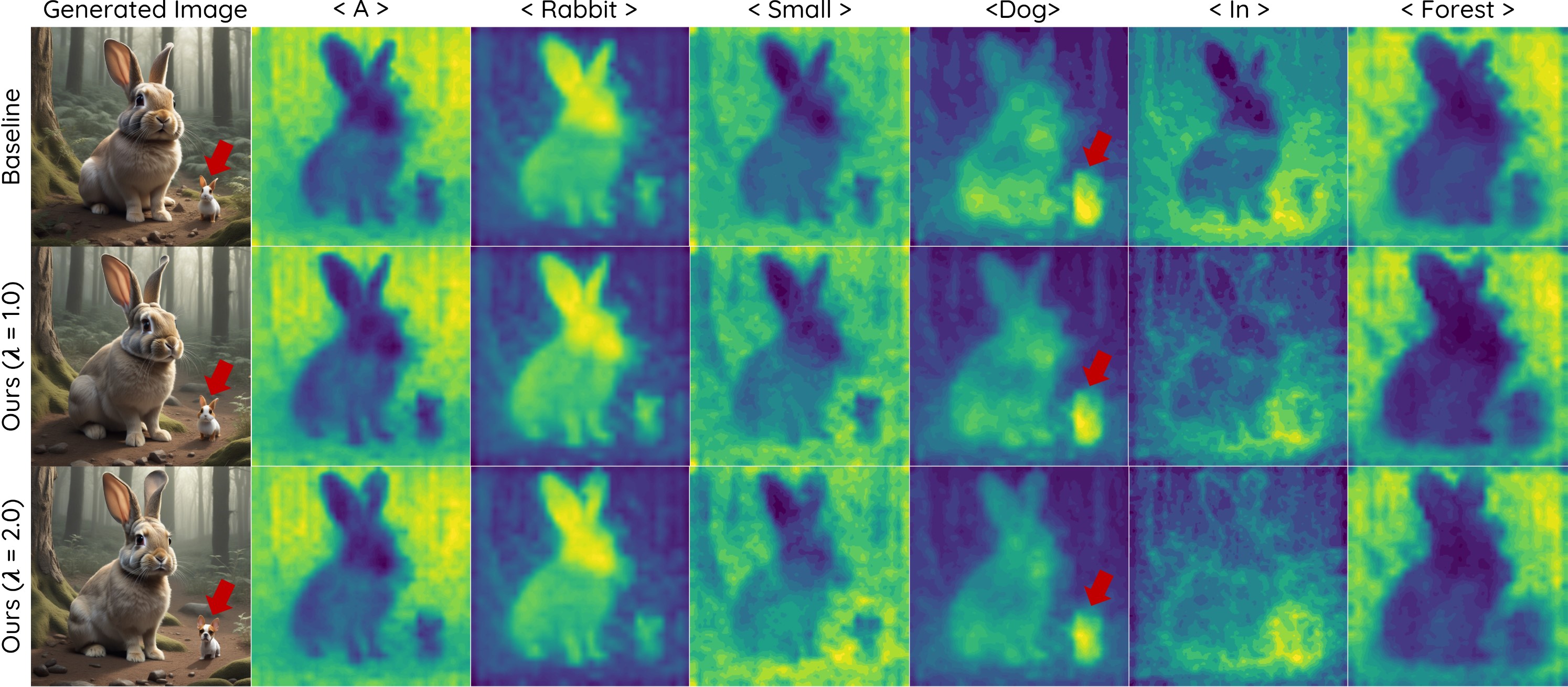}
\caption{Qualitative comparison of cross-attention average maps across all time steps. Top: Baseline. Middle: PLADIS (with $\lambda$ = 1) represent only use $\ent$ transformation. Bottom: PLADIS (with $\lambda$ = 2.0). Our PLADIS with $\lambda$ = 2.0 provides a more sparse and sharp correlation with each text prompt, especially "rabbit" and "dog." Furthermore, other approaches yield incorrect attention maps that highlight the space between the dog prompt and rabbit space. However, our method provides an exact attention map. }\label{fig:supple_cross}
\end{figure*}

\subsection{Analysis on Cross-Attention Map}
To analyze the effect of our proposed method in the cross-attention module, we directly visualize the cross-attention maps, as shown in Fig.~\ref{fig:supple_cross}. Each word in the prompt corresponds to an attention map linked to the image, showing that the information related to the word appears in specific areas of the image. We observe that the baseline (dense alignment with softmax) produces blurrier attention maps for the related words. Moreover, the generated image does not accurately reflect the text prompt of a "small dog," instead generating a "small rabbit." The cross-attention map highlights the small rabbit and a large rabbit nearby, associated with the dog prompt, resulting in poor text alignment.

When replacing the cross-attention with a sparse version, the maps become more sparse but still generate a "small rabbit" and incorrect attention maps. In contrast, our PLADIS produces both sparse and sharp attention maps compared to the baseline, and correctly aligns the attention maps with the given text prompts. As a result, PLADIS consistently improves text alignment and enhances the quality of generated samples across various interaction guidance sampling techniques and other distilled models.

\subsection{The Effect of Layer Group Selection}
To apply PLADIS in the cross-attention module, we incorporate it into all layers, including the down, mid, and up groups in the UNet. In SDXL, each group contains multiple layers; for example, the mid group has 24 layers, while the up group has 36 layers. To examine the effect of layer group selection, we focus on groups like the mid and up, instead of studying each layer \textit{ex.} the first layer in the up group. We conduct experiments by varying the groups for the application of PLADIS in the cross-attention module, as shown in Tab~\ref{tab_layer}.

Similar to previous ablation studies, we generate 5K samples from randomly selected data in the MS COCO validation set under CFG and PAG guidance. We observe that when applied to a single group, the up group has the most significant impact compared to others. However, in all cases, the use of PLADIS improves both generation quality and text alignment, as measured by FID and CLIPScore. Finally, combining all groups yields the best performance, confirming that no heuristic search for the target layer is necessary and validating our default configuration choice.

\begin{figure}
\centering
\includegraphics[width=0.9\linewidth]{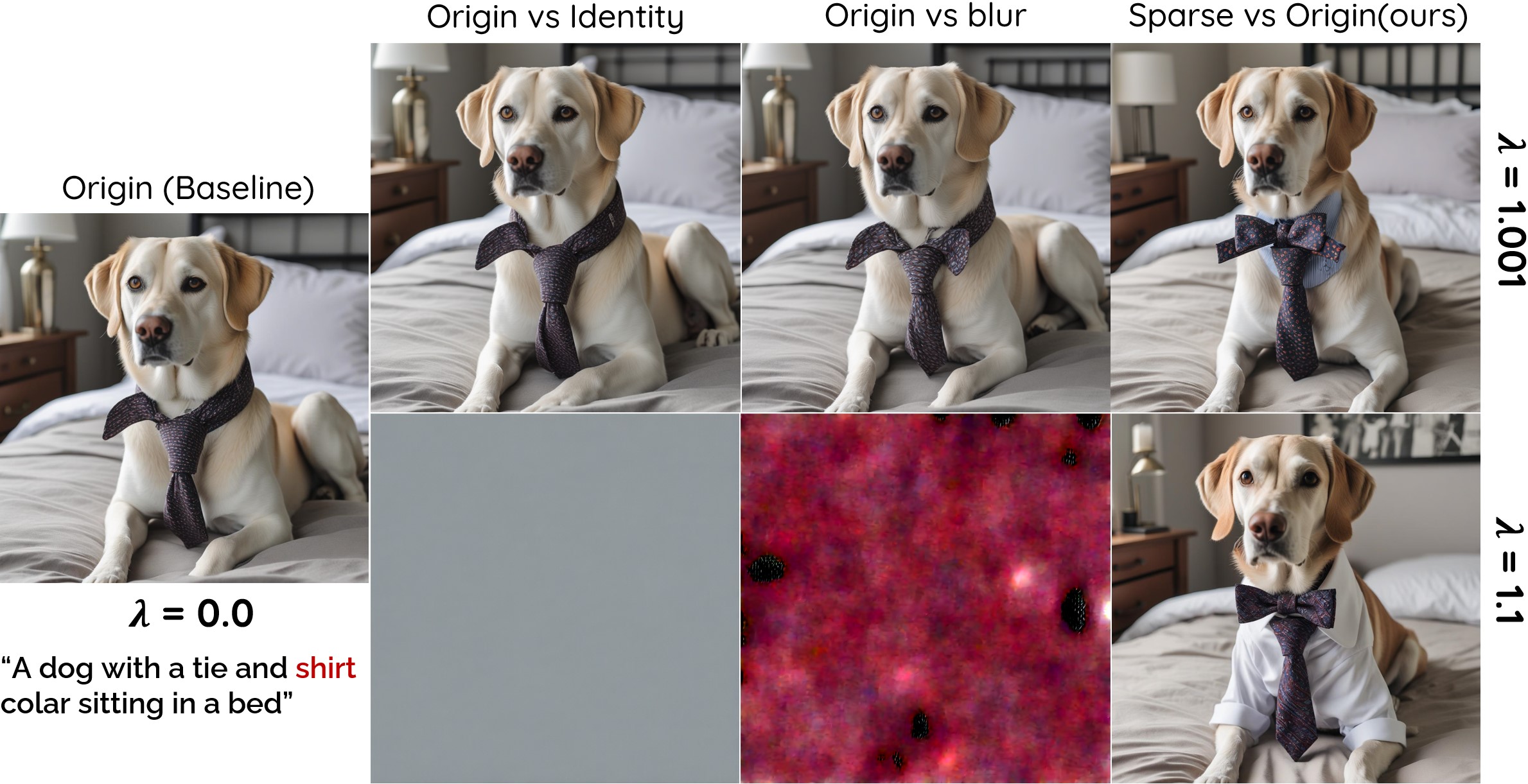}
\caption{In-model extrapolation results. Other perturbation approaches result in semantically degraded outputs even under minor extrapolation, whereas our method consistently improves generation quality. }\label{fig:in-model}
\end{figure}

\subsection{Two Extrapolation Strategies}

To validate our design choice, we investigate two types of extrapolation strategies using different attention mechanisms: in-model extrapolation and output-based extrapolation. For in-model extrapolation, we test perturbations using sparse attention, the identity matrix (PAG), and blurred attention maps (SEG). We observe that only sparse attention consistently improves performance under extrapolation, while other variants yield semantically meaningless outputs even under minor extrapolation (Fig.~\ref{fig:in-model}). This suggests that sparse attention operates as a valid energy landscape under Modern Hopfield dynamics, whereas identity or blurred attention matrices may affect diffusion outputs but fail to define coherent attention dynamics, ultimately leading to degraded generation quality.

We also explore output-based extrapolation using both sparse and dense attention variants. Although the output-based version of our method yields better performance than the baseline, it still underperforms compared to our in-model extrapolation while incurring higher inference costs (Tab.~\ref{tab_another}). These findings further support the efficiency and efficacy of our in-model extrapolation approach.

Our design is grounded in a principled integration of Hopfield retrieval dynamics and diffusion guidance. Specifically, we reinterpret extrapolation as a guidance process between a strong and a weak attention module inside the model. While diffusion-level extrapolation using blurred or identity attention may produce plausible outputs, such attention forms cannot act as valid components of attention dynamics. In contrast, sparse attention preserves the energy-based retrieval structure required for stable and interpretable in-model extrapolation.

\begin{table}[t]
\centering
\caption{Quantitative comparison on Geneval. Rows denote different methods, and columns denote guidance/backbone combinations.}
\vspace{-1em}
\label{tab:geneval2}
\resizebox{\linewidth}{!}{
\begin{tabular}{l c c c c c}
\toprule
Method & SDXL (CFG) & SDXL (CFG + PAG) & SDXL (CFG + SEG) & FLUX (schnell) & FLUX (dev) \\
\midrule
Baseline & 0.547 & 0.553 & 0.551 & 0.671 & 0.666 \\
Only Sparse & 0.581 & 0.571 & 0.582 & 0.694 & 0.676 \\
\rowcolor{green!10}Ours (Extrapolation) & 0.594 & \textbf{0.598} & \textbf{0.601} & \textbf{0.713} & \textbf{0.691} \\
\bottomrule
\end{tabular}
}
\end{table}

\subsection{Comparison with Sparse Attention Only}

To isolate the benefit of our extrapolation design beyond simply applying sparse attention, we conduct experiments on the Geneval benchmark, a reliable dataset for evaluating both text-image coherence and visual quality. As shown in Table~\ref{tab:geneval2}, across various guidance settings and even with a more challenging backbone (MMDiT), our method—extrapolation between sparse and dense attention—consistently outperforms both the baseline and the version using only sparse attention. These results further validate the effectiveness of our design choice.

\section{Additional Qualitative Results} \label{sec:add_example}

In this section, we present additional qualitative results to highlight the effectiveness and versatility of our proposed method, PLADIS, across various generation tasks and in combination with other approaches.

\paragraph{Comparison of Guidance Sampling with Our Method} Fig.~\ref{fig:supple_cfg}, \ref{fig:supple_pag}, and \ref{fig:supple_seg} provide qualitative results demonstrating interactions with existing guidance methods such as CFG, PAG, and SEG, respectively. By combining PLADIS with these guidance approaches, we observe a significant enhancement in image plausibility, particularly in text alignment and coherence with the given prompts, including improvements in visual effects and object counting. Through various examples of this joint usage, we demonstrate that PLADIS improves generation quality without requiring additional inference steps.

\paragraph{Comparison of Guidance-Distilled Models with Ours} Fig.~\ref{fig:supple_1step} and \ref{fig:supple_4step} present qualitative results from applying our method, PLADIS, to guidance-distilled models such as SDXL-Turbo~\cite{turbo}, SDXL-Lightening~\cite{light}, DMD2~\cite{dmd2}, and Hyper-SDXL~\cite{hyper}, for both 1-step and 4-step cases. Notably, PLADIS significantly enhances generation quality, removes unnatural artifacts, and improves coherence with the given text prompts, all while being nearly cost-free in terms of additional computational overhead.

\paragraph{Ablation Study on Scale $\lambda$} Fig.~\ref{fig:supple_ablation} shows a visual example of conditional generation with controlled scale $\lambda$. We generate samples using a combination of CFG and PAG, or CFG and SEG. For the ablation study, all other guidance scales are fixed, and only our scale $\lambda$ is adjusted. Consistent with the results shown in Sec~\ref{ablation}, a  scale $\lambda$ of 2.0 produces the best results in terms of visual quality and text alignment, which leads to our default configuration.

\paragraph{Ablation Study on $\alpha$ in $\ent$}
As discussed in Sec.~\ref{ablation}, PLADIS offers two options for choosing $\alpha$: 1.5 or 2. Fig.~\ref{fig:supple_alpha} provides a qualitative comparison between the baseline, $\alpha = 1.5$, and $\alpha = 2$. Empirically, we adopt $\alpha = 1.5$ as our default configuration. While PLADIS with $\alpha = 2$ improves generation quality and text alignment compared to the baseline (dense cross-attention), PLADIS with $\alpha = 1.5$ offers a more stable and natural enhancement in sample quality.

\clearpage

\begin{figure*}[ht!]
\centering
\includegraphics[width=0.95\linewidth]{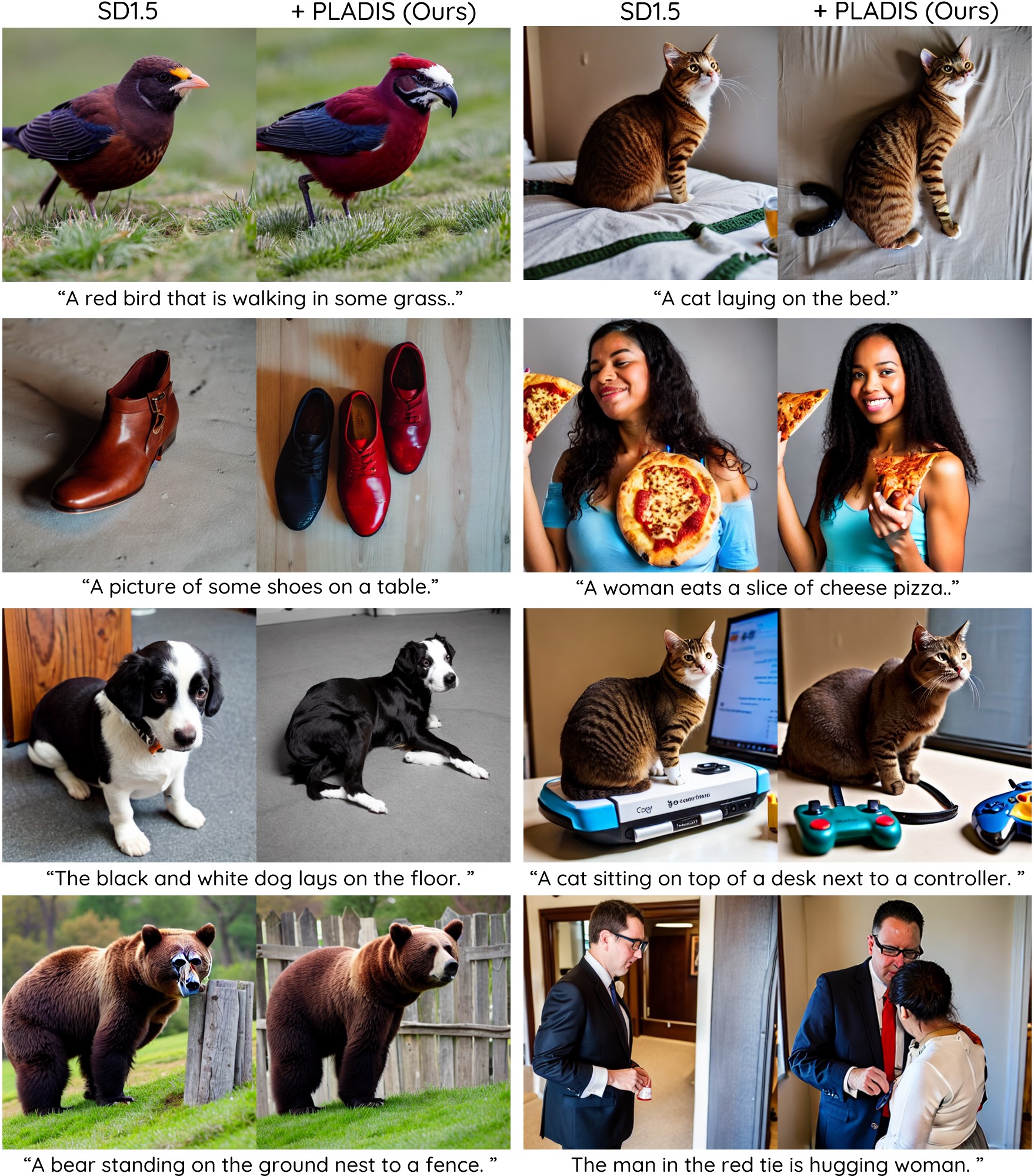}
\caption{Qualitative evaluation of Stable Diffusion 1.5 using our PLADIS method: PLADIS significantly boosts generation quality, strengthens alignment with the given text prompt, and generates visually compelling images.}\label{fig:supple_sd15}
\end{figure*}

\begin{figure*}[ht!]
\centering
\includegraphics[width=0.95\linewidth]{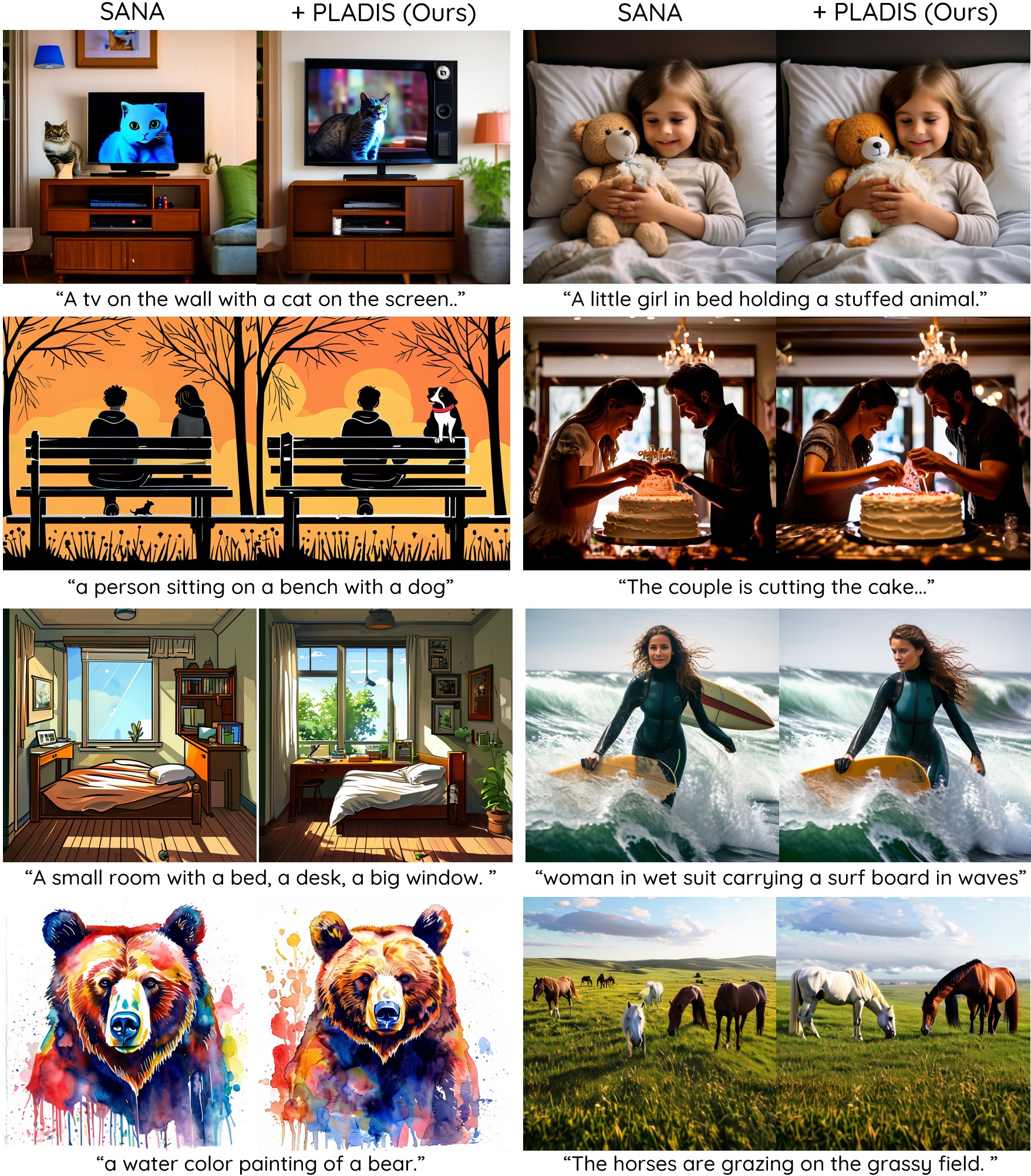}
\caption{Qualitative assessment of SANA~\cite{sana} with and without our PLADIS method: PLADIS notably improves generation quality, strengthens alignment with the provided text prompt, and produces visually striking images.}\label{fig:supple_sana}
\end{figure*}

\begin{figure*}[ht!]
\centering
\includegraphics[width=1\linewidth]{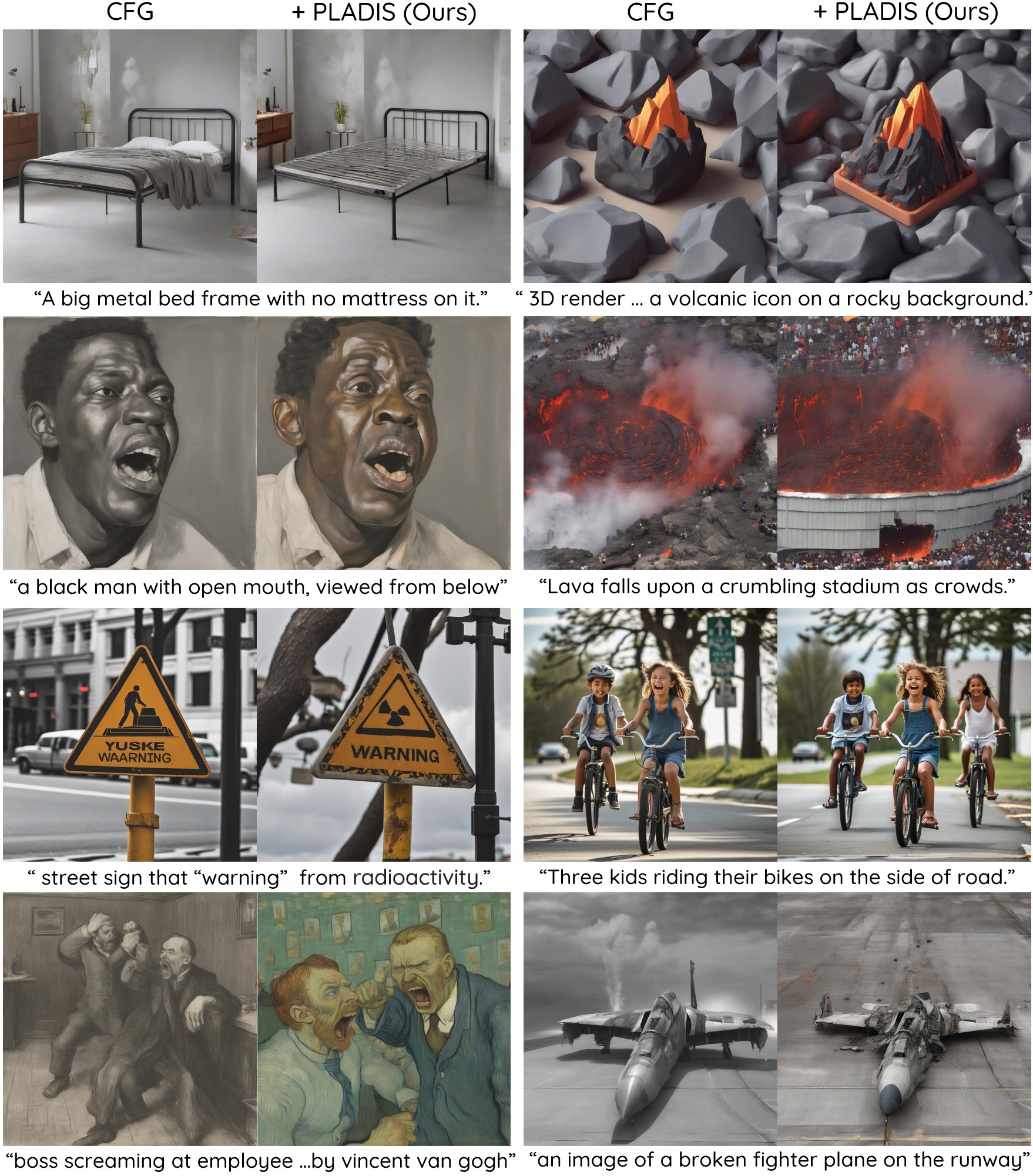}
\caption{Qualitative evaluation of the joint usage  CFG~\cite{CFG} with our method: CFG with PLADIS generates more plausible images with significantly improved text alignment based on the text prompt, without requiring additional inference.}\label{fig:supple_cfg}
\end{figure*}

\begin{figure*}[ht!]
\centering
\includegraphics[width=1\linewidth]{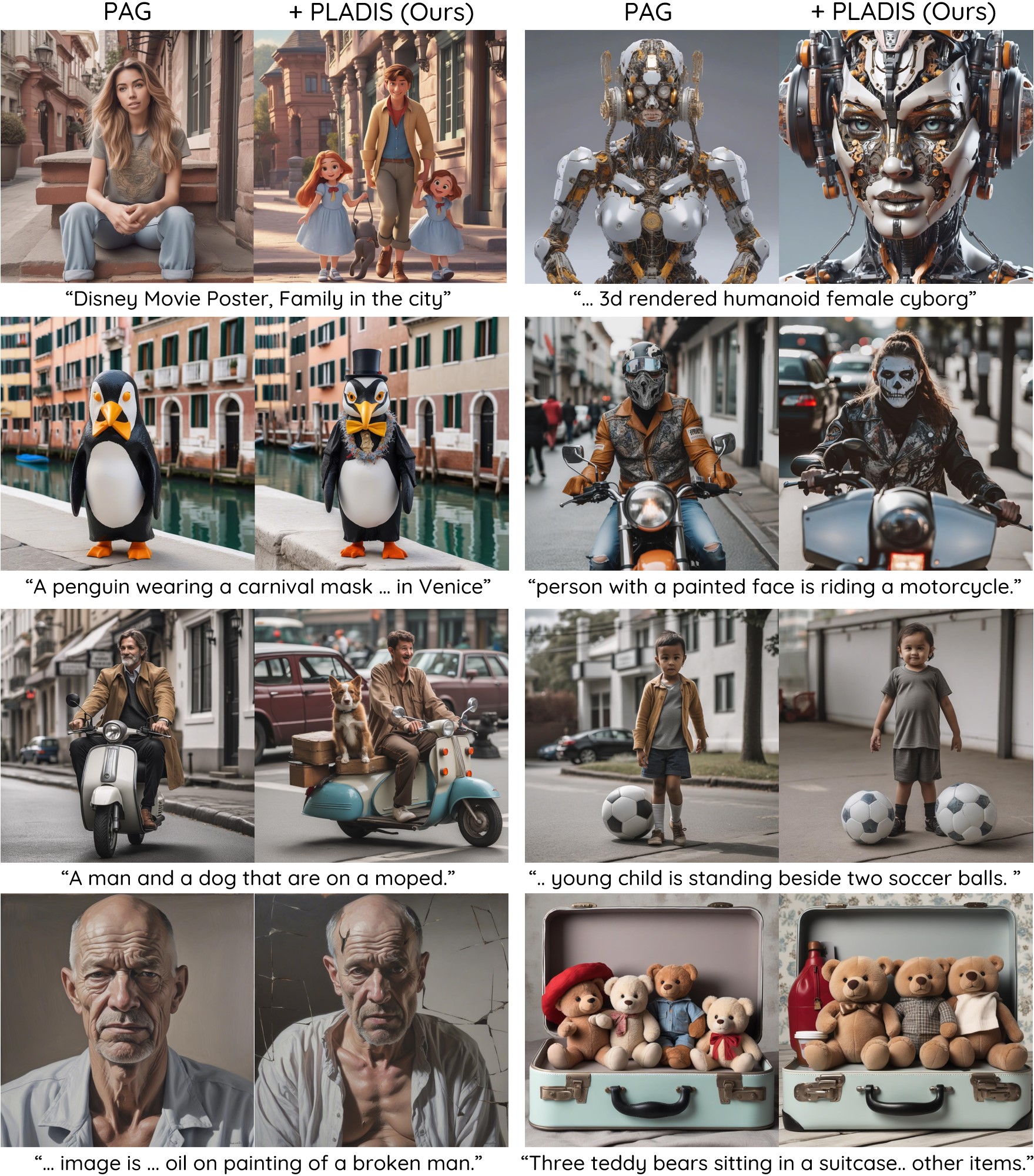}
\caption{Qualitative evaluation of the joint usage  PAG~\cite{PAG} with our method: Integrating PAG with PLADIS produces highly credible images with markedly enhanced correspondence to the text prompt, all achieved without any further inference steps.}\label{fig:supple_pag}
\end{figure*}

\begin{figure*}[ht!]
\centering
\includegraphics[width=1\linewidth]{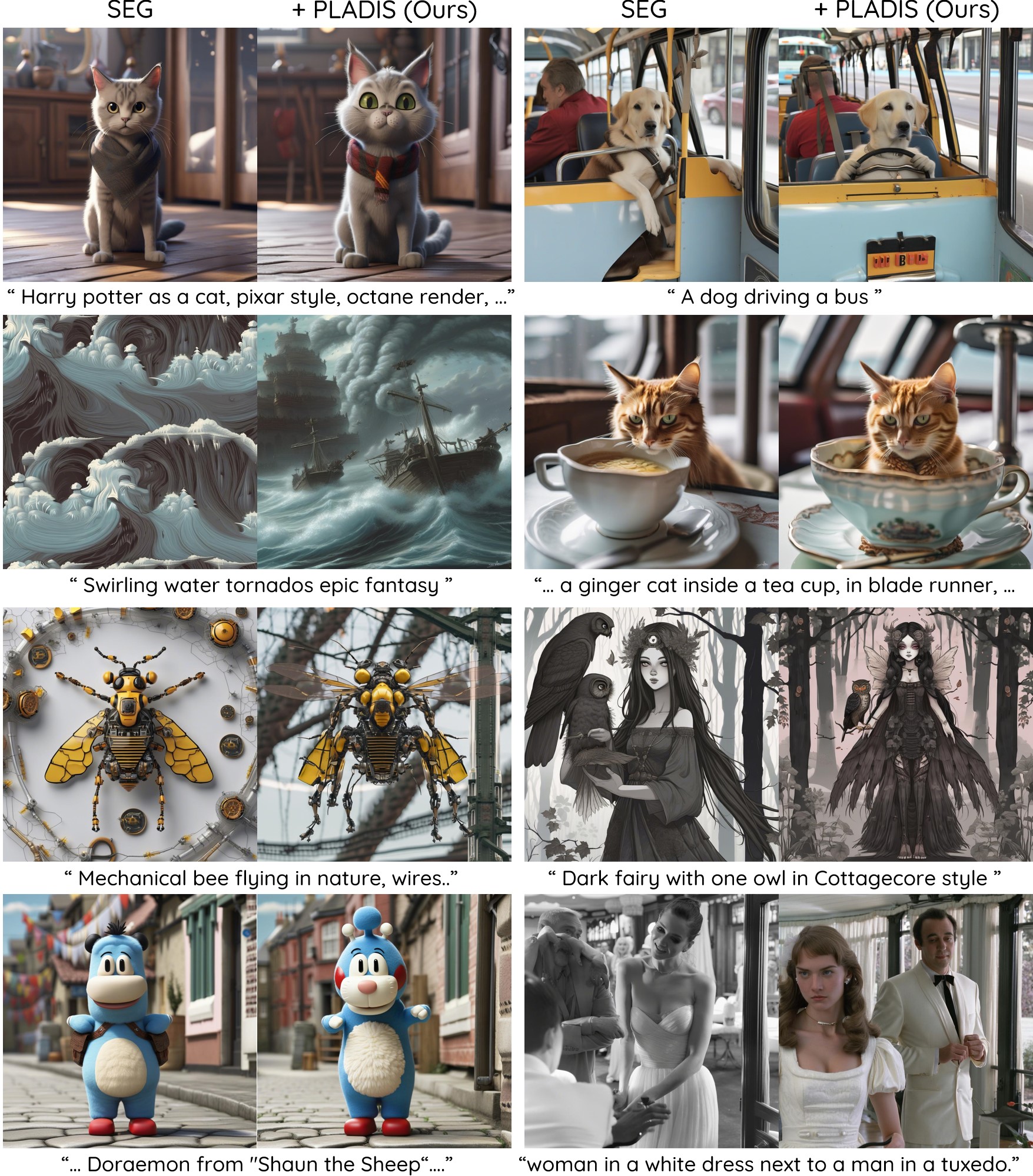}
\caption{Qualitative evaluation of the joint usage  SEG~\cite{SEG} with our method: The combination of SEG and PLADIS yields highly convincing image generations with substantially improved alignment to the given text prompt, accomplished without the need for additional inference.}\label{fig:supple_seg}
\end{figure*}

\begin{figure*}[ht!]
\centering
\includegraphics[width=0.95\linewidth]{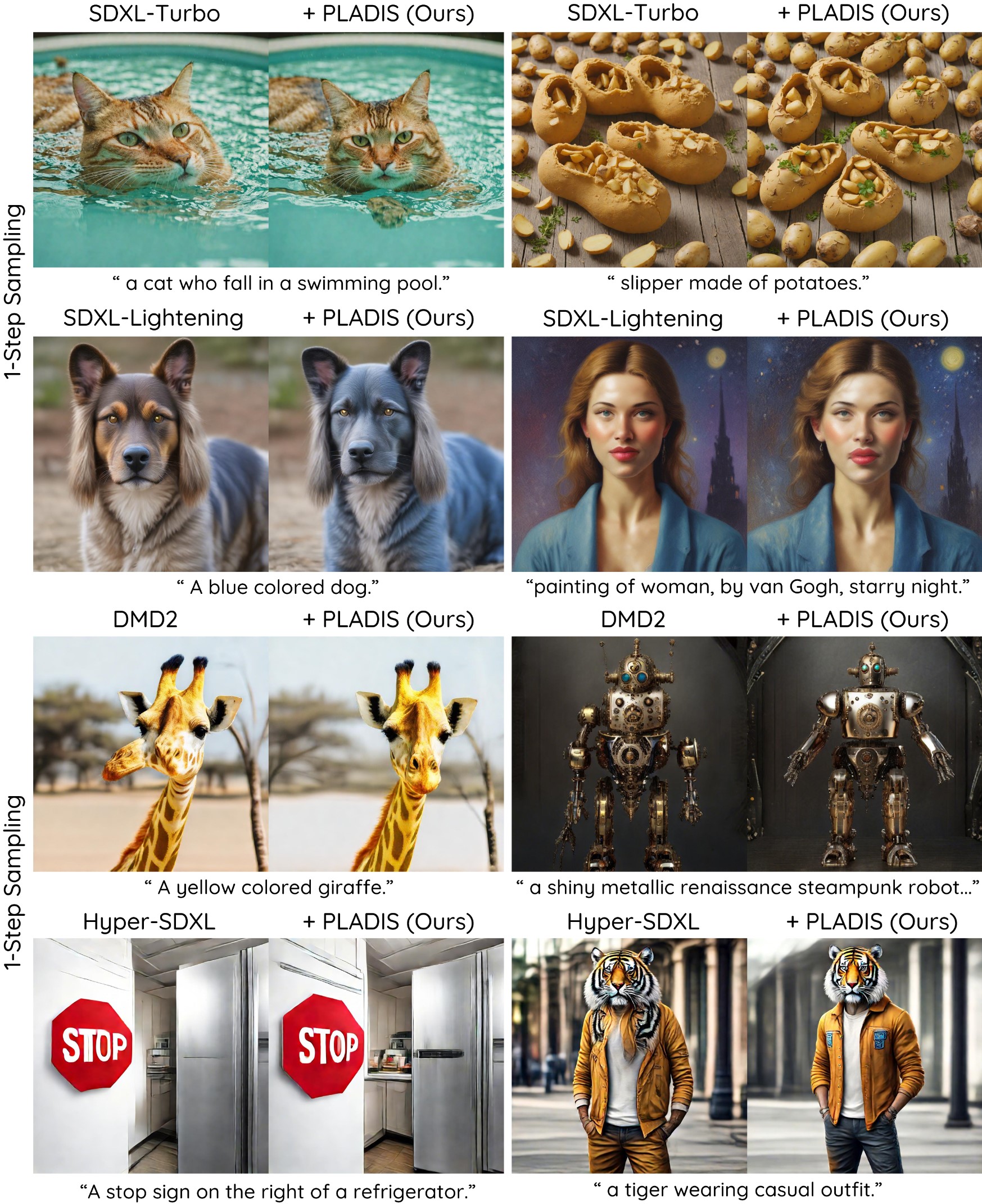}
\caption{Qualitative comparison of the guidance-distilled model with our PLADIS method for one-step sampling: Even with one-step sampling, our PLADIS enhances generation quality, improves coherence with the given text prompt, and produces visually plausible images. }\label{fig:supple_1step}
\end{figure*}

\begin{figure*}[ht!]
\centering
\includegraphics[width=0.95\linewidth]{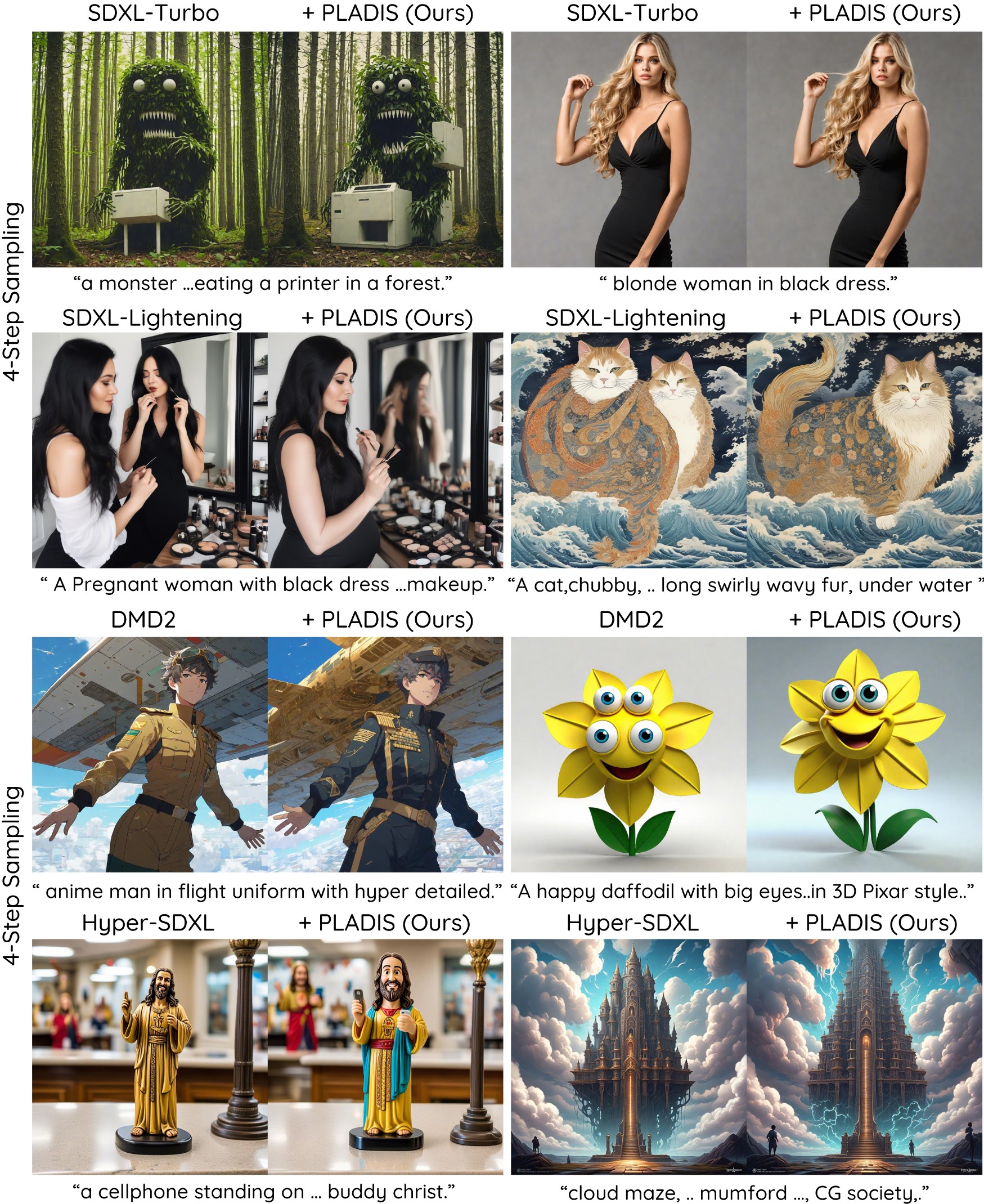}
\caption{Qualitative comparison of the guidance-distilled model using our PLADIS method for four-step sampling: In the case of the four-step sampling approach, PLADIS substantially improves generation quality, enhances alignment with the provided text prompt, and produces visually convincing images.}\label{fig:supple_4step}
\end{figure*}

\begin{figure*}[ht!]
\centering
\includegraphics[width=1\linewidth]{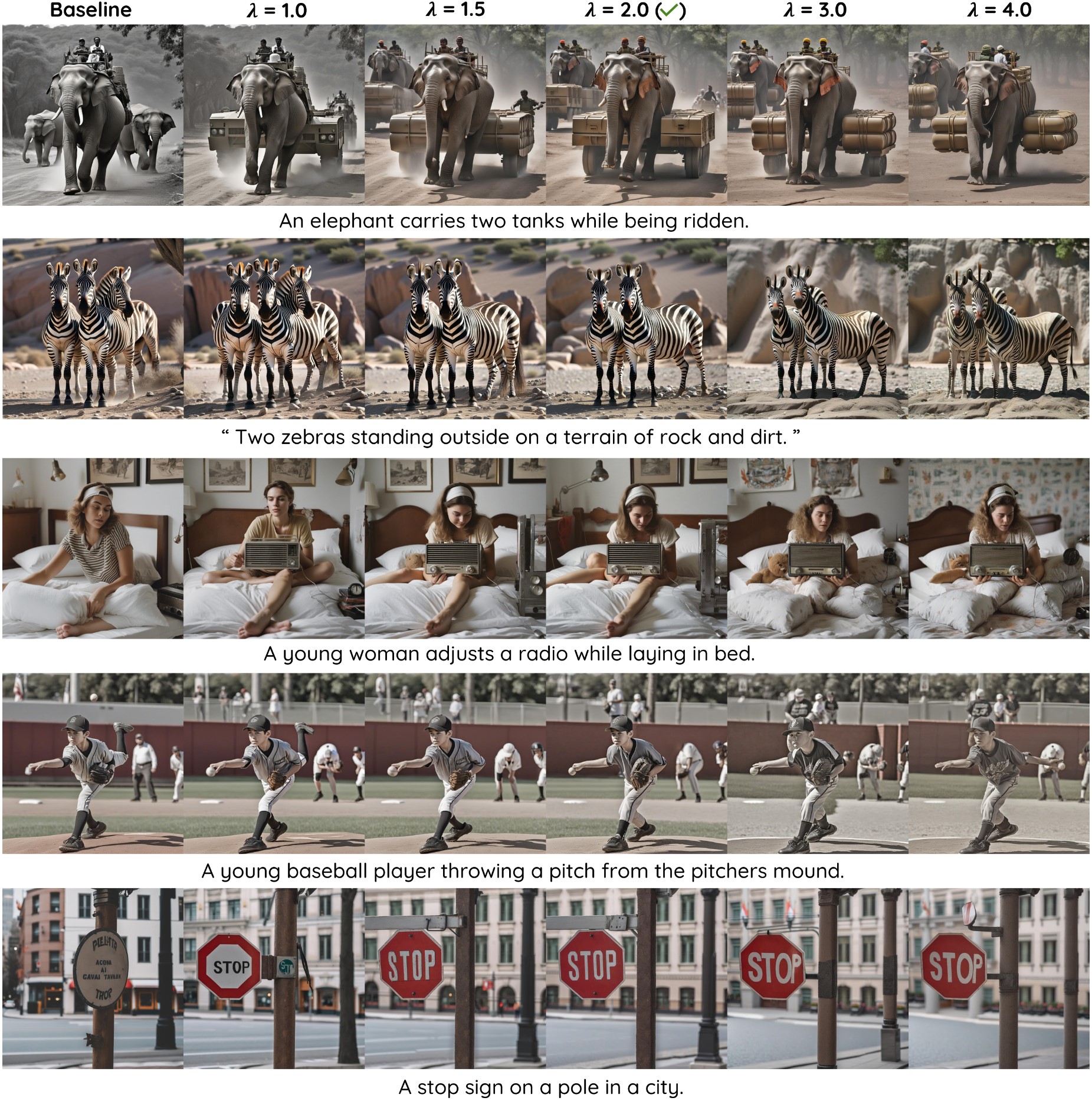}
\caption{Qualitative comparison by varying the scale $\lambda$: As $\lambda$ increases, the images display greater plausibility and improved text alignment. However, excessively high values lead to smoother textures and potential artifacts, similar to those found in CFG. The first two rows of images are generated using CFG and PAG, while the remaining rows are produced with CFG and SEG. When $\lambda$ is greater than 1, our PLADIS method is applied. In our configuration, $\lambda$ is set to 2.0.}\label{fig:supple_ablation}
\end{figure*}

\begin{figure*}[ht!]
\centering
\includegraphics[width=1\linewidth]{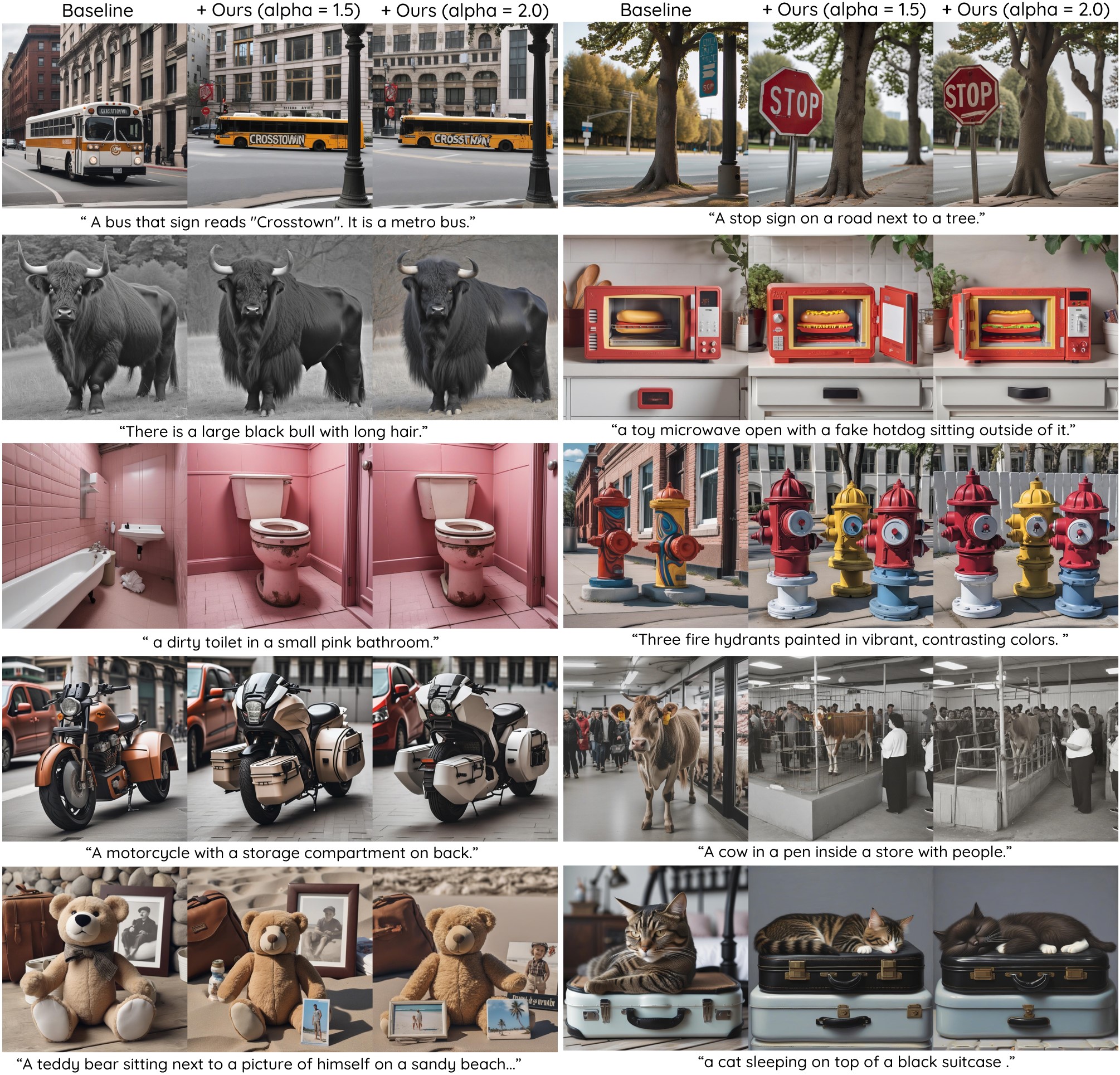}
\caption{Qualitative comparison by $\alpha$ in PLADIS: Although PLADIS with $\alpha = 2$ also sifgnificantly improves generation quality and text alignment compared to the baseline (dense cross-attention), PLADIS with $\alpha = 1.5$ offers a more robust and coherence given text prompts, leads to our base configuration as $\alpha = 1.5$.}\label{fig:supple_alpha}
\end{figure*}

\end{document}